\documentclass{article} 
\usepackage{iclr2027_conference,times}

\usepackage{amsmath,amssymb,amsthm}
\usepackage{booktabs,multirow,graphicx}
\usepackage{xcolor}
\usepackage{colortbl}
\usepackage{makecell}
\usepackage{hyperref}
\usepackage{url}
\usepackage{caption}
\usepackage{algorithm}
\usepackage{algorithmicx}
\usepackage{algpseudocode} 
\usepackage{wrapfig}

\newcommand{\ftc}{\textsc{Ftc}}
\newcommand{\rrho}{\rho}

\definecolor{best}{HTML}{DFF0D8}
\definecolor{worse}{HTML}{C0392B}
\definecolor{better}{HTML}{1E8449}

\newtheorem{proposition}{Proposition}[section]

\makeatletter\let\primitiveinput\@@input\makeatother

\title{Sequential Functional Structured Tucker Compression for Large Language Model Attentions}

\author{%
\begin{minipage}{0.96\textwidth}
\centering
\normalfont
{%
\renewcommand{\arraystretch}{1.15}%
\setlength{\tabcolsep}{10pt}%
\begin{tabular}{cccc}
\textbf{Jiangfeng Chen}$^{1,*}$ &
\textbf{Xinyu Wang}$^{2,3,*}$ &
\textbf{Tianshuo Yan}$^{4}$ &
\textbf{Hanwei Wu}$^{3,5}$
\end{tabular}\\[0.38em]
\begin{tabular}{ccc}
\textbf{Xiao-Wen Chang}$^{2}$ &
\textbf{Yang Zhang}$^{1}$ &
\textbf{Lei Ding}$^{1,\dagger}$
\end{tabular}%
}\\[0.78em]
{\small
$^{1}$University of Manitoba \quad
$^{2}$McGill University \quad
$^{3}$Simpleway\\[-0.05em]
$^{4}$The University of Hong Kong \quad
$^{5}$McMaster University
}\\[0.42em]
{\footnotesize
$^{*}$Equal contribution \qquad
$^{\dagger}$Corresponding author: \texttt{lei.ding@umanitoba.ca}
}
\end{minipage}%
}

\iclrfinalcopy
\begin{document}

\maketitle
\lhead{Preprint}

\begin{abstract}

Post-training compression of LLM attention is often formulated as
independent matrix approximation, ignoring both the shared structure
among attention projections and the representation shift introduced by
earlier compression. We propose \ftc{}, a sequential structured
compression framework that adapts the approximation to the current
compressed model while jointly exploiting the native Q/K/V head
structure under a fixed storage budget. The output projection is handled
separately to account for the changed post-attention representation.
\ftc{} requires neither fine-tuning nor gradient-based recovery.
Across seven decoder-only LLMs from 6B to 32B parameters, \ftc{} achieves
the lowest WikiText-2 perplexity among the compared methods at every
tested keep ratio on five modern GQA models, with the largest gains under
aggressive compression. The improvements transfer to downstream tasks
and remain substantial at the 32B scale.
\end{abstract}

\section{Introduction}

Large language models (LLMs) remain costly at inference time because Transformer blocks repeatedly apply large dense projection matrices, creating substantial memory and computational overhead on resource-constrained hardware. A common approach is to approximate each projection independently with low-rank factors \citep{yuan2023asvd,wang2025svdllm}. This can miss redundancy between projections: the query, key, and value projections read the same hidden input and may use overlapping feature directions. Although independent low-rank factorizations reduce the cost of each projection, they may still spend a fixed budget repeatedly representing feature directions shared across Q/K/V. A joint tensor representation can instead represent these common directions through shared factors, leaving more of the budget to capture the differences across projections and heads. Previous methods like TensorLLM and LeSTD exploit shared structure through Tucker representations \citep{gu2025tensorllm,li2026lestd}. The remaining challenge is to share useful feature structure without overly restricting the representation available to individual heads.

Sharing structure across Q/K/V is not sufficient by itself, since the features worth preserving depend on how approximation error is evaluated.Activation-aware low-rank methods use calibration activations for this purpose \citep{yuan2023asvd,wang2025svdllm}, yet the representations seen by later layers change as compression proceeds. A shared subspace fitted to the original activations may therefore become misaligned with the current compressed model. Moreover, the output projection operates on the post-attention representation, so it requires a separate approximation metric. These observations motivate a sequential framework that adapts the Q/K/V representation to the current compressed model while treating the output projection separately.

We propose FTC, a sequential \textbf{F}unctional structured \textbf{T}ucker  \textbf{C}ompression framework for attention projections. At each layer, FTC constructs an activation-induced metric from the current compressed model and uses it to fit the query, key, and value projections jointly. The factors along the model and within-head feature dimensions are shared across projections, while each head slice retains its own core coefficients. Leaving the head-slice mode uncompressed preserves head-specific coefficients and naturally supports both multi-head attention (MHA) and grouped-query attention (GQA) \citep{ainslie2023gqa}. We also balance the head slices during factor estimation to keep differences in projection magnitude from dominating the shared fit.

Sharing feature directions does not require each head to use every possible combination of them. A dense Tucker core nevertheless stores a coefficient for every pair of retained input and within-head directions. We instead allow each head slice to retain a subset of these interactions. The Tucker ranks and the total number of retained coefficients are chosen together, counting the factors, core values, and support metadata in the budget. This provides a choice between fewer shared directions with a denser core and more directions with a sparser core. We then prune the core gradually, refitting the factors and surviving coefficients between stages.

The output projection requires a different treatment because it acts on the attention representation rather than on the common input to the query, key, and value projections. After compressing these projections, we recompute the attention features. Using these updated features, we construct a calibrated target for the output projection by matching the original model's attention-branch outputs with regularization toward the pretrained weights, yielding a closed-form linear solution.. The calibrated target is then compressed under a separate activation-induced metric, and the updated block provides the inputs for compressing the next layer. 

We evaluate FTC across seven decoder-only LLMs spanning MHA and GQA architectures, with the largest gains appearing under aggressive compression. A matched per-matrix baseline helps separate the effect of the joint structured representation from that of the activation-induced metric and sequential calibration.

Our contributions are as follows:
\begin{itemize}
    \item We develop FTC, which jointly fits shared Q/K/V Tucker factors under an activation-induced metric from the current compressed model, while preserving the native MHA/GQA head structure. The output projection is calibrated separately in closed form using updated attention features and teacher branch outputs.

    \item We jointly allocate Tucker ranks and sparse-core capacity under a storage budget accounting for factors, retained coefficients, and support metadata. The resulting allocation balances subspace and sparsification errors, followed by gradual core pruning with intermediate refitting.

    \item Across seven decoder-only LLMs from 6B to 32B, FTC achieves lower WikiText-2 perplexity and higher aggregate downstream accuracy under aggressive compression on all five GQA models, without fine-tuning or gradient-based recovery. A matched per-matrix control further supports the benefit of the structured representation.
\end{itemize}

\vspace{-0.3cm}

\section{Related Work}
\label{sec:related_work}

\paragraph{Activation-aware low-rank compression.}
Recent low-rank compression methods extend beyond weight-space reconstruction by incorporating model inputs or parameter importance into the approximation objective. Fisher-weighted SVD uses parameter sensitivity
\citep{hsu2022fisherweighted}, while ASVD and SVD-LLM exploit activation
information to improve low-rank approximation
\citep{yuan2023asvd,wang2025svdllm}. Later work further considers rank
allocation, truncation-error compensation, and numerical stability
\citep{wang2025svdllmv2,li2025adasvd,parkina2025coala}. \ftc{}  differs by
constructing its activation-induced metric from the current compressed
model and using it to fit a joint head-wise Q/K/V representation rather
than independent projection matrices.

\paragraph{Shared and tensorized representations.}
Shared and tensorized compression methods seek to exploit redundancy across
projections or layers through common low-dimensional structure. Basis
Sharing uses shared low-rank bases with layer-specific coefficients
\citep{wang2025basissharing}, while TensorLLM and LeSTD model attention
projections with Tucker-based shared factors
\citep{gu2025tensorllm,li2026lestd}. Other work explores tensor-train and
sparse tensor representations for model compression
\citep{xu2023tensorgpt,solgi2025saten}, as well as joint Q/K/V
factorization \citep{wang2025qsvd}. Recent studies also indicate that overly
restrictive shared subspaces may not capture the heterogeneous structure of
modern LLM representations \citep{zagitov2026rethinking}. FTC addresses this issue by sharing Tucker factors across Q/K/V and retaining head-specific core coefficients, balancing common structure with head-specific flexibility.

\paragraph{Sequential correction and sparse allocation.}
Sequential compression and error correction have been explored in prior
work. SparseGPT performs layerwise activation-aware pruning
\citep{frantar2023sparsegpt}, while MPIFA and AA-SVD introduce reconstruction
or refinement steps to reduce accumulated compression error
\citep{zhao2025pivoting,sinha2026aasvd}. FTC instead uses the current student
activations to define the activation-induced metric for the joint Q/K/V tensor
approximation, and compresses the output projection separately using the updated attention
representation.

Pruning methods such as Wanda and SliceGPT remove individual weights or structured hidden dimensions \citep{sun2024wanda,ashkboos2024slicegpt}, while LoRAP assigns low-rank approximation and structured pruning to different Transformer sublayers \citep{li2024lorap}.  Global low-rank-plus-sparse allocation has also been studied across layers \citep{zhou2026globalrank}.  In contrast, FTC couples the Tucker factor ranks and the number of retained latent-core coefficients within each attention layer.  Core sparsity selects rank-one interactions between shared directions rather than producing zeros in the reconstructed weight matrices; practical acceleration therefore requires a kernel that exploits the latent representation.  Hardware-supported patterns such as 2:4 provide one possible deployment route \citep{mishra2021sparse}.

\section{Function-Driven Structured Compression}
We compress the attention blocks sequentially using representations generated by the current compressed model. Figure~\ref{fig:workflow} summarizes the overall framework, and the following subsections develop its activation-induced metric, structured Q/K/V approximation, and output-projection calibration.

\begin{figure}[htbp]
    \centering
    \includegraphics[width=1\textwidth]{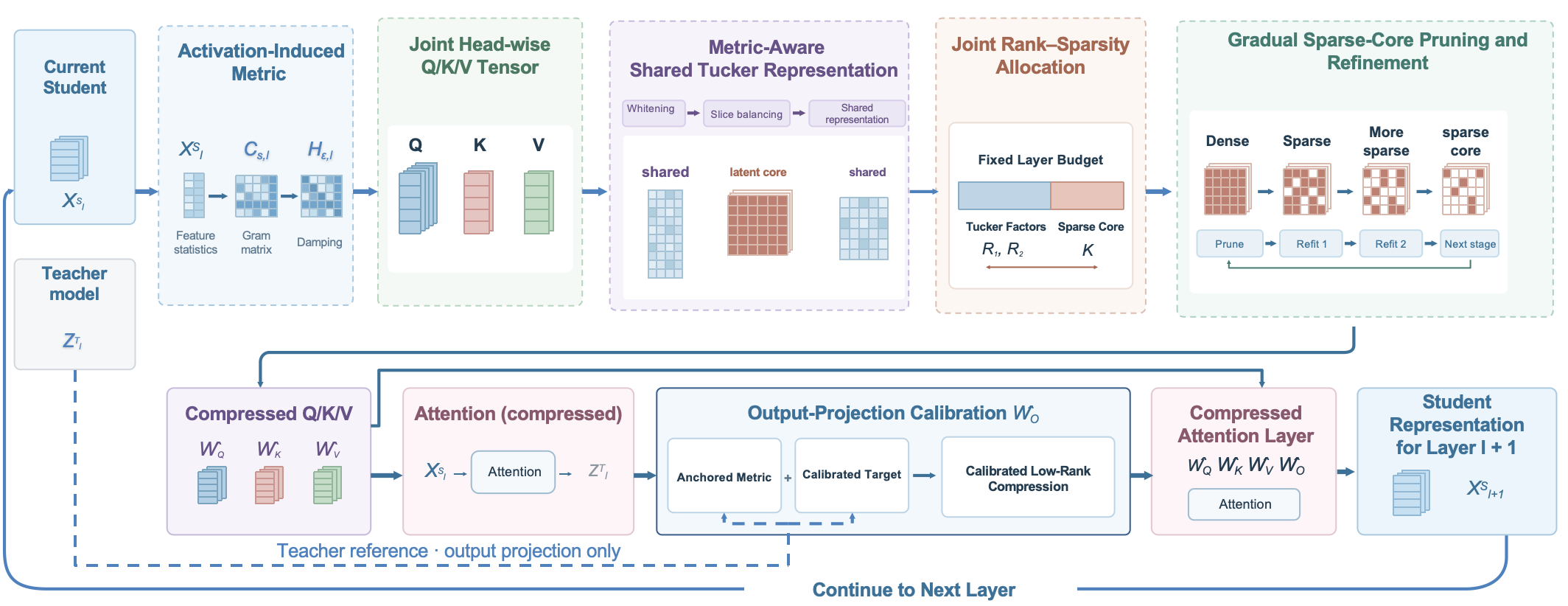}
    \caption{Overview of FTC. At each layer, FTC builds an activation-induced metric from the current student representations, jointly compresses the Q/K/V projections with a shared Tucker representation, and allocates the layer budget between Tucker ranks and sparse-core capacity. The core is then refined progressively, while \(W_O\) is calibrated and compressed separately using the updated attention representation. The resulting student representation is passed to the next layer, forming a sequential compression process.}
    \label{fig:workflow}
\end{figure}

\vspace{-0.3cm}

\subsection{Preliminaries and Problem Formulation}
\label{sec:preliminaries}

Consider the $l$-th Transformer block and let
$X_{l}^{\mathrm{S}}\in\mathbb{R}^{N\times d_{\mathrm{model}}}$ 
denote the hidden representations produced by the current compressed student model, where the superscript \(\mathrm{S}\) denotes the student stream and \(N\) is the number of tokens. Since the preceding layers have already been compressed,
$X_l^{\mathrm{S}}$ reflects the representation encountered by the
current student at layer $l$.

Let $h_{q,l}$ and $h_{kv,l}$ denote the numbers of query and key/value heads, respectively. We stack the corresponding head-wise Q/K/V projection matrices along the third mode to form the joint tensor
\begin{equation*}
	\mathcal{W}_{l} \in \mathbb{R}^{d_{\mathrm{model}}
	\times d_{\mathrm{head}}
	\times m_{l}}, \qquad m_{l}=h_{q,l}+2h_{kv,l}.
\end{equation*}
The third mode contains all query-head slices together with the native
key- and value-head slices. Hence, in GQA and MQA, shared K/V heads are
included only once rather than being replicated to match the number of
query heads. This joint representation exposes the shared feature structure across the Q/K/V projections while retaining their native head organization.

Since all Q/K/V projections operate on the same hidden representation, we compress them jointly while sharing the first two Tucker factors \citep{kolda2009tensor}:
\begin{equation*}
	\widehat{\mathcal{W}}_{l}
	= \mathcal{G}_{l} \times_1 U_{1,l} \times_2 U_{2,l},
\end{equation*}
where $U_{1,l}\in\mathbb{R}^{d_{\mathrm{model}}\times R_{1,l}}$, $U_{2,l}\in\mathbb{R}^{d_{\mathrm{head}}\times R_{2,l}}$, and 
$\mathcal{G}_{l}\in \mathbb{R}^{R_{1,l}\times R_{2,l}\times m_{l}}$ is a latent core tensor. Sharing $U_{1,l}$ and $U_{2,l}$ exploits common
feature subspaces across Q/K/V, while leaving the third mode uncompressed.   Moreover, the output projection $W_{O,l}$ is compressed separately due to its
distinct input representation; implementation details are provided in Appendix~\ref{app:implementation}.

For Q/K/V, we do not directly regress to teacher outputs. Instead, we
measure the effect of a weight perturbation on the current student
activation stream. For a projection perturbation $\Delta W_l$, 
\begin{equation*}
	\frac{1}{N} \left\| X_{l}^{\mathrm S}\Delta W_{l} \right\|_{F}^{2} = \operatorname{tr}
    \left(
        \Delta W_l^\top
        C_{s,l}
        \Delta W_l
    \right),
	\qquad C_{s,l} = \frac{1}{N}
	(X_{l}^{\mathrm{S}})^\top X_{l}^{\mathrm{S}},
\end{equation*}
where $C_{s,l}$ is the empirical second-moment matrix of the current student representations. 
We therefore define the regularized activation-induced metric
\begin{equation*}
	H_{\varepsilon,l} = C_{s,l}
	+ \varepsilon_{l} \bar{t}_{l}I, \qquad \varepsilon_{l}>0, \qquad  \bar{t}_{l} = \operatorname{tr}(C_{s,l})/ d_{\mathrm{model}},
\end{equation*}
where $\bar{t}_{l}$ is the mean diagonal value of $C_{s,l}$, which normalizes
the damping scale across layers, and $\varepsilon_{l}$ controls the
regularization strength.

For the joint Q/K/V tensor, the resulting structured approximation problem takes the form
\begin{equation}
	\min_{\widehat{\mathcal{W}}_{l}\in\mathcal{M}_{l}(B_l)}
	\left\| \widehat{\mathcal{W}}_{l}-\mathcal{W}_{l}
	\right\|_{H_{\varepsilon,l}}^{2},
	\label{eq:functional_problem}
\end{equation}
where $\mathcal{M}_{l}(B_l)$ 
is the family of such structured representations 
whose total storage cost satisfies a layerwise compression budget $B_l$, including the shared factors, retained core values, and associated index metadata. Since $H_{\varepsilon,l}$ is constructed from the current student
representations, the approximation metric is updated sequentially as compression proceeds. Equation~\eqref{eq:functional_problem} defines the activation-induced
approximation objective. We first convert this objective exactly into a
Euclidean reconstruction problem by metric whitening. We then introduce
slice balancing to prevent high-energy Q/K/V head projections from
dominating the shared Tucker factors.

\subsection{Sequential Activation-Induced Tensor Approximation}
\label{sec:sequential_tensor_approximation}

We now transform the activation-metric problem in
Eq.~\eqref{eq:functional_problem} into a standard Euclidean tensor approximation. Since $H_{\varepsilon,l}\succ 0$, let
$H_{\varepsilon,l} = L_{l}^{\top}L_{l}$ be its Cholesky factorization.
For any tensor
$\mathcal A\in \mathbb{R}^{d_{\mathrm{model}}\times d_{\mathrm{head}}\times m_{l}}$, the weighted norm satisfies
\begin{equation*}
	\|\mathcal{A}\|_{H_{\varepsilon,l}}^2
	= \|\mathcal{A}\times_1 L_{l}\|_{F}^{2}.
\end{equation*}
We therefore define the  metric-whitened tensor $\widetilde{\mathcal{W}}_{l}
	=\mathcal{W}_{l}\times_1 L_{l}.$
For the Tucker representation $\widehat{\mathcal{W}}_{l}
	= \mathcal{G}_{l}
	\times_1 U_{1,l}
	\times_2 U_{2,l},$ 
setting $\widetilde{U}_{1,l}=L_{l}U_{1,l}$ gives $\widehat{\mathcal{W}}_{l}\times_1 L_{l}
	= \mathcal{G}_{l}
	\times_1\widetilde{U}_{1,l}
	\times_2 U_{2,l}.$

Hence, Eq.~\eqref{eq:functional_problem} is equivalently written as
\begin{equation*}
	\min \left\|
	\widetilde{\mathcal{W}}_{l} - \mathcal{G}_{l}
	\times_1\widetilde{U}_{1,l}
	\times_2 U_{2,l}
	\right\|_{F}^{2},
\end{equation*}
with the layerwise storage budget $B_l$. After compression, the mode-1 factor is mapped back by
$U_{1,l}=L_l^{-1}\widetilde U_{1,l}$. Thus, metric whitening exactly converts the activation-induced objective into a Euclidean tensor approximation problem.
The following subsection introduces a separate slice-level reweighting
to balance the contributions of Q/K/V head projections with different
magnitudes.

\subsection{Shared Tucker Representation with Slice Balancing}
\label{sec:shared_tensor_representation}

After metric whitening, the activation-induced objective becomes a
Euclidean reconstruction problem. We further balance the Q/K/V head
slices when estimating the shared Tucker factors, so that high-energy
slices do not dominate the shared subspaces. Specifically, let
\begin{equation*}
	\widetilde{W}_{l,j}:= \widetilde{\mathcal{W}}_l(:,:,j),
	\qquad
	s_{j,l} := \|\widetilde{W}_{l,j}\|_{F},
	\qquad
	w_{j,l} := s_{j,l}^{-2},
\end{equation*}
for $j=1,\ldots,m_{l}$. We therefore consider 
\begin{equation}
	\min_{\mathcal{G}_{l},\widetilde{U}_{1,l},U_{2,l}}
	\sum_{j=1}^{m_l} w_{j,l}
	\left\|\widetilde{W}_{l,j}-\widetilde{U}_{1,l}\mathcal{G}_l(:,:,j)U_{2,l}^{\top}
	\right\|_{F}^{2}, \quad \text{s.t.}  \quad 
    \widetilde{U}_{1,l}^{\top}\widetilde{U}_{1,l}=I,
	\qquad U_{2,l}^{\top}U_{2,l}=I.
    \label{eq:slice_normalized_tucker}
\end{equation}
Since $w_{j,l}=\|\widetilde{W}_{l,j}\|_{F}^{-2}$, each term measures the
reconstruction error relative to the energy of the corresponding slice.
Thus, each Q/K/V slice contributes according
to its relative approximation error rather than its absolute magnitude.

To characterize the shared model-dimension subspace, we temporarily omit the mode-$2$ compression and consider the reduced problem
\begin{equation}
	\min_{\substack{
			B_{1,l},\ldots,B_{m_l,l}}}
	\sum_{j=1}^{m_l} w_{j,l}
	\left\| \widetilde{W}_{l,j}	-\widetilde{U}_{l} B_{j,l}
	\right\|_{F}^{2}, \quad \text{s.t.} \quad \widetilde{U}_{l}^{\top}\widetilde{U}_{l}=I.
 \end{equation}
For fixed \(\widetilde U_l\), the optimal coefficient matrices are $B_{j,l}^{\star}= \widetilde{U}_l^{\top}\widetilde{W}_{l,j}.$
Consequently, the optimal rank-$R_{1,l}$ shared column subspace is spanned by the leading left singular vectors of
\begin{equation*}
	\widetilde{W}_{\mathrm{cat},l}
	= \begin{bmatrix}
		\sqrt{w_{1,l}}\widetilde{W}_{l,1}&\cdots&
		\sqrt{w_{m_l,l}}\widetilde{W}_{l,m_l}
	\end{bmatrix}.
\end{equation*}
Hence, the shared mode-$1$ factor captures the dominant directions jointly
supported by the normalized Q/K/V head projections. A formal proof is given
in Appendix~\ref{app:weighted_shared_subspace}.

We extend this shared-subspace principle to the first two modes through the Tucker representation
\begin{equation*}
	\widehat{\widetilde{\mathcal{W}}}_{l}
	= \mathcal{G}_{l}
	\times_1\widetilde{U}_{1,l}
	\times_2 U_{2,l},
\end{equation*}
while leaving the third mode uncompressed. Thus, the first two factors are
shared across Q/K/V head slices, whereas each core slice retains its own
coefficients. The next subsection jointly allocates the layerwise
storage budget between the Tucker ranks and the retained sparse-core
coefficients.

\subsection{Budget-Coupled Rank--Sparsity Allocation}
\label{sec:budget_rank_sparsity}

We allocate the layerwise storage budget among the shared Tucker factors,
the retained latent-core coefficients, and their support metadata. For a candidate rank pair
$\mathbf R_l=(R_{1,l},R_{2,l})$, define 
$$P_{\mathrm{fac}}(\mathbf{R}_{l})
	= d_{\mathrm{model}}R_{1,l}+ d_{\mathrm{head}}R_{2,l}, \qquad 
    N_{l}(\mathbf{R}_{l})
    = R_{1,l}R_{2,l}m_{l}.$$  
The sparse-core support is stored as a bit-packed binary mask
$\mathcal M_l\in\{0,1\}^{R_{1,l}\times R_{2,l}\times m_l}$ with
$\|\mathcal M_l\|_0=K_l$, requiring one bit per latent-core location. Only the $K_l$ surviving core coefficients are stored, in packed form.
If each stored value uses $b_v$ bits, the total storage cost is
\begin{equation*}
    S^{\mathrm{bit}}_{l}(\mathbf R_l,K_l)
    = b_v\left[P_{\mathrm{fac}}(\mathbf R_l)+K_l \right]
    + N_l(\mathbf{R}_l).
\end{equation*}
Let $B_l=B_{l}^{\mathrm{bit}}/b_{v}$ denote the layer budget in
$b_{v}$-bit scalar equivalents. The largest admissible core cardinality is
\begin{equation*}
    K_l(\mathbf{R}_{l})
    = \min\left\{  N_l(\mathbf{R}_l),
    \left\lfloor B_{l}-
    P_{\mathrm{fac}}(\mathbf{R}_{l})
    - \frac{N_l(\mathbf{R}_{l})}{b_v}\right\rfloor
    \right\},  \qquad 
    P_{\mathrm{fac}}(\mathbf{R}_{l})
    +\frac{N_l(\mathbf{R}_{l})}{b_v}
    \leq B_{l}.
\end{equation*}


To evaluate candidate rank pairs, we absorb the slice-balancing weights
into the whitened tensor,
\begin{equation*}
	\overline{\mathcal{W}}_l(:,:,j)
	= \sqrt{w_{j,l}}\,
	\widetilde{\mathcal{W}}_l(:,:,j),
	\qquad
	j=1,\ldots,m_{l}.
\end{equation*}

For fixed orthonormal Tucker factors, the projection core is  $\overline{\mathcal{G}}_{l}^{\star}=
	\overline{\mathcal{W}}_{l}
	\times_1 \widetilde{U}_{1,l}^{\top}
	\times_2 U_{2,l}^{\top},$
with corresponding subspace approximation error 
\begin{equation*}
	E_{\mathrm{sub}}(\mathbf{R}_{l})
	= \left\| \overline{\mathcal{W}}_{l}
	- \overline{\mathcal{G}}_{l}^{\star}
	\times_1 \widetilde U_{1,l}
	\times_2 U_{2,l} \right\|_{F}^{2}.
\end{equation*}

Given a core cardinality $K_l$, let
$\mathcal P_{K_l}(\overline{\mathcal G}_l^\star)$ retain the $K_l$ largest-magnitude coefficients of the projection core. The corresponding one-step sparsification error is
\begin{equation*}
	D_{\mathrm{sp}}(\mathbf{R}_l,K_l)
	=\left\|\overline{\mathcal{G}}_l^{\star}
	-\mathcal{P}_{K_l}
	\left(\overline{\mathcal{G}}_l^{\star}\right)
	\right\|_{F}^{2}.
\end{equation*}

For fixed orthonormal Tucker factors and their projection core, the subspace residual is orthogonal to the represented Tucker subspace. Consequently, the error further decomposes
exactly as
\begin{equation}
	E_{\mathrm{struct}}(\mathbf{R}_l,K_l)
	= E_{\mathrm{sub}}(\mathbf{R}_l)
	+ D_{\mathrm{sp}}(\mathbf{R}_l,K_l).
	\label{eq:rank_sparsity_decomposition}
\end{equation}
This decomposition exposes the budget trade-off: increasing the Tucker ranks
 reduces the subspace approximation error, but consumes more storage
in the shared factors and support mask, potentially leaving fewer coefficients
for the core. We therefore optimize the Tucker ranks and core cardinality
jointly under the layerwise storage budget. The resulting problem admits the
following reduction:


\begin{proposition}[Budget-coupled rank--sparsity allocation]
\label{prop:budget_allocation}
For each feasible rank pair $\mathbf R_l\in\mathcal R_l$, let
$K_l(\mathbf{R}_{l})$ be the largest core cardinality allowed by the storage
budget. Since $D_{\mathrm{sp}}(\mathbf{R}_{l},K_{l})$ is non-increasing in $K_l$,
the joint surrogate problem
\[
\min_{\mathbf{R}_{l},K_{l}} \;
E_{\mathrm{sub}}(\mathbf{R}_l) +
\beta D_{\mathrm{sp}}(\mathbf{R}_{l},K_{l}), \qquad \beta\in(0,1],
\]
subject to the layerwise storage budget, reduces to
\[
\mathbf{R}_{l}^\star =
\arg\min_{\mathbf{R}_{l}\in\mathcal{R}_{l}} E_{\mathrm{sub}}(\mathbf{R}_{l}) + \beta D_{\mathrm{sp}}
\left(\mathbf{R}_{l},K_l(\mathbf{R}_{l})\right),
\qquad K_{l}^\star=K_{l}(\mathbf{R}_{l}^\star).
\]

\end{proposition}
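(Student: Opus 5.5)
The plan is to split the joint minimization into an inner minimization over $K_l$ with $\mathbf R_l$ fixed, followed by an outer minimization over $\mathbf R_l\in\mathcal R_l$. The admissible set of $(\mathbf R_l,K_l)$ pairs is a union of fibers, so this nesting is legitimate. Concretely,
\[
\min_{\mathbf R_l,K_l}F(\mathbf R_l,K_l)=\min_{\mathbf R_l\in\mathcal R_l}\ \min_{K_l\in\mathcal K_l(\mathbf R_l)}F(\mathbf R_l,K_l),
\qquad F=E_{\mathrm{sub}}+\beta D_{\mathrm{sp}}.
\]
Here $\mathcal R_l$ is the set of rank pairs with $P_{\mathrm{fac}}(\mathbf R_l)+N_l(\mathbf R_l)/b_v\le B_l$. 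The fiber $\mathcal K_l(\mathbf R_l)$ consists of the integers $0\le K_l\le N_l(\mathbf R_l)$ satisfying $b_v[P_{\mathrm{fac}}+K_l]+N_l\le B_l b_v$.

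The first step is to identify each fiber explicitly. Dividing the storage constraint by $b_v$ gives $K_l\le B_l-P_{\mathrm{fac}}(\mathbf R_l)-N_l(\mathbf R_l)/b_v$. Since $K_l$ is an integer that cannot exceed the number of core locations, the fiber is exactly $\{0,1,\dots,K_l(\mathbf R_l)\}$, with $K_l(\mathbf R_l)$ as defined before the proposition. The feasibility condition defining $\mathcal R_l$ is exactly what makes this set nonempty, since it guarantees the floor term is nonnegative.

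The second step is to show that $D_{\mathrm{sp}}(\mathbf R_l,\cdot)$ is non-increasing on the fiber. Because $\overline{\mathcal G}_l^\star$ is fixed once $\mathbf R_l$ and the factors are fixed, $D_{\mathrm{sp}}(\mathbf R_l,K)$ is the sum of the $N_l-K$ smallest squared magnitudes of its entries. Passing from $K$ to $K+1$ removes one nonnegative term from this sum, so the error cannot increase. This fact is asserted in the statement, but it follows immediately from the argument above.

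The third step combines these facts. $E_{\mathrm{sub}}(\mathbf R_l)$ does not depend on $K_l$, and $\beta>0$, so $F(\mathbf R_l,\cdot)$ is non-increasing on the fiber. Its minimum is therefore attained at the largest admissible value $K_l(\mathbf R_l)$. If ties occur, $K_l(\mathbf R_l)$ is still a minimizer. Substituting this into the outer problem yields the reduced problem over $\mathcal R_l$. Since $\mathcal R_l$ is finite (the ranks are bounded by $d_{\mathrm{model}}$ and $d_{\mathrm{head}}$) and nonempty by assumption, the minimizer $\mathbf R_l^\star$ exists. Setting $K_l^\star=K_l(\mathbf R_l^\star)$ then gives a joint minimizer, because for any feasible pair we have $F(\mathbf R,K)\ge F(\mathbf R,K_l(\mathbf R))\ge F(\mathbf R_l^\star,K_l^\star)$.

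The main subtlety is not analytic but one of bookkeeping. The factors $\widetilde U_{1,l},U_{2,l}$, and hence $E_{\mathrm{sub}}$ and $\overline{\mathcal G}_l^\star$, must be treated as determined by $\mathbf R_l$ alone, for instance by the truncated HOSVD of $\overline{\mathcal W}_l$. Only then is $F$ a well-defined function of $(\mathbf R_l,K_l)$. I will state this convention explicitly at the start. I will also note that the value $\beta\le1$ plays no role in the reduction beyond $\beta>0$.
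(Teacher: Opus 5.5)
Your proposal is correct and takes essentially the same route as the paper. The paper's supporting derivations fix the factors for each candidate $\mathbf R_l$ by truncating a common HOSVD, and express $D_{\mathrm{sp}}(\mathbf R_l,K_l)$ as the discarded tail energy of the projection core. That quantity is non-increasing in $K_l$, so each candidate is scored at $K_l(\mathbf R_l)$. You spell out the inner/outer split of the minimization, the admissible set $\{0,\dots,K_l(\mathbf R_l)\}$, and the existence of a joint minimizer, all of which the paper leaves implicit.
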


\vspace{-0.1cm}

Here, $\beta$ discounts the one-step sparsification cost to account for the
subsequent prune--refit refinement. After allocation, the selected core is
gradually pruned to $K_l^\star$ with support-constrained refitting; details
are given in Appendix~\ref{app:rank_sparsity}.

\subsection{Separate Compression of the Output Projection}
\label{sec:output_projection}

The output projection is treated separately since it operates on the
post-attention representation rather than on the block input
$X_l^{\mathrm S}$ shared by Q/K/V. After compressing Q/K/V, we recompute the
student attention representation $Z_l^{\mathrm S}$ and use it to construct a
dedicated activation-induced metric for $W_O$.

When output-projection calibration is enabled, we additionally use the
corresponding teacher representation $Z_l^{\mathrm T}$ to define a target
adapted to the compressed attention features. Specifically, we solve
\[
\min_{W} \frac{1}{N}
\left\| Z_{l}^{\mathrm{S}}W-Z_{l}^{\mathrm{T}}W_{O,l} \right\|_{F}^{2}
+ \mu_{o} \left\| W-W_{O,l} \right\|_{F}^{2},
\]
which balances recovery of the teacher attention-branch output against
deviation from the pretrained output projection. This quadratic problem has
a closed-form solution, denoted by $W_{O,l}^{\mathrm{cal}}$. When calibration
is disabled, we simply use the original $W_{O,l}$ as the compression target.

After replacing $W_{O,l}$ with its compressed counterpart, the updated block
produces $X_{l+1}^{\mathrm S}$, which is used to construct the metric for the
next layer. The procedure therefore continues sequentially through the model.
Full calibration and implementation details are provided in
Appendix~\ref{app:implementation}.

\section{Experiments}
\label{sec:experiments}
We evaluate our \ftc{} across decoder-only LLMs of different scales and attention
architectures. We report perplexity and downstream performance, followed by
ablations and analyses of the main components. Below we first introduce the models, baselines and protocol of our experiments:

\textbf{Models. } 
We evaluate on two MHA models (GPT-J-6B, Llama-2-13B) and five modern GQA models (Mistral-7B-v0.3, Llama-3-8B, Qwen2.5-7B, Qwen3-8B-Base with QK-norm, and Qwen3-32B). Among them, Qwen3-32B is further used to test scale and architectural generalization.

\textbf{Baselines.}
We compare \ftc{} with five published baselines: \textbf{enh-LeSTD}, our GQA-capable port of LeSTD using the same head-concat tensorization \citep{li2026lestd};
\textbf{TLLM-gen}, a TensorLLM-style data-free dense Tucker baseline \citep{gu2025tensorllm};
\textbf{MPIFA} \citep{zhao2025pivoting}; \textbf{SVD-LLM} \citep{wang2025svdllm}; and \textbf{Basis Sharing} \citep{wang2025basissharing}.
\textbf{Seq-SVD} is our per-matrix control under the same functional metric and sequential calibration as \ftc{}, rather than a published baseline. 

\textbf{Protocol.} \ftc{} uses 256 WikiText-2 calibration sequences of 2{,}048 tokens
(seed 42), while data-dependent baselines follow their published
128-sequence configurations; Table~\ref{tab:calibsize} quantifies this
difference. All methods use the same evaluation windows and
parameter-count accounting. Following LeSTD, $\rrho$ denotes the  retained storage ratio; smaller $\rrho$ means stronger
compression. The detailed  storage accounting is provided in
Appendix~\ref{app:accounting}.

All main-table \ftc{} results use one fixed recipe across models and ratios:
the budget-allocation parameter $\beta=0.5$, the teacher-Gram anchoring
parameter $a=0.5$, no QKV teacher map, a closed-form teacher map for
$W_O$ with regularization parameter $\mu=1$, the gradual core-pruning rate $\alpha=0.02$, two refits, 256 calibration windows, and
slice-wise normalization. No fine-tuning or gradient-based recovery is used.

\subsection{Perplexity across three model generations}
\label{sec:main-results}

We first evaluate model-level degradation using WikiText-2 perplexity across
all seven models and compression ratios. Table~\ref{tab:main} reports two
representative operating points, while Figure~\ref{fig:sweep} shows the full
compression sweep.

\begin{table}[h]
\centering
\caption{WikiText-2 perplexity at $\rrho\in\{0.6,0.2\}$ (dense ppl in parentheses; best per row in bold). All methods use the same realised compression ratio. Here, ``enh-L'', ``TLLM'', and ``SVD'' abbreviate enh-LeSTD,
TLLM-gen, and SVD-LLM, respectively.}
\label{tab:main}
\setlength{\tabcolsep}{2.2pt}
\renewcommand{\arraystretch}{0.90}
\scriptsize
\begin{tabular}{@{}lcccccccccccccc@{}}
\toprule
& \multicolumn{7}{c}{$\rrho=0.6$}
& \multicolumn{7}{c}{$\rrho=0.2$} \\
\cmidrule(lr){2-8}
\cmidrule(l){9-15}
Model
& \ftc{} & Seq-SVD & enh-L & TLLM & MPIFA & SVD & Basis
& \ftc{} & Seq-SVD & enh-L & TLLM & MPIFA & SVD & Basis \\
\midrule
GPT-J-6B (8.86)
& \cellcolor{best}\textbf{9.30} & 9.47 & 9.93 & 9.68 & 9.46 & 10.26 & 10.16
& \cellcolor{best}\textbf{13.60} & 14.84 & 17.47 & 19.22 & 17.24 & 25.04 & 23.02 \\

Llama-2-13B (4.89)
& \cellcolor{best}\textbf{5.11} & 5.27 & 5.13 & 5.52 & 5.26 & 5.68 & 5.57
& \cellcolor{best}\textbf{7.52} & 9.00 & 9.79 & 13.01 & 10.21 & 14.20 & 12.77 \\

Mistral-7B (5.32)
& \cellcolor{best}\textbf{5.44} & 5.69 & 5.58 & 6.07 & 5.63 & 5.92 & 5.75
& \cellcolor{best}\textbf{7.33} & 12.10 & 146.31 & 18.41 & 15.92 & 25.40 & 16.18 \\

Llama-3-8B (6.14)
& \cellcolor{best}\textbf{6.64} & 7.03 & 6.82 & 8.70 & 7.26 & 8.02 & 7.60
& \cellcolor{best}\textbf{10.44} & 19.46 & 20.84 & 29.85 & 27.02 & 48.09 & 35.60 \\

Qwen2.5-7B (6.85)
& \cellcolor{best}\textbf{7.20} & 7.98 & 7.76 & 34.10 & 7.84 & 8.56 & 7.82
& \cellcolor{best}\textbf{12.55} & 150.93 & 106.59 & 188.22 & 7791 & 255.23 & 140.65 \\

Qwen3-8B (7.00)
& \cellcolor{best}\textbf{7.36} & 7.65 & 8.20 & 9.12 & 8.17 & 9.24 & 9.18
& \cellcolor{best}\textbf{10.70} & 23.66 & 27.08 & 45.45 & 22.23 & 75.77 & 46.68 \\

Qwen3-32B (7.61)
& \cellcolor{best}\textbf{7.77} & 7.98 & 7.84 & 8.56 & 8.40 & 9.23 & 8.70
& \cellcolor{best}\textbf{9.22} & 22.97 & 15.23 & 29.71 & 26.75 & 66.29 & 29.63 \\ 
\bottomrule
\end{tabular}
\end{table}

\begin{figure}[h]
\centering
\includegraphics[width=\linewidth]{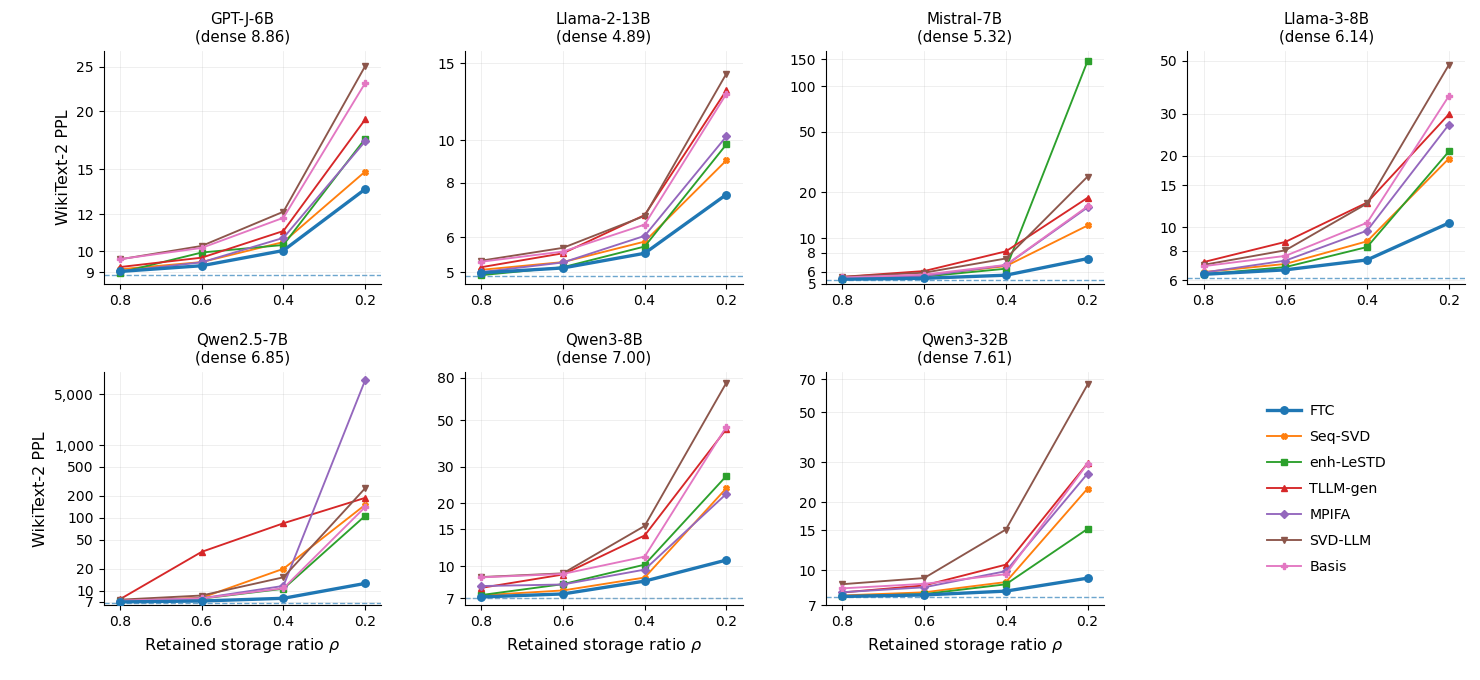}
\caption{WikiText-2 perplexity versus retained attention-projection
storage ratio across seven models. Dashed lines denote the corresponding
dense-model perplexity.}
\label{fig:sweep}
\end{figure}

The results reveal three consistent trends across model families and
compression levels.

\textbf{(1) Aggressive compression magnifies the performance gap.} The gap widens sharply as compression becomes more
aggressive. At $\rrho=0.2$, with WikiText-2 calibration, \ftc{} achieves the best
perplexity on all five modern GQA models and remains within
$1.21$--$1.83\times$ dense WikiText-2 perplexity.
In contrast, SVD-LLM reaches $4.8$--$37.3\times$ dense perplexity. At $\rrho=0.2$, \ftc{} reduces perplexity by $39$--$88\%$ relative to the
best competing baseline on each GQA model.

\textbf{(2) Joint tensorization provides substantial gains beyond the metric
and sequential calibration.}  Seq-SVD serves as a controlled per-matrix counterpart to
\ftc{}, sharing the same activation-induced metric and sequential calibration while factorizing each projection independently. At $\rrho=0.2$, \ftc{} achieves
$39$--$92\%$ lower perplexity across the five GQA models, highlighting the
substantial contribution of joint tensorization beyond the activation-induced metric and calibration alone.

\textbf{(3) The advantage persists at 32B scale.} 
On Qwen3-32B, FTC reaches $9.22$ perplexity at $\rrho=0.2$, or
$1.21\times$ the dense perplexity, compared with $15.23$ for the
strongest competing baseline ($39.5\%$ lower). On Qwen3-8B, the
corresponding reduction is $51.9\%$ ($10.70$ vs.\ $22.23$). Thus,
FTC retains the lowest perplexity at both scales, although the relative
margin is smaller on the 32B model.

\subsection{Downstream tasks}
\label{sec:downstream}

We next examine whether the perplexity improvements translate to downstream
performance. Table~\ref{tab:downstream} reports avg6 and 5-shot GSM8K accuracy
at representative storage ratios, while Figure~\ref{fig:downstream} shows
performance retention across the full compression sweep.

\begin{table}[htbp]
\centering\small
\caption{Downstream performance on modern GQA models at $\rrho\in\{0.6,0.4,0.2\}$. avg6 is the mean normalized accuracy over six
multiple-choice tasks; GSM8K is 5-shot exact match. Dense results are shown in the row labels; full per-task results are in Appendix~\ref{app:full-tasks}.}
\label{tab:downstream}
\setlength{\tabcolsep}{2pt}
\resizebox{\linewidth}{!}{%
\begin{tabular}{llcc|cccccccccccc}
\toprule
& & \multicolumn{2}{c|}{\ftc{}} & \multicolumn{2}{c}{Seq-SVD} & \multicolumn{2}{c}{enh-LeSTD}
& \multicolumn{2}{c}{TLLM-gen} & \multicolumn{2}{c}{MPIFA}
& \multicolumn{2}{c}{SVD-LLM} & \multicolumn{2}{c}{Basis} \\
Model & $\rrho$ & a6 & G8K & a6 & G8K & a6 & G8K & a6 & G8K & a6 & G8K & a6 & G8K & a6 & G8K \\
\midrule
\multirow{3}{*}{\shortstack[l]{Mistral-7B\\(0.740 / 0.285)}} & 0.6 & \cellcolor{best}\textbf{0.726} & \cellcolor{best}\textbf{0.24} & 0.684 & 0.17 & 0.707 & 0.15 & 0.654 & 0.10 & 0.695 & 0.20 & 0.667 & 0.17 & 0.688 & 0.20 \\
 & 0.4 & \cellcolor{best}\textbf{0.689} & \cellcolor{best}\textbf{0.10} & 0.569 & 0.01 & 0.625 & 0.01 & 0.501 & 0.01 & 0.552 & 0.02 & 0.505 & 0.02 & 0.549 & 0.03 \\
 & 0.2 & \cellcolor{best}\textbf{0.512} & 0.01 & 0.359 & 0.00 & 0.320 & 0.00 & 0.342 & 0.01 & 0.314 & 0.01 & 0.307 & 0.01 & 0.294 & 0.00 \\
\midrule
\multirow{3}{*}{\shortstack[l]{Llama-3-8B\\(0.735 / 0.435)}} & 0.6 & \cellcolor{best}\textbf{0.716} & \cellcolor{best}\textbf{0.29} & 0.642 & 0.09 & 0.694 & 0.20 & 0.597 & 0.04 & 0.639 & 0.01 & 0.575 & 0.02 & 0.615 & 0.04 \\
 & 0.4 & \cellcolor{best}\textbf{0.668} & \cellcolor{best}\textbf{0.10} & 0.510 & 0.04 & 0.601 & 0.04 & 0.447 & 0.03 & 0.448 & 0.02 & 0.410 & 0.04 & 0.474 & 0.03 \\
 & 0.2 & \cellcolor{best}\textbf{0.488} & 0.02 & 0.345 & 0.01 & 0.357 & 0.01 & 0.327 & 0.01 & 0.306 & 0.00 & 0.308 & 0.03 & 0.315 & 0.01 \\
\midrule
\multirow{3}{*}{\shortstack[l]{Qwen2.5-7B\\(0.717 / 0.820)}} & 0.6 & \cellcolor{best}\textbf{0.689} & \cellcolor{best}\textbf{0.71} & 0.631 & 0.45 & 0.658 & 0.49 & 0.326 & 0.01 & 0.666 & 0.53 & 0.624 & 0.42 & 0.669 & 0.54 \\
 & 0.4 & \cellcolor{best}\textbf{0.659} & \cellcolor{best}\textbf{0.45} & 0.332 & 0.01 & 0.490 & 0.01 & 0.300 & 0.01 & 0.456 & 0.04 & 0.398 & 0.02 & 0.522 & 0.09 \\
 & 0.2 & \cellcolor{best}\textbf{0.405} & 0.01 & 0.305 & 0.01 & 0.298 & 0.00 & 0.297 & 0.01 & 0.289 & 0.03 & 0.297 & 0.01 & 0.296 & 0.01 \\
\midrule
\multirow{3}{*}{\shortstack[l]{Qwen3-8B\\(0.725 / 0.855)}} & 0.6 & \cellcolor{best}\textbf{0.707} & \cellcolor{best}\textbf{0.73} & 0.668 & 0.46 & 0.644 & 0.54 & 0.586 & 0.23 & 0.670 & 0.44 & 0.601 & 0.20 & 0.661 & 0.20 \\
 & 0.4 & \cellcolor{best}\textbf{0.624} & \cellcolor{best}\textbf{0.10} & 0.561 & 0.09 & 0.529 & 0.07 & 0.432 & 0.01 & 0.522 & 0.08 & 0.425 & 0.01 & 0.561 & 0.04 \\
 & 0.2 & \cellcolor{best}\textbf{0.500} & 0.01 & 0.342 & 0.01 & 0.329 & 0.01 & 0.304 & 0.00 & 0.358 & 0.00 & 0.322 & 0.01 & 0.335 & 0.01 \\
\midrule
\multirow{3}{*}{\shortstack[l]{Qwen3-32B\\(0.743 / 0.885)}} & 0.6 & \cellcolor{best}\textbf{0.733} & \cellcolor{best}\textbf{0.77} & 0.730 & 0.64 & 0.717 & 0.77 & 0.687 & 0.70 & 0.722 & 0.71 & 0.684 & 0.67 & 0.701 & 0.60 \\
 & 0.4 & \cellcolor{best}\textbf{0.733} & \cellcolor{best}\textbf{0.75} & 0.679 & 0.55 & 0.682 & 0.51 & 0.618 & 0.21 & 0.629 & 0.41 & 0.525 & 0.04 & 0.657 & 0.44 \\
 & 0.2 & \cellcolor{best}\textbf{0.671} & 0.28 & 0.375 & 0.00 & 0.460 & 0.01 & 0.355 & 0.00 & 0.372 & 0.01 & 0.335 & 0.00 & 0.369 & 0.01 \\ 
\bottomrule
\end{tabular}}
\end{table}

\begin{figure}[htbp]
\centering
\includegraphics[width=\linewidth]{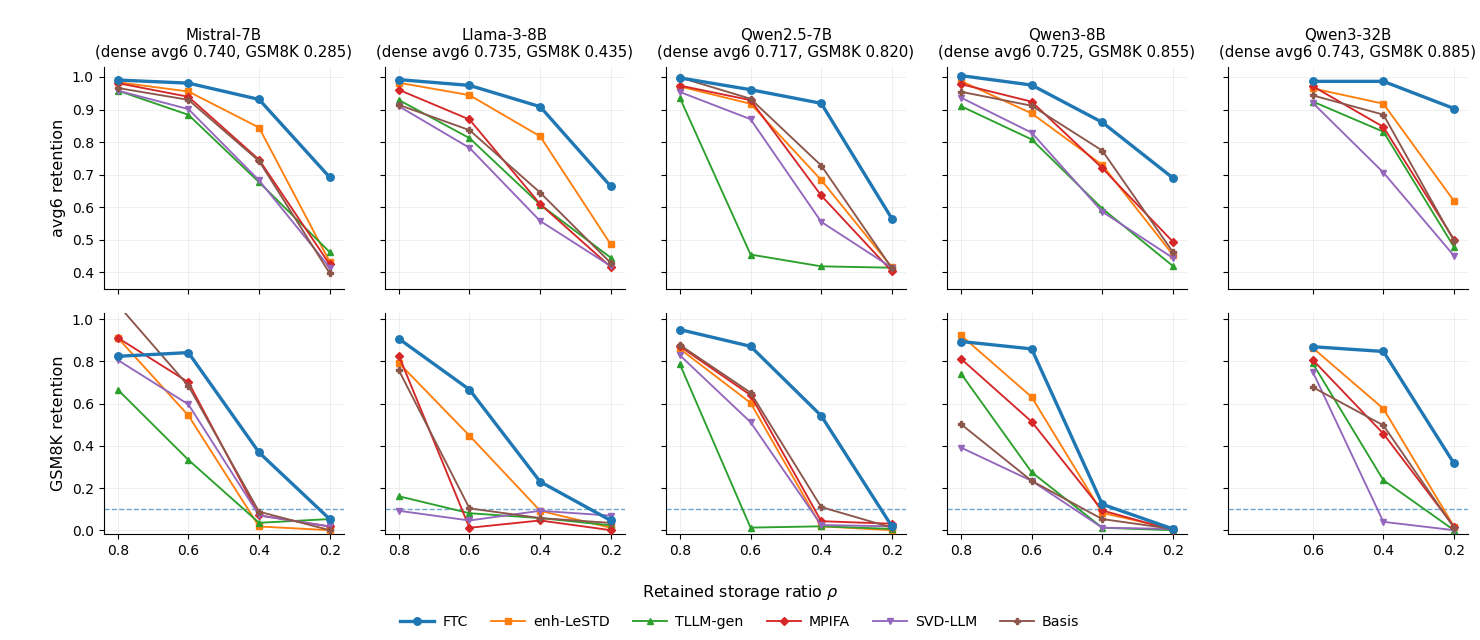}
\caption{Downstream retention versus retained storage ratio on five GQA
models. Top: avg6 relative to dense; bottom: 5-shot GSM8K relative to
dense. The dashed line marks 10\% GSM8K retention.}
\label{fig:downstream}
\end{figure}

The downstream results reveal two main patterns across tasks, model scales, and compression levels.

\textbf{(1) Generative reasoning is more sensitive to compression than
multiple-choice performance.} At $\rrho=0.4$, \ftc{} retains
$86$--$93\%$ of the dense avg6 on the four 7--8B models, whereas its
GSM8K retention ranges from only $12$--$55\%$. This gap is substantially
smaller at larger scale: Qwen3-32B retains $98.7\%$ of its dense avg6 and
$84.7\%$ of its dense GSM8K accuracy at the same compression ratio. These
results indicate that reasoning-intensive generation degrades earlier under
compression, while increased model scale provides considerably greater
robustness.

\textbf{(2) Performance differences become more pronounced under aggressive compression.} At $\rrho=0.2$, \ftc{} exceeds the best
published baseline by $10.7$--$17.0$ percentage points in avg6 across the
four 7--8B models, and by $21.1$ points on Qwen3-32B. Thus, even when
GSM8K has collapsed to near-zero accuracy on smaller models, the
multiple-choice aggregate continues to reveal substantial differences in
the quality retained by different compression schemes.

\subsection{Attribution and ablations}
\label{sec:ablation}
We next isolate the contribution of the main components of FTC through
controlled ablations under the same realized compression budget.
Table~\ref{tab:ablation} reports the resulting WikiText-2 perplexity after
removing or replacing one component at a time.

\begin{table}[H]
\centering\scriptsize
\caption{Component ablations on WikiText-2. ``full'' denotes the final
fixed head-concat recipe; each column removes or replaces one component at
the same realized storage ratio and reports perplexity with the relative
change in parentheses. Relative changes are computed against the matching
full-recipe result in the same row.}
\label{tab:ablation}
\setlength{\tabcolsep}{3pt}
\renewcommand{\arraystretch}{1.20}
\resizebox{\linewidth}{!}{%
\begin{tabular}{llccccccccc}
\toprule
Model & $\rrho$ & full & w/o whitening & w/o slice-wise norm & w/o seq.\ calibration & w/o anchoring & K/V repeat-expand & dense core & no joint tensor & w/o $W_O$ target \\ 
\midrule
Mistral-7B & 0.6 & 5.440 & \cellcolor{worse!12}5.56 (+2.2\%) & \cellcolor{worse!11}5.50 (+1.1\%) & 5.46 (+0.4\%) & 5.44 (+0.0\%) & \cellcolor{worse!11}5.53 (+1.6\%) & \cellcolor{worse!30}6.37 (+17.2\%) & \cellcolor{worse!15}5.69 (+4.7\%) & 5.46 (+0.4\%) \\
Mistral-7B & 0.2 & 7.334 & \cellcolor{worse!70}13.19 (+79.8\%) & \cellcolor{worse!42}9.33 (+27.2\%) & \cellcolor{worse!70}13.24 (+80.6\%) & \cellcolor{worse!11}7.45 (+1.5\%) & \cellcolor{worse!57}10.41 (+41.9\%) & \cellcolor{worse!15}7.65 (+4.3\%) & \cellcolor{worse!70}12.10 (+65.0\%) & \cellcolor{worse!56}10.15 (+38.4\%) \\
Llama-3-8B & 0.6 & 6.640 & \cellcolor{worse!11}6.73 (+1.4\%) & \cellcolor{worse!16}7.01 (+5.5\%) & \cellcolor{worse!15}6.94 (+4.5\%) & \cellcolor{better!11}6.60 (-0.6\%) & \cellcolor{worse!14}6.87 (+3.4\%) & \cellcolor{worse!42}8.45 (+27.3\%) & \cellcolor{worse!17}7.03 (+5.9\%) & \cellcolor{worse!15}6.94 (+4.5\%) \\
Llama-3-8B & 0.2 & 10.440 & \cellcolor{worse!70}20.56 (+96.9\%) & \cellcolor{worse!70}19.77 (+89.4\%) & \cellcolor{worse!70}33.65 (+222.4\%) & \cellcolor{worse!14}10.87 (+4.1\%) & \cellcolor{worse!70}17.18 (+64.6\%) & \cellcolor{worse!18}11.19 (+7.2\%) & \cellcolor{worse!70}19.46 (+86.4\%) & \cellcolor{worse!70}29.50 (+182.6\%) \\
Qwen2.5-7B & 0.6 & 7.197 & \cellcolor{worse!19}7.77 (+7.9\%) & 7.17 (-0.3\%) & \cellcolor{worse!10}7.24 (+0.5\%) & \cellcolor{worse!10}7.24 (+0.5\%) & \cellcolor{worse!10}7.24 (+0.6\%) & \cellcolor{worse!70}12.71 (+76.6\%) & \cellcolor{worse!23}7.98 (+10.9\%) & 7.19 (-0.1\%) \\
Qwen2.5-7B & 0.2 & 12.548 & \cellcolor{worse!70}221.15 (+1662.4\%) & \cellcolor{worse!70}74.34 (+492.4\%) & \cellcolor{worse!70}21.36 (+70.2\%) & \cellcolor{worse!27}14.36 (+14.5\%) & \cellcolor{worse!13}12.88 (+2.6\%) & \cellcolor{worse!70}26.89 (+114.3\%) & \cellcolor{worse!70}150.93 (+1102.8\%) & \cellcolor{worse!13}12.88 (+2.6\%) \\
Qwen3-8B & 0.6 & 7.362 & \cellcolor{worse!23}8.22 (+11.7\%) & 7.35 (-0.2\%) & \cellcolor{worse!12}7.52 (+2.2\%) & \cellcolor{worse!11}7.42 (+0.8\%) & \cellcolor{worse!12}7.52 (+2.2\%) & \cellcolor{worse!41}9.28 (+26.1\%) & \cellcolor{worse!14}7.65 (+3.9\%) & \cellcolor{worse!12}7.54 (+2.5\%) \\
Qwen3-8B & 0.2 & 10.700 & \cellcolor{worse!70}32.28 (+201.7\%) & \cellcolor{worse!24}11.96 (+11.8\%) & \cellcolor{worse!70}59.68 (+457.7\%) & \cellcolor{worse!17}11.35 (+6.1\%) & \cellcolor{worse!52}14.53 (+35.8\%) & \cellcolor{worse!20}11.63 (+8.7\%) & \cellcolor{worse!70}23.66 (+121.2\%) & \cellcolor{worse!60}15.24 (+42.4\%) \\ 
\bottomrule
\end{tabular}}
\end{table}
\vspace{-0.5cm}
The ablation results reveal three main patterns in how the individual components contribute across compression levels and model architectures.

\textbf{(1) Metric whitening, sequential calibration, and joint
tensorization become increasingly important as the compression budget
tightens.} At $\rrho=0.6$, removing whitening,
sequential calibration, or joint tensorization increases perplexity by at
most $11.7\%$. At $\rrho=0.2$, the corresponding penalties rise to
$79.8$--$1662.4\%$, $70.2$--$457.7\%$, and $65.0$--$1102.8\%$,
respectively. These results show that activation-induced metric, adaptation to the evolving student stream, and joint Q/K/V modeling become increasingly
important as the compression budget tightens.

\textbf{(2) The contributions of the remaining components vary more substantially across models.}
At $\rrho=0.2$, removing anchoring changes perplexity by only
$1.5$--$14.5\%$, whereas removing slice-wise normalization costs
$11.8$--$492.4\%$. Replacing the sparse core with its matched-budget dense
counterpart incurs only $4.3$--$8.7\%$ on three models but $114.3\%$ on
Qwen2.5-7B. The $W_O$ teacher target is beneficial on all four models,
with its removal increasing perplexity by $2.6$--$182.6\%$.

\textbf{(3) Fixed head-concat is consistently preferred to K/V
repeat-expansion.}
At $\rrho=0.2$, replacing head-concat with K/V repeat-expansion increases
perplexity by $41.9\%$ on Mistral-7B, $64.6\%$ on Llama-3-8B,
$2.6\%$ on Qwen2.5-7B, and $35.8\%$ on Qwen3-8B. The magnitude varies
substantially across architectures, but the native head-concat construction
is preferred on all four GQA models. Additional teacher-map and
hyperparameter studies are reported in Appendix~\ref{app:hyper}.

\section{Conclusion}

We proposed \ftc{}, a structured attention-compression framework combining
an activation-induced metric, sequential calibration, joint Q/K/V
tensorization, and budget-coupled rank--sparsity allocation. Across MHA
and GQA models from 6B to 32B, \ftc{} achieves consistently lower
perplexity and stronger downstream retention than the compared methods,
without fine-tuning or gradient-based recovery. Ablations further show
that metric whitening, sequential calibration, and joint tensorization
become increasingly important as the compression budget tightens.

\bibliography{iclr2027_conference}
\bibliographystyle{iclr2027_conference}

\appendix

\newpage
\section*{AI Use Statement}

In preparing this work, we used generative AI tools to obtain feedback on
the research methodology and experimental design, review mathematical
derivations for potential errors and inconsistencies, and assist with the
interpretation of experimental results. We also used these tools to draft
and revise portions of the manuscript, improve its readability and
organization, identify relevant literature, and prepare a draft workflow
diagram.

We have reviewed all AI-assisted material retained in this work.
Specifically, we independently checked the mathematical arguments,
verified references against the original sources, and cross-checked
numerical statements against our experimental records. We revised
AI-assisted text and figures to ensure that they accurately describe
the methods and results. We take full responsibility for the final
content of this work, including all text, claims, and artifacts produced
with the assistance of generative AI.

\section{Experimental details and full results}

\subsection{Reproduction protocol and accounting}
\label{app:protocol}

\paragraph{LeSTD reproduction.}
We reproduce LeSTD from its published description and first verify the
implementation against its reported GPT-J results. Native LeSTD is not
directly applicable to GQA because its per-head factorization assumes
one-to-one Q/K/V head structure. We therefore use enh-LeSTD, a
GQA-compatible port that adopts the same native head-concat tensorization
as \ftc{} while retaining LeSTD's data-free objective. This isolates the
compression objective from the tensorization required for GQA support.

\paragraph{Accounting.}
\label{app:accounting}
We define $\rrho$ as the retained storage ratio of the compressed attention projections. For Q/K/V, the accounting includes the shared Tucker factors, packed nonzero core values, and the bit-packed support mask; $W_{O,l}$ is compressed and counted separately. Thus, sparse-core
metadata is included in $\rrho$. Let $s_{\mathrm{attn}}$ denote the fraction of dense model storage occupied by the compressed attention projections. Since all dense weights
use the same precision, this equals their parameter fraction. Leaving the
remaining model unchanged, the corresponding whole-model storage reduction
is $(1-\rho)s_{\mathrm{attn}}.$
Table~\ref{tab:accounting} reports this conversion. 

\begin{table}[htbp]
\centering
\caption{Storage accounting at each attention-projection storage budget
$\rrho$. Realized storage is determined by the selected layerwise Tucker
ranks $(R_{1,l},R_{2,l})$, sparse-core cardinality $K_l$,
output-projection rank $r_{O,l}$, and bit-packed support mask.}
\label{tab:accounting}
\scriptsize
\setlength{\tabcolsep}{3.5pt}
\begin{tabular}{@{}lccccc@{}}
\toprule
& & \multicolumn{4}{c}{Whole-model storage removed (\%)} \\
\cmidrule(lr){3-6}
Model
& Attn.\ share & $\rrho=.8$ & $\rrho=.6$ & $\rrho=.4$ & $\rrho=.2$ \\
\midrule
GPT-J-6B     & 31.1\% & 6.2 & 12.4 & 18.7 & 24.9 \\
Llama-2-13B  & 32.2\% & 6.4 & 12.9 & 19.3 & 25.8 \\
Mistral-7B    & 18.5\% & 3.7 & 7.4  & 11.1 & 14.8 \\
Llama-3-8B    & 16.7\% & 3.3 & 6.7  & 10.0 & 13.4 \\
Qwen2.5-7B    & 10.8\% & 2.2 & 4.3  & 6.5  & 8.6 \\
Qwen3-8B      & 18.4\% & 3.7 & 7.4  & 11.0 & 14.7 \\

\bottomrule
\end{tabular}
\end{table}

\paragraph{Calibration and evaluation.}
\ftc{} uses 256 calibration sequences of 2{,}048 tokens (WikiText-2 train,
seed 42) on every model; the data-dependent baselines use the 128 sequences
of their published configurations, and Table~\ref{tab:calibsize} quantifies the effect of this difference. Perplexity on
WikiText-2 test with stride-2048 windows; PTB and C4 in
App.~\ref{app:full-ppl}. Tasks: 1{,}000 examples per task (983 for MathQA, 200 for GSM8K),
acc\_norm for multiple choice, exact-match accuracy for 5-shot GSM8K
(batch 8, greedy).

\paragraph{Large-model specifics.}
Qwen3-32B does not fit one 40GB card in bf16, so compression runs
layer-at-a-time with the weights on CPU (only the embedding, the shared
rotary table and one decoder layer are resident) and task evaluation shards
the model across two cards with CPU spill. Nothing else changes: same recipe,
same 256 calibration windows, same accounting.

\paragraph{Compression cost.}
Table~\ref{tab:cost} reports the wall-clock time of the complete \ftc{}
pipeline. On a single A100-40GB, the 7--8B models require 16--39 minutes per
compression setting, GPT-J-6B requires 21--38 minutes, and Llama-2-13B
requires 39--166 minutes. Qwen3-32B requires 96--170 minutes with CPU
offloading and layer-wise compression. No gradient-based optimization is
used; the dominant costs are Gram eigendecomposition and HOOI/refit
iterations.

\begin{table}[htbp]
\centering\scriptsize
\caption{\ftc{} compression wall-clock time (minutes) per model--ratio
configuration. Columns correspond to $\rrho=0.8/0.6/0.4/0.2$.
All models use the fixed head-concat recipe on a single A100-40GB GPU;
Qwen3-32B additionally uses CPU offloading and is compressed one decoder
layer at a time.}
\label{tab:cost}
\begin{tabular}{llcccc}
\toprule
Model & mode & 0.8 & 0.6 & 0.4 & 0.2 \\
\midrule
GPT-J-6B & 1 card & 38 & 34 & 26 & 21 \\
Llama-2-13B & 1 card & 166 & 93 & 75 & 39 \\
Mistral-7B & 1 card & 33 & 30 & 26 & 26 \\
Llama-3-8B & 1 card & 34 & 29 & 27 & 25 \\
Qwen2.5-7B & 1 card & 21 & 20 & 16 & 16 \\
Qwen3-8B & 1 card & 39 & 33 & 28 & 27 \\
Qwen3-32B & 1 card & 170 & 137 & 106 & 96 \\ 
\bottomrule
\end{tabular}
\end{table}

\subsection{Full perplexity tables}
\label{app:full-ppl}

\begin{table}[H]
\centering\scriptsize
\caption{Full WikiText-2 perplexity results over all models and
$\rrho\in\{0.8,0.6,0.4,0.2\}$. Best results in each row are highlighted.}
\setlength{\tabcolsep}{4pt}
\resizebox{\linewidth}{!}{%
\begin{tabular}{llc|cccccc}
\toprule
Model & $\rrho$ & \ftc{} & Seq-SVD & enh-LeSTD & TLLM-gen & MPIFA & SVD-LLM & Basis \\
\midrule
GPT-J-6B & 0.8 & 9.04 & 9.14 & \cellcolor{best}\textbf{8.99} & 9.24 & 9.04 & 9.60 & 9.61 \\
 & 0.6 & \cellcolor{best}\textbf{9.30} & 9.47 & 9.93 & 9.68 & 9.46 & 10.26 & 10.16 \\
 & 0.4 & \cellcolor{best}\textbf{10.03} & 10.46 & 10.31 & 11.05 & 10.68 & 12.16 & 11.80 \\
 & 0.2 & \cellcolor{best}\textbf{13.60} & 14.84 & 17.47 & 19.22 & 17.24 & 25.04 & 23.02 \\
\midrule
Llama-2-13B & 0.8 & 4.98 & 5.06 & \cellcolor{best}\textbf{4.92} & 5.13 & 5.00 & 5.31 & 5.28 \\
 & 0.6 & \cellcolor{best}\textbf{5.11} & 5.27 & 5.13 & 5.52 & 5.26 & 5.68 & 5.57 \\
 & 0.4 & \cellcolor{best}\textbf{5.52} & 5.87 & 5.72 & 6.75 & 6.05 & 6.72 & 6.42 \\
 & 0.2 & \cellcolor{best}\textbf{7.52} & 9.00 & 9.79 & 13.01 & 10.21 & 14.20 & 12.77 \\
\midrule
Mistral-7B & 0.8 & \cellcolor{best}\textbf{5.36} & 5.43 & 5.40 & 5.56 & 5.39 & 5.56 & 5.51 \\
 & 0.6 & \cellcolor{best}\textbf{5.44} & 5.69 & 5.58 & 6.07 & 5.63 & 5.92 & 5.75 \\
 & 0.4 & \cellcolor{best}\textbf{5.70} & 6.54 & 6.29 & 8.18 & 6.63 & 7.34 & 6.66 \\
 & 0.2 & \cellcolor{best}\textbf{7.33} & 12.10 & 146.31 & 18.41 & 15.92 & 25.40 & 16.18 \\
\midrule
Llama-3-8B & 0.8 & \cellcolor{best}\textbf{6.36} & 6.50 & 6.36 & 7.16 & 6.47 & 6.97 & 6.88 \\
 & 0.6 & \cellcolor{best}\textbf{6.64} & 7.03 & 6.82 & 8.70 & 7.26 & 8.02 & 7.60 \\
 & 0.4 & \cellcolor{best}\textbf{7.31} & 8.75 & 8.28 & 12.72 & 9.70 & 12.61 & 10.47 \\
 & 0.2 & \cellcolor{best}\textbf{10.44} & 19.46 & 20.84 & 29.85 & 27.02 & 48.09 & 35.60 \\
\midrule
Qwen2.5-7B & 0.8 & \cellcolor{best}\textbf{6.99} & 7.21 & 7.17 & 7.61 & 7.16 & 7.50 & 7.28 \\
 & 0.6 & \cellcolor{best}\textbf{7.20} & 7.98 & 7.76 & 34.10 & 7.84 & 8.56 & 7.82 \\
 & 0.4 & \cellcolor{best}\textbf{7.84} & 19.96 & 10.60 & 84.22 & 11.60 & 15.23 & 10.77 \\
 & 0.2 & \cellcolor{best}\textbf{12.55} & 150.93 & 106.59 & 188.22 & 7791 & 255.23 & 140.65 \\
\midrule
Qwen3-8B & 0.8 & \cellcolor{best}\textbf{7.12} & 7.25 & 7.27 & 7.87 & 8.03 & 8.87 & 8.84 \\
 & 0.6 & \cellcolor{best}\textbf{7.36} & 7.65 & 8.20 & 9.12 & 8.17 & 9.24 & 9.18 \\
 & 0.4 & \cellcolor{best}\textbf{8.48} & 8.82 & 10.17 & 14.04 & 9.63 & 15.59 & 11.11 \\
 & 0.2 & \cellcolor{best}\textbf{10.70} & 23.66 & 27.08 & 45.45 & 22.23 & 75.77 & 46.68 \\
\midrule
Qwen3-32B & 0.8 & \cellcolor{best}\textbf{7.65} & 7.73 & 7.66 & 7.97 & 7.99 & 8.67 & 8.31 \\
 & 0.6 & \cellcolor{best}\textbf{7.77} & 7.98 & 7.84 & 8.56 & 8.40 & 9.23 & 8.70 \\
 & 0.4 & \cellcolor{best}\textbf{8.08} & 8.86 & 8.66 & 10.58 & 9.90 & 15.13 & 9.60 \\
 & 0.2 & \cellcolor{best}\textbf{9.22} & 22.97 & 15.23 & 29.71 & 26.75 & 66.29 & 29.63 \\ 
\bottomrule
\end{tabular}}
\end{table}

\begin{table}[H]
\centering\scriptsize
\caption{PTB (left block) and C4 (right block) perplexity, method order as
in Table~\ref{tab:main} (F = \ftc{}, qS = Seq-SVD, eL = enh-LeSTD,
TG = TLLM-gen, MP = MPIFA, SL = SVD-LLM, BS = Basis).
Mistral's high PTB values reflect its tokenizer/domain mismatch and affect
all methods equally.}
\setlength{\tabcolsep}{2pt}
\resizebox{\linewidth}{!}{%
\begin{tabular}{llccccccc|ccccccc}
\toprule
& & \multicolumn{7}{c|}{PTB} & \multicolumn{7}{c}{C4} \\
Model & $\rrho$ & F & qS & eL & TG & MP & SL & BS & F & qS & eL & TG & MP & SL & BS \\
\midrule
GPT-J-6B & 0.8 & 16.33 & 16.80 & \textbf{16.28} & 17.03 & 16.30 & 17.45 & 17.42 & 13.62 & 14.12 & \textbf{13.29} & 14.06 & 13.62 & 14.84 & 14.83 \\
 & 0.6 & \textbf{17.20} & 18.39 & 17.85 & 18.43 & 17.46 & 19.74 & 19.44 & 14.33 & 15.24 & \textbf{14.18} & 15.09 & 14.82 & 16.45 & 16.27 \\
 & 0.4 & \textbf{20.34} & 25.08 & 21.93 & 25.25 & 24.20 & 30.19 & 28.64 & \textbf{16.61} & 18.80 & 16.94 & 18.75 & 18.69 & 21.89 & 21.26 \\
 & 0.2 & \textbf{50.52} & 67.11 & 82.87 & 101.12 & 88.92 & 162.55 & 115.96 & \textbf{29.12} & 34.64 & 40.79 & 42.92 & 42.27 & 65.53 & 59.42 \\
\midrule
Llama-2-13B & 0.8 & \textbf{52.27} & 61.58 & 53.14 & 54.14 & 56.42 & 61.75 & 60.82 & 7.05 & 7.60 & \textbf{7.01} & 7.45 & 7.04 & 8.22 & 8.14 \\
 & 0.6 & \textbf{56.73} & 75.51 & 58.10 & 62.43 & 65.56 & 79.16 & 78.79 & 7.74 & 8.79 & \textbf{7.73} & 8.73 & 8.47 & 9.93 & 9.64 \\
 & 0.4 & 118.70 & 118.43 & \textbf{76.41} & 100.17 & 136.49 & 163.98 & 140.54 & \textbf{9.39} & 11.19 & 9.84 & 12.20 & 11.76 & 14.47 & 13.39 \\
 & 0.2 & 226.93 & 244.05 & \textbf{207.72} & 249.05 & 286.09 & 487.70 & 478.87 & \textbf{16.53} & 21.84 & 22.85 & 30.64 & 28.65 & 43.23 & 38.24 \\
\midrule
Mistral-7B & 0.8 & 39.73 & 38.55 & 38.46 & 42.97 & \textbf{38.26} & 39.49 & 40.60 & \textbf{8.67} & 9.07 & 8.75 & 9.20 & 8.79 & 9.30 & 9.12 \\
 & 0.6 & 43.16 & 43.52 & 42.85 & 55.49 & \textbf{41.47} & 48.17 & 46.79 & \textbf{8.91} & 10.20 & 9.29 & 10.64 & 9.78 & 10.76 & 10.11 \\
 & 0.4 & \textbf{50.17} & 65.37 & 63.35 & 120.65 & 71.57 & 97.88 & 77.40 & \textbf{9.71} & 13.41 & 11.47 & 16.46 & 14.00 & 17.23 & 14.10 \\
 & 0.2 & \textbf{114.66} & 233.04 & 1202 & 487.88 & 299.66 & 479.05 & 350.76 & \textbf{15.91} & 34.86 & 562.15 & 49.10 & 51.15 & 82.38 & 54.43 \\
\midrule
Llama-3-8B & 0.8 & \textbf{11.70} & 12.47 & 11.72 & 13.51 & 12.03 & 13.47 & 13.05 & \textbf{10.09} & 14.06 & 10.12 & 12.58 & 10.57 & 20.88 & 11.82 \\
 & 0.6 & \textbf{12.44} & 14.54 & 13.13 & 17.76 & 14.70 & 17.51 & 15.92 & \textbf{11.01} & 20.44 & 11.71 & 16.67 & 20.53 & 38.23 & 15.03 \\
 & 0.4 & \textbf{15.43} & 23.04 & 19.24 & 36.09 & 29.62 & 47.80 & 34.60 & \textbf{13.16} & 23.36 & 16.62 & 29.47 & 31.70 & 54.62 & 31.45 \\
 & 0.2 & \textbf{50.71} & 107.02 & 111.94 & 253.19 & 128.53 & 264.41 & 217.80 & \textbf{29.18} & 90.63 & 69.35 & 103.12 & 132.05 & 253.66 & 180.43 \\
\midrule
Qwen2.5-7B & 0.8 & \textbf{13.28} & 14.51 & 13.96 & 15.24 & 13.62 & 14.89 & 14.10 & \textbf{11.97} & 15.49 & 12.48 & 13.38 & 13.78 & 15.62 & 12.69 \\
 & 0.6 & \textbf{14.26} & 18.97 & 17.19 & 129.63 & 16.93 & 19.39 & 16.79 & \textbf{12.50} & 21.10 & 13.81 & 95.39 & 19.00 & 22.11 & 16.06 \\
 & 0.4 & \textbf{18.20} & 110.39 & 39.29 & 586.17 & 43.70 & 76.62 & 36.03 & \textbf{15.42} & 138.98 & 27.67 & 306.41 & 43.86 & 70.26 & 33.25 \\
 & 0.2 & \textbf{89.42} & 2112 & 2477 & 5202 & 55759 & 3251 & 1760 & \textbf{73.37} & 1106 & 732.57 & 1263 & 26390 & 1406 & 786.30 \\
\midrule
Qwen3-8B & 0.8 & \textbf{12.94} & 13.77 & 13.47 & 15.65 & 14.90 & 16.91 & 16.41 & \textbf{12.09} & 12.77 & 12.48 & 13.92 & 13.85 & 15.38 & 15.13 \\
 & 0.6 & \textbf{13.58} & 15.51 & 17.08 & 21.98 & 16.89 & 19.69 & 18.26 & \textbf{12.72} & 14.21 & 14.76 & 17.59 & 15.18 & 17.61 & 16.70 \\
 & 0.4 & \textbf{16.41} & 22.19 & 31.72 & 68.02 & 26.57 & 48.73 & 27.38 & \textbf{14.78} & 19.16 & 22.21 & 35.16 & 22.44 & 38.76 & 25.39 \\
 & 0.2 & \textbf{33.02} & 106.03 & 264.11 & 636.55 & 103.14 & 806.93 & 204.73 & \textbf{23.73} & 69.32 & 85.84 & 145.38 & 64.87 & 249.44 & 133.10 \\
\midrule
Qwen3-32B & 0.8 & \textbf{13.49} & 14.10 & 13.54 & 14.27 & 14.15 & 16.53 & 15.37 & \textbf{12.47} & 12.98 & 12.51 & 13.06 & 13.02 & 14.45 & 13.63 \\
 & 0.6 & \textbf{13.99} & 15.53 & 14.27 & 16.12 & 16.76 & 20.48 & 17.70 & \textbf{12.79} & 14.25 & 12.99 & 14.36 & 15.05 & 17.70 & 15.30 \\
 & 0.4 & \textbf{15.14} & 24.28 & 18.27 & 26.99 & 39.58 & 71.73 & 24.43 & \textbf{13.68} & 18.74 & 15.18 & 19.35 & 23.69 & 37.84 & 19.81 \\
 & 0.2 & \textbf{24.38} & 220.91 & 132.24 & 455.35 & 299.63 & 600.37 & 291.75 & \textbf{19.26} & 88.34 & 43.53 & 113.39 & 113.11 & 213.61 & 90.84 \\ 
\bottomrule
\end{tabular}}
\end{table}

\subsection{Full downstream grid}
\label{app:full-tasks}

Table~\ref{tab:full-tasks} lists every task row measured: per-task acc\_norm
for the six base tasks, MathQA, GSM8K, and avg6, for every (model, method,
ratio) cell on disk --- the modern-model grid (4 models $\times$ 4--6
methods $\times$ 4 ratios), the MHA models, and the Qwen3.5 hybrid rows.
``--'' marks configurations that are not applicable (diverged native LeSTD
at $\rrho{=}0.2$).

\begin{table}[H]
\centering\small
\caption{Complete downstream grid, MHA models (acc\_norm; GSM8K exact
match).}
\label{tab:full-tasks-mha}
\setlength{\tabcolsep}{2pt}
\begin{tabular}{lllccccccccc}
\toprule
Model & Method & $\rrho$ & PIQA & ARC-e & ARC-c & HSwag & WinoG & LAMB & MathQA & GSM8K & avg6 \\
\midrule
GPT-J-6B & dense & 1.0 & 0.756 & 0.616 & 0.367 & 0.654 & 0.654 & 0.678 & 0.281 & 0.015 & 0.621 \\
 & \ftc{} & 0.8 & 0.747 & 0.607 & 0.358 & 0.642 & 0.659 & 0.665 & 0.269 & 0.010 & 0.613 \\
 & \ftc{} & 0.6 & 0.742 & 0.589 & 0.333 & 0.626 & 0.640 & 0.658 & 0.264 & 0.005 & 0.598 \\
 & \ftc{} & 0.4 & 0.735 & 0.577 & 0.317 & 0.567 & 0.618 & 0.573 & 0.263 & 0.025 & 0.564 \\
 & \ftc{} & 0.2 & 0.684 & 0.464 & 0.270 & 0.438 & 0.575 & 0.255 & 0.258 & 0.000 & 0.448 \\
 & enh-LeSTD & 0.8 & 0.758 & 0.620 & 0.362 & 0.649 & 0.651 & 0.682 & 0.270 & 0.010 & 0.620 \\
 & enh-LeSTD & 0.6 & 0.751 & 0.596 & 0.337 & 0.616 & 0.623 & 0.686 & 0.269 & 0.030 & 0.602 \\
 & enh-LeSTD & 0.4 & 0.728 & 0.544 & 0.304 & 0.561 & 0.608 & 0.499 & 0.248 & 0.010 & 0.541 \\
 & enh-LeSTD & 0.2 & 0.669 & 0.448 & 0.272 & 0.405 & 0.570 & 0.149 & 0.237 & 0.000 & 0.419 \\
 & TLLM-gen & 0.8 & 0.744 & 0.577 & 0.348 & 0.626 & 0.637 & 0.629 & 0.272 & 0.020 & 0.594 \\
 & TLLM-gen & 0.6 & 0.730 & 0.565 & 0.324 & 0.592 & 0.623 & 0.621 & 0.271 & 0.020 & 0.576 \\
 & TLLM-gen & 0.2 & 0.660 & 0.412 & 0.254 & 0.387 & 0.556 & 0.146 & 0.221 & 0.000 & 0.403 \\
 & MPIFA & 0.8 & 0.754 & 0.601 & 0.352 & 0.631 & 0.657 & 0.684 & 0.277 & 0.045 & 0.613 \\
 & MPIFA & 0.6 & 0.747 & 0.584 & 0.338 & 0.594 & 0.641 & 0.689 & 0.272 & 0.010 & 0.599 \\
 & MPIFA & 0.4 & 0.724 & 0.556 & 0.299 & 0.533 & 0.619 & 0.576 & 0.273 & 0.010 & 0.551 \\
 & MPIFA & 0.2 & 0.615 & 0.427 & 0.270 & 0.358 & 0.586 & 0.239 & 0.249 & 0.010 & 0.416 \\
 & SVD-LLM & 0.8 & 0.739 & 0.592 & 0.336 & 0.600 & 0.667 & 0.696 & 0.274 & 0.025 & 0.605 \\
 & SVD-LLM & 0.6 & 0.736 & 0.562 & 0.314 & 0.567 & 0.646 & 0.674 & 0.270 & 0.015 & 0.583 \\
 & SVD-LLM & 0.4 & 0.703 & 0.537 & 0.285 & 0.499 & 0.622 & 0.519 & 0.260 & 0.035 & 0.527 \\
 & SVD-LLM & 0.2 & 0.601 & 0.400 & 0.262 & 0.334 & 0.567 & 0.152 & 0.233 & 0.000 & 0.386 \\
 & Basis Sharing & 0.8 & 0.744 & 0.597 & 0.339 & 0.601 & 0.658 & 0.684 & 0.272 & 0.025 & 0.604 \\
 & Basis Sharing & 0.6 & 0.737 & 0.579 & 0.319 & 0.574 & 0.648 & 0.662 & 0.272 & 0.010 & 0.586 \\
 & Basis Sharing & 0.4 & 0.709 & 0.554 & 0.292 & 0.504 & 0.634 & 0.530 & 0.257 & 0.025 & 0.537 \\
 & Basis Sharing & 0.2 & 0.601 & 0.412 & 0.264 & 0.348 & 0.568 & 0.162 & 0.241 & 0.010 & 0.393 \\
\midrule
Llama-2-13B & dense & 1.0 & 0.793 & 0.777 & 0.482 & 0.789 & 0.720 & 0.777 & 0.329 & 0.205 & 0.723 \\
 & \ftc{} & 0.8 & 0.791 & 0.763 & 0.469 & 0.784 & 0.719 & 0.764 & 0.314 & 0.190 & 0.715 \\
 & \ftc{} & 0.6 & 0.787 & 0.766 & 0.465 & 0.768 & 0.694 & 0.738 & 0.295 & 0.115 & 0.703 \\
 & \ftc{} & 0.4 & 0.768 & 0.726 & 0.416 & 0.715 & 0.690 & 0.647 & 0.273 & 0.030 & 0.660 \\
 & \ftc{} & 0.2 & 0.693 & 0.565 & 0.321 & 0.520 & 0.618 & 0.263 & 0.234 & 0.010 & 0.497 \\
 & enh-LeSTD & 0.8 & 0.793 & 0.767 & 0.475 & 0.782 & 0.723 & 0.750 & 0.315 & 0.195 & 0.715 \\
 & enh-LeSTD & 0.6 & 0.780 & 0.768 & 0.452 & 0.765 & 0.712 & 0.710 & 0.300 & 0.100 & 0.698 \\
 & enh-LeSTD & 0.4 & 0.763 & 0.708 & 0.416 & 0.697 & 0.681 & 0.574 & 0.273 & 0.040 & 0.640 \\
 & enh-LeSTD & 0.2 & 0.646 & 0.453 & 0.270 & 0.438 & 0.573 & 0.145 & 0.226 & 0.005 & 0.421 \\
 & TLLM-gen & 0.8 & 0.781 & 0.752 & 0.454 & 0.758 & 0.708 & 0.717 & 0.303 & 0.170 & 0.695 \\
 & TLLM-gen & 0.6 & 0.762 & 0.727 & 0.412 & 0.708 & 0.691 & 0.643 & 0.279 & 0.075 & 0.657 \\
 & TLLM-gen & 0.4 & 0.697 & 0.586 & 0.337 & 0.603 & 0.636 & 0.430 & 0.247 & 0.015 & 0.548 \\
 & TLLM-gen & 0.2 & 0.606 & 0.393 & 0.249 & 0.380 & 0.556 & 0.049 & 0.222 & 0.000 & 0.372 \\
 & MPIFA & 0.8 & 0.785 & 0.766 & 0.468 & 0.773 & 0.734 & 0.763 & 0.325 & 0.200 & 0.715 \\
 & MPIFA & 0.6 & 0.772 & 0.749 & 0.454 & 0.732 & 0.712 & 0.720 & 0.296 & 0.155 & 0.690 \\
 & MPIFA & 0.4 & 0.730 & 0.627 & 0.366 & 0.640 & 0.680 & 0.532 & 0.249 & 0.010 & 0.596 \\
 & MPIFA & 0.2 & 0.600 & 0.407 & 0.247 & 0.360 & 0.574 & 0.097 & 0.240 & 0.015 & 0.381 \\
 & SVD-LLM & 0.8 & 0.774 & 0.732 & 0.465 & 0.736 & 0.724 & 0.721 & 0.301 & 0.120 & 0.692 \\
 & SVD-LLM & 0.6 & 0.755 & 0.680 & 0.404 & 0.681 & 0.705 & 0.659 & 0.280 & 0.045 & 0.647 \\
 & SVD-LLM & 0.4 & 0.713 & 0.601 & 0.335 & 0.599 & 0.661 & 0.433 & 0.256 & 0.035 & 0.557 \\
 & SVD-LLM & 0.2 & 0.577 & 0.396 & 0.233 & 0.339 & 0.551 & 0.073 & 0.229 & 0.005 & 0.361 \\
 & Basis Sharing & 0.8 & 0.782 & 0.745 & 0.465 & 0.746 & 0.720 & 0.715 & 0.295 & 0.135 & 0.696 \\
 & Basis Sharing & 0.6 & 0.762 & 0.700 & 0.437 & 0.713 & 0.717 & 0.665 & 0.285 & 0.075 & 0.666 \\
 & Basis Sharing & 0.4 & 0.723 & 0.623 & 0.359 & 0.622 & 0.669 & 0.476 & 0.261 & 0.025 & 0.579 \\
 & Basis Sharing & 0.2 & 0.587 & 0.421 & 0.253 & 0.367 & 0.550 & 0.087 & 0.232 & 0.005 & 0.378 \\ 
\bottomrule
\end{tabular}
\end{table}

\begin{table}[H]
\centering\tiny
\caption{Complete downstream grid, GQA and hybrid models.}
\label{tab:full-tasks}
\setlength{\tabcolsep}{2pt}
\begin{tabular}{lllccccccccc}
\toprule
Model & Method & $\rrho$ & PIQA & ARC-e & ARC-c & HSwag & WinoG & LAMB & MathQA & GSM8K & avg6 \\
\midrule
Mistral-7B & dense & 1.0 & 0.814 & 0.807 & 0.530 & 0.796 & 0.740 & 0.753 & 0.367 & 0.285 & 0.740 \\
 & \ftc{} & 0.8 & 0.805 & 0.812 & 0.521 & 0.779 & 0.745 & 0.737 & 0.345 & 0.235 & 0.733 \\
 & \ftc{} & 0.6 & 0.805 & 0.811 & 0.506 & 0.765 & 0.741 & 0.728 & 0.336 & 0.240 & 0.726 \\
 & \ftc{} & 0.4 & 0.795 & 0.781 & 0.474 & 0.726 & 0.701 & 0.655 & 0.303 & 0.105 & 0.689 \\
 & \ftc{} & 0.2 & 0.720 & 0.595 & 0.321 & 0.555 & 0.619 & 0.263 & 0.247 & 0.015 & 0.512 \\
 & Seq-SVD & 0.6 & 0.781 & 0.759 & 0.450 & 0.711 & 0.716 & 0.690 & 0.320 & 0.175 & 0.684 \\
 & Seq-SVD & 0.4 & 0.706 & 0.603 & 0.337 & 0.582 & 0.661 & 0.524 & 0.287 & 0.010 & 0.569 \\
 & Seq-SVD & 0.2 & 0.545 & 0.339 & 0.243 & 0.343 & 0.562 & 0.121 & 0.220 & 0.000 & 0.359 \\
 & enh-LeSTD & 0.8 & 0.805 & 0.807 & 0.509 & 0.774 & 0.736 & 0.738 & 0.345 & 0.260 & 0.728 \\
 & enh-LeSTD & 0.6 & 0.794 & 0.796 & 0.479 & 0.745 & 0.733 & 0.697 & 0.334 & 0.155 & 0.707 \\
 & enh-LeSTD & 0.4 & 0.770 & 0.709 & 0.404 & 0.668 & 0.672 & 0.526 & 0.287 & 0.005 & 0.625 \\
 & enh-LeSTD & 0.2 & 0.524 & 0.300 & 0.252 & 0.295 & 0.535 & 0.011 & 0.213 & 0.000 & 0.320 \\
 & TLLM-gen & 0.8 & 0.798 & 0.790 & 0.484 & 0.748 & 0.725 & 0.711 & 0.331 & 0.190 & 0.709 \\
 & TLLM-gen & 0.6 & 0.770 & 0.738 & 0.426 & 0.699 & 0.695 & 0.594 & 0.314 & 0.095 & 0.654 \\
 & TLLM-gen & 0.4 & 0.694 & 0.540 & 0.320 & 0.552 & 0.632 & 0.270 & 0.242 & 0.010 & 0.501 \\
 & TLLM-gen & 0.2 & 0.566 & 0.337 & 0.259 & 0.335 & 0.537 & 0.016 & 0.209 & 0.015 & 0.342 \\
 & MPIFA & 0.8 & 0.803 & 0.789 & 0.511 & 0.781 & 0.733 & 0.737 & 0.360 & 0.260 & 0.726 \\
 & MPIFA & 0.6 & 0.791 & 0.755 & 0.466 & 0.723 & 0.734 & 0.699 & 0.342 & 0.200 & 0.695 \\
 & MPIFA & 0.4 & 0.710 & 0.581 & 0.320 & 0.539 & 0.653 & 0.508 & 0.276 & 0.020 & 0.552 \\
 & MPIFA & 0.2 & 0.527 & 0.312 & 0.230 & 0.281 & 0.512 & 0.022 & 0.222 & 0.005 & 0.314 \\
 & SVD-LLM & 0.8 & 0.797 & 0.783 & 0.486 & 0.755 & 0.728 & 0.705 & 0.355 & 0.230 & 0.709 \\
 & SVD-LLM & 0.6 & 0.776 & 0.735 & 0.435 & 0.677 & 0.719 & 0.660 & 0.312 & 0.170 & 0.667 \\
 & SVD-LLM & 0.4 & 0.681 & 0.542 & 0.305 & 0.472 & 0.633 & 0.394 & 0.257 & 0.020 & 0.505 \\
 & SVD-LLM & 0.2 & 0.516 & 0.291 & 0.231 & 0.294 & 0.506 & 0.007 & 0.223 & 0.005 & 0.307 \\
 & Basis Sharing & 0.8 & 0.803 & 0.777 & 0.502 & 0.757 & 0.742 & 0.706 & 0.360 & 0.305 & 0.715 \\
 & Basis Sharing & 0.6 & 0.790 & 0.775 & 0.469 & 0.709 & 0.723 & 0.664 & 0.314 & 0.195 & 0.688 \\
 & Basis Sharing & 0.4 & 0.727 & 0.599 & 0.352 & 0.550 & 0.614 & 0.450 & 0.274 & 0.025 & 0.549 \\
 & Basis Sharing & 0.2 & 0.517 & 0.251 & 0.251 & 0.240 & 0.506 & 0.001 & 0.207 & 0.000 & 0.294 \\
\midrule
Llama-3-8B & dense & 1.0 & 0.805 & 0.790 & 0.531 & 0.796 & 0.738 & 0.750 & 0.401 & 0.435 & 0.735 \\
 & \ftc{} & 0.8 & 0.794 & 0.801 & 0.535 & 0.779 & 0.733 & 0.735 & 0.370 & 0.395 & 0.729 \\
 & \ftc{} & 0.6 & 0.788 & 0.786 & 0.501 & 0.766 & 0.740 & 0.717 & 0.326 & 0.290 & 0.716 \\
 & \ftc{} & 0.4 & 0.778 & 0.730 & 0.438 & 0.730 & 0.700 & 0.634 & 0.302 & 0.100 & 0.668 \\
 & \ftc{} & 0.2 & 0.702 & 0.522 & 0.308 & 0.521 & 0.596 & 0.282 & 0.241 & 0.020 & 0.488 \\
 & Seq-SVD & 0.6 & 0.739 & 0.702 & 0.422 & 0.684 & 0.716 & 0.590 & 0.345 & 0.085 & 0.642 \\
 & Seq-SVD & 0.4 & 0.683 & 0.467 & 0.315 & 0.518 & 0.641 & 0.436 & 0.257 & 0.035 & 0.510 \\
 & Seq-SVD & 0.2 & 0.541 & 0.312 & 0.234 & 0.329 & 0.541 & 0.113 & 0.221 & 0.005 & 0.345 \\
 & enh-LeSTD & 0.8 & 0.786 & 0.796 & 0.531 & 0.773 & 0.731 & 0.713 & 0.371 & 0.345 & 0.722 \\
 & enh-LeSTD & 0.6 & 0.782 & 0.784 & 0.486 & 0.754 & 0.716 & 0.641 & 0.330 & 0.195 & 0.694 \\
 & enh-LeSTD & 0.4 & 0.739 & 0.694 & 0.409 & 0.658 & 0.650 & 0.458 & 0.287 & 0.040 & 0.601 \\
 & enh-LeSTD & 0.2 & 0.588 & 0.365 & 0.249 & 0.358 & 0.536 & 0.044 & 0.226 & 0.005 & 0.357 \\
 & TLLM-gen & 0.8 & 0.767 & 0.760 & 0.483 & 0.727 & 0.715 & 0.647 & 0.313 & 0.070 & 0.683 \\
 & TLLM-gen & 0.6 & 0.731 & 0.657 & 0.394 & 0.641 & 0.660 & 0.498 & 0.267 & 0.035 & 0.597 \\
 & TLLM-gen & 0.4 & 0.671 & 0.472 & 0.289 & 0.489 & 0.584 & 0.175 & 0.241 & 0.025 & 0.447 \\
 & TLLM-gen & 0.2 & 0.565 & 0.328 & 0.235 & 0.314 & 0.507 & 0.015 & 0.215 & 0.010 & 0.327 \\
 & MPIFA & 0.8 & 0.793 & 0.771 & 0.506 & 0.760 & 0.725 & 0.683 & 0.375 & 0.360 & 0.706 \\
 & MPIFA & 0.6 & 0.741 & 0.711 & 0.441 & 0.664 & 0.709 & 0.570 & 0.342 & 0.005 & 0.639 \\
 & MPIFA & 0.4 & 0.622 & 0.421 & 0.278 & 0.439 & 0.608 & 0.323 & 0.242 & 0.020 & 0.448 \\
 & MPIFA & 0.2 & 0.517 & 0.277 & 0.222 & 0.277 & 0.522 & 0.019 & 0.199 & 0.000 & 0.306 \\
 & SVD-LLM & 0.8 & 0.770 & 0.738 & 0.461 & 0.733 & 0.724 & 0.589 & 0.357 & 0.040 & 0.669 \\
 & SVD-LLM & 0.6 & 0.720 & 0.626 & 0.381 & 0.613 & 0.682 & 0.427 & 0.317 & 0.020 & 0.575 \\
 & SVD-LLM & 0.4 & 0.610 & 0.389 & 0.266 & 0.403 & 0.578 & 0.215 & 0.243 & 0.040 & 0.410 \\
 & SVD-LLM & 0.2 & 0.523 & 0.279 & 0.242 & 0.277 & 0.501 & 0.027 & 0.221 & 0.030 & 0.308 \\
 & Basis Sharing & 0.8 & 0.791 & 0.720 & 0.458 & 0.732 & 0.715 & 0.619 & 0.343 & 0.330 & 0.672 \\
 & Basis Sharing & 0.6 & 0.742 & 0.637 & 0.394 & 0.680 & 0.686 & 0.550 & 0.340 & 0.045 & 0.615 \\
 & Basis Sharing & 0.4 & 0.653 & 0.448 & 0.294 & 0.498 & 0.626 & 0.325 & 0.254 & 0.025 & 0.474 \\
 & Basis Sharing & 0.2 & 0.527 & 0.288 & 0.234 & 0.278 & 0.535 & 0.027 & 0.219 & 0.015 & 0.315 \\
\midrule
Qwen2.5-7B & dense & 1.0 & 0.791 & 0.769 & 0.511 & 0.781 & 0.730 & 0.720 & 0.446 & 0.820 & 0.717 \\
 & \ftc{} & 0.8 & 0.796 & 0.797 & 0.511 & 0.780 & 0.714 & 0.690 & 0.439 & 0.780 & 0.715 \\
 & \ftc{} & 0.6 & 0.790 & 0.727 & 0.492 & 0.770 & 0.707 & 0.650 & 0.451 & 0.715 & 0.689 \\
 & \ftc{} & 0.4 & 0.785 & 0.732 & 0.486 & 0.742 & 0.676 & 0.535 & 0.395 & 0.445 & 0.659 \\
 & \ftc{} & 0.2 & 0.614 & 0.410 & 0.258 & 0.423 & 0.565 & 0.163 & 0.227 & 0.015 & 0.405 \\
 & Seq-SVD & 0.6 & 0.745 & 0.690 & 0.442 & 0.688 & 0.707 & 0.514 & 0.359 & 0.445 & 0.631 \\
 & Seq-SVD & 0.4 & 0.539 & 0.297 & 0.209 & 0.322 & 0.528 & 0.095 & 0.237 & 0.010 & 0.332 \\
 & Seq-SVD & 0.2 & 0.543 & 0.264 & 0.242 & 0.255 & 0.524 & 0.000 & 0.237 & 0.010 & 0.305 \\
 & enh-LeSTD & 0.8 & 0.787 & 0.777 & 0.497 & 0.767 & 0.709 & 0.639 & 0.426 & 0.705 & 0.696 \\
 & enh-LeSTD & 0.6 & 0.779 & 0.727 & 0.474 & 0.733 & 0.693 & 0.543 & 0.419 & 0.495 & 0.658 \\
 & enh-LeSTD & 0.4 & 0.673 & 0.533 & 0.322 & 0.551 & 0.622 & 0.240 & 0.306 & 0.015 & 0.490 \\
 & enh-LeSTD & 0.2 & 0.526 & 0.268 & 0.247 & 0.258 & 0.491 & 0.000 & 0.220 & 0.000 & 0.298 \\
 & TLLM-gen & 0.8 & 0.782 & 0.776 & 0.507 & 0.750 & 0.693 & 0.521 & 0.398 & 0.645 & 0.671 \\
 & TLLM-gen & 0.6 & 0.531 & 0.321 & 0.243 & 0.313 & 0.528 & 0.021 & 0.281 & 0.010 & 0.326 \\
 & TLLM-gen & 0.4 & 0.531 & 0.263 & 0.247 & 0.281 & 0.481 & 0.000 & 0.238 & 0.015 & 0.300 \\
 & TLLM-gen & 0.2 & 0.518 & 0.251 & 0.242 & 0.258 & 0.514 & 0.000 & 0.191 & 0.005 & 0.297 \\
 & MPIFA & 0.8 & 0.784 & 0.752 & 0.487 & 0.762 & 0.708 & 0.692 & 0.464 & 0.715 & 0.698 \\
 & MPIFA & 0.6 & 0.748 & 0.764 & 0.475 & 0.709 & 0.726 & 0.575 & 0.423 & 0.525 & 0.666 \\
 & MPIFA & 0.4 & 0.605 & 0.476 & 0.299 & 0.446 & 0.614 & 0.296 & 0.274 & 0.035 & 0.456 \\
 & MPIFA & 0.2 & 0.514 & 0.235 & 0.235 & 0.252 & 0.496 & 0.000 & 0.192 & 0.025 & 0.289 \\
 & SVD-LLM & 0.8 & 0.771 & 0.749 & 0.481 & 0.741 & 0.716 & 0.646 & 0.448 & 0.680 & 0.684 \\
 & SVD-LLM & 0.6 & 0.732 & 0.707 & 0.434 & 0.646 & 0.684 & 0.538 & 0.378 & 0.420 & 0.624 \\
 & SVD-LLM & 0.4 & 0.565 & 0.390 & 0.265 & 0.391 & 0.542 & 0.232 & 0.260 & 0.020 & 0.398 \\
 & SVD-LLM & 0.2 & 0.516 & 0.259 & 0.243 & 0.260 & 0.500 & 0.004 & 0.234 & 0.015 & 0.297 \\
 & Basis Sharing & 0.8 & 0.796 & 0.800 & 0.529 & 0.765 & 0.715 & 0.688 & 0.443 & 0.720 & 0.716 \\
 & Basis Sharing & 0.6 & 0.776 & 0.742 & 0.472 & 0.723 & 0.711 & 0.589 & 0.410 & 0.535 & 0.669 \\
 & Basis Sharing & 0.4 & 0.662 & 0.582 & 0.374 & 0.550 & 0.602 & 0.364 & 0.294 & 0.090 & 0.522 \\
 & Basis Sharing & 0.2 & 0.538 & 0.267 & 0.219 & 0.250 & 0.494 & 0.009 & 0.239 & 0.010 & 0.296 \\
 
\bottomrule
\end{tabular}
\end{table}

\begin{table}[H]
\ContinuedFloat
\centering\tiny
\caption{Complete downstream grid, GQA and hybrid models (continued).}
\setlength{\tabcolsep}{2pt}
\begin{tabular}{lllccccccccc}
\toprule
Model & Method & $\rrho$ & PIQA & ARC-e & ARC-c & HSwag & WinoG
& LAMB & MathQA & GSM8K & avg6 \\
\midrule
Qwen3-8B & dense & 1.0 & 0.786 & 0.801 & 0.561 & 0.774 & 0.724 & 0.704 & 0.562 & 0.855 & 0.725 \\
 & \ftc{} & 0.8 & 0.788 & 0.832 & 0.583 & 0.763 & 0.735 & 0.664 & 0.563 & 0.765 & 0.728 \\
 & \ftc{} & 0.6 & 0.781 & 0.793 & 0.564 & 0.750 & 0.715 & 0.640 & 0.515 & 0.735 & 0.707 \\
 & \ftc{} & 0.4 & 0.730 & 0.641 & 0.441 & 0.712 & 0.670 & 0.553 & 0.316 & 0.105 & 0.624 \\
 & \ftc{} & 0.2 & 0.690 & 0.550 & 0.324 & 0.550 & 0.582 & 0.305 & 0.247 & 0.005 & 0.500 \\
 & Seq-SVD & 0.6 & 0.765 & 0.712 & 0.487 & 0.718 & 0.699 & 0.627 & 0.423 & 0.455 & 0.668 \\
 & Seq-SVD & 0.4 & 0.707 & 0.567 & 0.373 & 0.606 & 0.642 & 0.472 & 0.284 & 0.085 & 0.561 \\
 & Seq-SVD & 0.2 & 0.557 & 0.320 & 0.209 & 0.305 & 0.534 & 0.129 & 0.245 & 0.005 & 0.342 \\
 & enh-LeSTD & 0.8 & 0.789 & 0.831 & 0.590 & 0.753 & 0.718 & 0.617 & 0.547 & 0.790 & 0.716 \\
 & enh-LeSTD & 0.6 & 0.773 & 0.760 & 0.493 & 0.684 & 0.664 & 0.489 & 0.454 & 0.540 & 0.644 \\
 & enh-LeSTD & 0.4 & 0.724 & 0.609 & 0.388 & 0.558 & 0.604 & 0.294 & 0.307 & 0.070 & 0.529 \\
 & enh-LeSTD & 0.2 & 0.555 & 0.348 & 0.238 & 0.305 & 0.514 & 0.016 & 0.215 & 0.005 & 0.329 \\
 & TLLM-gen & 0.8 & 0.777 & 0.761 & 0.499 & 0.706 & 0.700 & 0.519 & 0.468 & 0.635 & 0.660 \\
 & TLLM-gen & 0.6 & 0.749 & 0.671 & 0.444 & 0.629 & 0.647 & 0.373 & 0.393 & 0.235 & 0.586 \\
 & TLLM-gen & 0.4 & 0.663 & 0.495 & 0.320 & 0.454 & 0.550 & 0.109 & 0.286 & 0.010 & 0.432 \\
 & TLLM-gen & 0.2 & 0.532 & 0.297 & 0.215 & 0.282 & 0.500 & 0.001 & 0.210 & 0.000 & 0.304 \\
 & MPIFA & 0.8 & 0.791 & 0.773 & 0.570 & 0.780 & 0.714 & 0.626 & 0.484 & 0.695 & 0.709 \\
 & MPIFA & 0.6 & 0.773 & 0.722 & 0.505 & 0.740 & 0.714 & 0.564 & 0.450 & 0.440 & 0.670 \\
 & MPIFA & 0.4 & 0.684 & 0.531 & 0.330 & 0.536 & 0.644 & 0.406 & 0.279 & 0.080 & 0.522 \\
 & MPIFA & 0.2 & 0.554 & 0.333 & 0.217 & 0.311 & 0.533 & 0.199 & 0.225 & 0.000 & 0.358 \\
 & SVD-LLM & 0.8 & 0.777 & 0.752 & 0.529 & 0.762 & 0.715 & 0.542 & 0.436 & 0.335 & 0.679 \\
 & SVD-LLM & 0.6 & 0.750 & 0.577 & 0.416 & 0.690 & 0.688 & 0.488 & 0.373 & 0.200 & 0.601 \\
 & SVD-LLM & 0.4 & 0.624 & 0.446 & 0.255 & 0.393 & 0.555 & 0.278 & 0.246 & 0.010 & 0.425 \\
 & SVD-LLM & 0.2 & 0.520 & 0.302 & 0.214 & 0.276 & 0.511 & 0.106 & 0.226 & 0.005 & 0.322 \\
 & Basis Sharing & 0.8 & 0.774 & 0.769 & 0.566 & 0.765 & 0.716 & 0.561 & 0.458 & 0.430 & 0.692 \\
 & Basis Sharing & 0.6 & 0.775 & 0.746 & 0.531 & 0.724 & 0.707 & 0.482 & 0.406 & 0.200 & 0.661 \\
 & Basis Sharing & 0.4 & 0.737 & 0.587 & 0.402 & 0.632 & 0.638 & 0.369 & 0.330 & 0.045 & 0.561 \\
 & Basis Sharing & 0.2 & 0.544 & 0.289 & 0.207 & 0.306 & 0.529 & 0.137 & 0.239 & 0.005 & 0.335 \\
\midrule
Qwen3-32B & dense & 1.0 & 0.802 & 0.829 & 0.608 & 0.818 & 0.729 & 0.671 & 0.611 & 0.885 & 0.743 \\
 & \ftc{} & 0.6 & 0.801 & 0.815 & 0.594 & 0.811 & 0.723 & 0.656 & 0.536 & 0.770 & 0.733 \\
 & \ftc{} & 0.4 & 0.799 & 0.812 & 0.617 & 0.795 & 0.739 & 0.635 & 0.487 & 0.750 & 0.733 \\
 & \ftc{} & 0.2 & 0.782 & 0.777 & 0.556 & 0.727 & 0.696 & 0.489 & 0.359 & 0.280 & 0.671 \\
 & Seq-SVD & 0.6 & 0.784 & 0.809 & 0.607 & 0.786 & 0.728 & 0.666 & 0.483 & 0.635 & 0.730 \\
 & Seq-SVD & 0.4 & 0.768 & 0.762 & 0.536 & 0.728 & 0.714 & 0.565 & 0.380 & 0.545 & 0.679 \\
 & Seq-SVD & 0.2 & 0.582 & 0.380 & 0.253 & 0.370 & 0.530 & 0.134 & 0.233 & 0.000 & 0.375 \\
 & enh-LeSTD & 0.6 & 0.799 & 0.811 & 0.596 & 0.784 & 0.727 & 0.587 & 0.532 & 0.765 & 0.717 \\
 & enh-LeSTD & 0.4 & 0.786 & 0.784 & 0.595 & 0.751 & 0.734 & 0.441 & 0.469 & 0.510 & 0.682 \\
 & enh-LeSTD & 0.2 & 0.685 & 0.494 & 0.356 & 0.542 & 0.593 & 0.092 & 0.281 & 0.015 & 0.460 \\
 & TLLM-gen & 0.6 & 0.795 & 0.788 & 0.580 & 0.767 & 0.728 & 0.463 & 0.505 & 0.700 & 0.687 \\
 & TLLM-gen & 0.4 & 0.785 & 0.718 & 0.513 & 0.708 & 0.706 & 0.278 & 0.409 & 0.210 & 0.618 \\
 & TLLM-gen & 0.2 & 0.594 & 0.358 & 0.268 & 0.366 & 0.530 & 0.017 & 0.228 & 0.000 & 0.355 \\
 & MPIFA & 0.6 & 0.783 & 0.788 & 0.632 & 0.779 & 0.724 & 0.628 & 0.475 & 0.715 & 0.722 \\
 & MPIFA & 0.4 & 0.734 & 0.711 & 0.501 & 0.657 & 0.665 & 0.504 & 0.370 & 0.405 & 0.629 \\
 & MPIFA & 0.2 & 0.594 & 0.335 & 0.246 & 0.341 & 0.517 & 0.200 & 0.230 & 0.015 & 0.372 \\
 & SVD-LLM & 0.6 & 0.765 & 0.745 & 0.566 & 0.751 & 0.698 & 0.579 & 0.426 & 0.665 & 0.684 \\
 & SVD-LLM & 0.4 & 0.708 & 0.587 & 0.380 & 0.523 & 0.605 & 0.348 & 0.293 & 0.035 & 0.525 \\
 & SVD-LLM & 0.2 & 0.545 & 0.303 & 0.218 & 0.287 & 0.521 & 0.135 & 0.242 & 0.000 & 0.335 \\
 & Basis Sharing & 0.6 & 0.781 & 0.753 & 0.579 & 0.771 & 0.723 & 0.600 & 0.479 & 0.600 & 0.701 \\
 & Basis Sharing & 0.4 & 0.761 & 0.741 & 0.534 & 0.720 & 0.683 & 0.504 & 0.358 & 0.440 & 0.657 \\
 & Basis Sharing & 0.2 & 0.601 & 0.363 & 0.242 & 0.343 & 0.501 & 0.163 & 0.241 & 0.010 & 0.369 \\
\midrule
Qwen3.5-9B & dense & 1.0 & 0.794 & 0.776 & 0.565 & 0.770 & 0.734 & 0.696 & 0.553 & 0.885 & 0.723 \\
 & \ftc{} & 0.8 & 0.794 & 0.833 & 0.602 & 0.748 & 0.716 & 0.671 & 0.512 & 0.785 & 0.727 \\
 & \ftc{} & 0.6 & 0.755 & 0.822 & 0.550 & 0.681 & 0.705 & 0.625 & 0.421 & 0.580 & 0.690 \\
\bottomrule
\end{tabular}
\end{table}

\subsection{Hyperparameter ablations}
\label{app:hyper}

\paragraph{Ablation protocol and additional design studies.}
Table~\ref{tab:ablation} reports one-at-a-time ablations of the final
recipe on all four GQA models at $\rrho\in\{0.6,0.2\}$. The recipe uses
fixed head-concat tensorization, budget-coupled allocation, $a=0.5$,
$\alpha=0.02$, two refits, 256 calibration samples, slice-wise
normalization, no QKV teacher map, and a separate teacher-calibrated
$W_O$ target. Each variant changes one component while keeping the
remaining configuration and compression budget fixed. ``K/V repeat-expand''
replaces head-concat by naive GQA$\to$MHA replication, while ``dense core''
uses a dense core under the same parameter budget. Additional
development-stage studies below examine design choices not isolated by the
main grid.

\paragraph{QKV teacher-map regularization.}
Table~\ref{tab:qkv_teacher_map} shows that disabling the QKV teacher map consistently gives the lowest perplexity at $\rho=0.2$. Among the mapped variants, $\mu=3$ performs best on Mistral-7B, Llama-3-8B, and Qwen3-8B,
while Qwen2.5-7B favors $\mu=10$. Even at these best mapped settings, perplexity remains $2.1$--$11.3\%$ higher than the no-map configuration. This supports using the original QKV weights as the compression target in the final recipe, without an additional teacher-output map.
\begin{table}[H]
\centering\scriptsize
\caption{Sensitivity to QKV teacher-map regularization at $\rrho=0.2$.}
\label{tab:qkv_teacher_map}
\begin{tabular}{lccccccc}\toprule
$\rrho{=}0.2$, wt2 ppl  & $\mu{=}0.001$ & $0.03$ & $0.3$ & $1$ & $3$ & $10$ & no QKV map (final)\\ \midrule
Mistral-7B
& 8.10 & 8.13 & 8.04 & 7.85 & 7.64 & 7.91 & 7.33 \\
Llama-3-8B
& 12.40 & 12.35 & 11.87 & 11.32 & 11.18 & 11.37 & 10.44 \\
Qwen2.5-7B
& 16.52 & 16.80 & 17.42 & 16.84 & 14.72 & 13.97 & 12.55 \\
Qwen3-8B
& 12.26 & 12.09 & 11.38 & 11.07 & 10.93 & 11.61 & 10.70 \\ 
\bottomrule\end{tabular}
\end{table}

\paragraph{Rank allocation.}
Replacing the budget-coupled allocator with fixed Tucker ranks increases
perplexity by $3.7$--$6.2\%$ at $\rho=0.2$. The allocator therefore
provides a consistent but moderate gain, complementing the larger benefits
from the activation-induced metric, joint tensorization, and sequential
calibration.

\paragraph{Output-projection metric.}
Reusing the QKV activation metric for $W_O$ causes severe degradation,
increasing perplexity by $55$--$708\%$ at $\rho=0.6$ and by more than
$4000\%$ at $\rho=0.2$. This shows that $W_O$ requires a metric constructed
from its own attention-output representations rather than reusing the
input-side QKV metric.

\begin{table}[H]
\center\scriptsize
\caption{Perplexity ablations for rank allocation and the output-projection metric.}
\begin{tabular}{lllccc}\toprule
Model & $\rrho$ & Variant & variant & full & $\Delta$\\ \midrule
Mistral-7B
& 0.2 & fixed $R_1{=}640,R_2{=}128$
& 7.786 & 7.33 & $+6.2\%$ \\
Llama-3-8B
& 0.2 & fixed $R_1{=}640,R_2{=}128$
& 10.975 & 10.44 & $+5.1\%$ \\
Qwen3-8B
& 0.2 & fixed $R_1{=}640,R_2{=}128$
& 11.100 & 10.70 & $+3.7\%$ \\
\midrule
Mistral-7B
& 0.6 & $W_O$ with QKV metric
& 8.456 & 5.44 & $+55.4\%$ \\
Mistral-7B
& 0.2 & $W_O$ with QKV metric
& 303.397 & 7.33 & $+4039.1\%$ \\
Llama-3-8B
& 0.6 & $W_O$ with QKV metric
& 53.677 & 6.64 & $+708.4\%$ \\
Llama-3-8B
& 0.2 & $W_O$ with QKV metric
& 606.211 & 10.44 & $+5706.6\%$ \\
Qwen3-8B
& 0.6 & $W_O$ with QKV metric
& 12.865 & 7.36 & $+74.8\%$ \\
Qwen3-8B
& 0.2 & $W_O$ with QKV metric
& 454.125 & 10.70 & $+4144.2\%$ \\ 
\bottomrule
\end{tabular}
\end{table}

\begin{wraptable}{r}{0.46\columnwidth}
\vspace{-0.7em}
\centering
\caption{Perplexity sensitivity to one versus two refit iterations.}
\label{tab:refit_iterations}
\vspace{-0.5em}
\scriptsize
\setlength{\tabcolsep}{3.0pt}
\renewcommand{\arraystretch}{0.90}
\begin{tabular}{@{}lccc@{}}
\toprule
Model & Variant & Full & $\Delta$ \\
\midrule
Llama-3-8B  & 6.655 & 6.613 & 6.607 \\
Qwen3-8B    & 7.855 & 7.764 & 7.759 \\
Qwen2.5-7B  & 7.322 & 7.312 & 7.308 \\
\bottomrule
\end{tabular}
\vspace{-0.8em}
\end{wraptable}

\paragraph{Refit iterations.}
Table \ref{tab:refit_iterations} shows that one refit captures nearly all of the improvement, while a second refit
changes perplexity only marginally in the evaluated settings. We therefore
retain two refits in the final recipe.

\paragraph{Anchoring.}
The final recipe mixes the teacher Gram into the $W_O$ statistics with
$a=0.5$, while leaving the QKV statistics unanchored. As shown in
Table~\ref{tab:ablation}, anchoring has little effect at $\rrho=0.6$:
removing it changes perplexity by only $-0.6\%$ to $+0.8\%$ across the four
GQA models. Its contribution becomes more pronounced at $\rrho=0.2$, where
removing the anchor increases perplexity by $1.5\%$ on Mistral-7B,
$4.1\%$ on Llama-3-8B, $14.5\%$ on Qwen2.5-7B, and $6.1\%$ on Qwen3-8B.
The effect is therefore mainly visible under aggressive compression, with
Qwen2.5-7B showing the largest sensitivity.

\paragraph{Slice-wise normalization.}
The effect of slice-wise normalization is strongly budget-dependent.
As shown in Table~\ref{tab:ablation}, removing the slice-wise norm has only
a limited and model-dependent effect at $\rrho=0.6$, changing perplexity by
$-0.3\%$ to $+5.5\%$ across the four GQA models. At $\rrho=0.2$, however,
the degradation becomes substantially larger: $27.2\%$ on Mistral-7B,
$89.4\%$ on Llama-3-8B, $492.4\%$ on Qwen2.5-7B, and $11.8\%$ on
Qwen3-8B. This indicates that slice-wise normalization becomes increasingly
important under aggressive compression, where a tighter core budget makes
the decomposition more sensitive to energy imbalance across head slices.

\subsection{Uniform recipe, calibration source, seeds, structured core}
\label{app:uniform}

\paragraph{The final recipe at every ratio.}
Table~\ref{tab:uniform} reports the fixed recipe of
Section~\ref{sec:experiments} (allocator, $a=0.5$, no QKV teacher map,
$W_O$ map with $\mu=1$, $\alpha=0.02$, two refits, 256 calibration
windows, and slice-wise norm) at all four compression ratios. The same
recipe is used across models and ratios without model-specific tuning.
\begin{table}[H]
\centering\scriptsize
\caption{The final recipe on the six primary evaluation models at all four keep ratios. WikiText-2 perplexity, avg6, and GSM8K accuracy. GSM8K was not evaluated for GPT-J-6B and
Llama-2-13B at $\rrho=0.2$.  }
\label{tab:uniform}
\begin{tabular}{l|ccc|ccc|ccc|ccc}\toprule
& \multicolumn{3}{c|}{$\rho= 0.8$} & \multicolumn{3}{c|}{$\rho= 0.6$} & \multicolumn{3}{c|}{$\rho= 0.4$} & \multicolumn{3}{c}{$\rho= 0.2$}\\
Model & ppl & avg6 & GSM8K & ppl & avg6 & GSM8K & ppl & avg6 & GSM8K & ppl & avg6 & GSM8K\\ \midrule
GPT-J-6B & 9.04 & 0.613 & 0.010 & 9.30 & 0.598 & 0.005 & 10.03 & 0.564 & 0.025 & 13.60 & 0.448 & n/a \\
Llama-2-13B & 4.98 & 0.715 & 0.190 & 5.11 & 0.703 & 0.115 & 5.52 & 0.660 & 0.030 & 7.52 & 0.497 & n/a \\
Mistral-7B & 5.36 & 0.733 & 0.235 & 5.44 & 0.726 & 0.240 & 5.70 & 0.689 & 0.105 & 7.33 & 0.512 & 0.015 \\
Llama-3-8B & 6.36 & 0.729 & 0.395 & 6.64 & 0.716 & 0.290 & 7.31 & 0.668 & 0.100 & 10.44 & 0.488 & 0.020 \\
Qwen2.5-7B & 6.99 & 0.715 & 0.780 & 7.20 & 0.689 & 0.715 & 7.84 & 0.659 & 0.445 & 12.55 & 0.405 & 0.015 \\
Qwen3-8B & 7.12 & 0.728 & 0.765 & 7.36 & 0.707 & 0.735 & 8.48 & 0.624 & 0.105 & 10.70 & 0.500 & 0.005 \\ 
\bottomrule\end{tabular}
\end{table}

\paragraph{Whitening metric and damping.}
We use
$H_{\epsilon, l}=C_{s,l}+\epsilon\bar{t}_{l} I$, where
$\bar{t} =\frac{\operatorname{tr}(C_{s,l})}{d_{\mathrm{model}}}$., with
$\epsilon=10^{-4}$ throughout.
Table~\ref{tab:damping} examines sensitivity to this damping
parameter. At $\rrho=0.2$, increasing $\epsilon$ to $10^{-2}$ raises
perplexity by only $1.4$--$2.7\%$, whereas the heavily damped setting
$\epsilon=1$ incurs $11.7$--$24.8\%$ higher perplexity across the three
models. At $\rrho=0.6$, the effect is much smaller and model-dependent,
with changes ranging from $-1.5\%$ to $+3.0\%$.
These results indicate that aggressive compression benefits from preserving
the original functional geometry rather than strongly damping the whitening
metric. We therefore fix $\epsilon=10^{-4}$ across all models and
compression ratios.
\begin{table}[H]
\centering\scriptsize
\caption{Sensitivity to whitening damping $\epsilon$ in
$H_{\epsilon, l}=C_{s,l}+\epsilon\bar{t}_{l} I$, where
$\bar{t} =\frac{\operatorname{tr}(C_{s,l})}{d_{\mathrm{model}}}$. The recipe uses
$\epsilon=10^{-4}$.}
\label{tab:damping}
\begin{tabular}{llccc}\toprule
Model & $\rrho$ & $\varepsilon{=}10^{-4}$ (recipe) & $\varepsilon{=}10^{-2}$ & $\varepsilon{=}1$\\ \midrule
Llama-3-8B & .6 & 6.64 & 6.63 ($-0.1\%$) & 6.54 ($-1.5\%$) \\
Llama-3-8B & .2 & 10.44 & 10.59 ($+1.4\%$) & 12.21 ($+17.0\%$) \\
Mistral-7B & .6 & 5.44 & 5.44 ($-0.0\%$) & 5.45 ($+0.2\%$) \\
Mistral-7B & .2 & 7.33 & 7.43 ($+1.4\%$) & 8.19 ($+11.7\%$) \\
Qwen3-8B   & .6 & 7.36 & 7.39 ($+0.4\%$) & 7.58 ($+3.0\%$) \\
Qwen3-8B   & .2 & 10.70 & 10.99 ($+2.7\%$) & 13.36 ($+24.8\%$) \\ 
\bottomrule\end{tabular}
\end{table}

\paragraph{Whitening stability guard.}
The default whitening metric uses $\epsilon=10^{-4}$. To ensure numerical
stability, we increase $\epsilon$ by factors of ten whenever the de-whitened
reconstruction is non-finite or its norm exceeds $10\times$ that of the
original weight. The same guard is applied uniformly to all configurations.
It is never activated at $\rrho\geq0.4$, and no run falls back to the
identity metric or leaves any layer uncompressed.

At $\rrho=0.2$, the guard is triggered only for a subset of layers
(Table~\ref{tab:escalation}). The most sensitive case is Qwen2.5-7B: under the
final no-QKV-map recipe, 9 of 28 layers require increased damping, with 3
reaching $\epsilon=1$. The escalation frequency also changes with the QKV
target, indicating that different compression choices can alter the
conditioning of the activation statistics seen by subsequent layers.
Together with the damping sweep in Table~\ref{tab:damping}, where
larger $\epsilon$ generally degrades performance at this budget, these
results show that the guard serves as a numerical safeguard rather than a
mechanism for improving compression quality.

\begin{table}[H]
\centering\scriptsize
\caption{Layers whose whitening damping had to be escalated above
$\varepsilon{=}10^{-4}$ at $\rrho{=}0.2$, and how many of those ended at the
ridge metric $\varepsilon{=}1$. Parsed from the run logs. At $\rrho\ge0.4$
all counts are zero on all four models.}
\label{tab:escalation}
\begin{tabular}{llcc}\toprule
Model & QKV target & layers escalated & reaching $\varepsilon{=}1$\\ \midrule
Mistral-7B & no QKV map (final) & 4/32 & 0 \\
Mistral-7B & $\mu{=}1$ map & 5/32 & 1 \\
Llama-3-8B & no QKV map (final) & 2/32 & 0 \\
Llama-3-8B & $\mu{=}1$ map & 2/32 & 0 \\
Qwen2.5-7B & no QKV map (final) & 9/28 & 3 \\
Qwen2.5-7B & $\mu{=}1$ map & 5/28 & 0 \\
Qwen3-8B & no QKV map (final) & 1/36 & 0 \\
Qwen3-8B & $\mu{=}1$ map & 7/36 & 1 \\ 
\bottomrule\end{tabular}
\end{table}

\paragraph{Calibration-set size.}
The data-dependent baselines use 128 calibration sequences following
their published configurations, whereas FTC uses 256 sequences in the
main experiments. Table~\ref{tab:calibsize} evaluates the sensitivity
of FTC to this difference. At $\rho=0.6$, reducing the calibration set
from 256 to 128 windows changes perplexity by only $-0.1$--$+0.3\%$
across all four models. At $\rho=0.2$, the effect remains modest on
Mistral-7B and Llama-3-8B ($+2.3\%$ and $+3.7\%$, respectively), and is
negligible on Qwen3-8B ($-0.2\%$). Qwen2.5-7B is a notable exception:
its perplexity increases from $12.55$ to $43.90$ when the calibration
set is reduced to 128 windows. These results indicate that calibration
size has little effect in most settings, but can become important for
particular models under aggressive compression.
\begin{table}[H]
\centering\scriptsize
\caption{
 Sensitivity of \ftc{} to calibration-set size. We compare the
same final recipe using 256 versus 128 WikiText-2 calibration windows
of 2,048 tokens.}
\label{tab:calibsize}
\begin{tabular}{llccc}\toprule
Model & $\rrho$ & 256 windows & 128 windows & $\Delta$\\ \midrule
Llama-3-8B & 0.6 & 6.640 & 6.660 & $+0.3\%$ \\
Llama-3-8B & 0.2 & 10.440 & 10.821 & $+3.7\%$ \\
Mistral-7B & 0.6 & 5.440 & 5.447 & $+0.1\%$ \\
Mistral-7B & 0.2 & 7.330 & 7.499 & $+2.3\%$ \\
Qwen2.5-7B & 0.6 & 7.200 & 7.192 & -0.1\% \\
Qwen2.5-7B & 0.2 & 12.550 & 43.901 & +249.8\% \\
Qwen3-8B & 0.6 & 7.360 & 7.352 & -0.1\% \\
Qwen3-8B & 0.2 & 10.700 & 10.678 & -0.2\% \\
 
\bottomrule\end{tabular}
\end{table}

\paragraph{Structured (2:4) core.}
Table~\ref{tab:p24} evaluates a hardware-aligned 2:4 core against the
default unstructured sparse core under the corresponding realised budget.
At moderate compression, the additional quality cost of the structured
pattern is small: the perplexity increase is only $0.3$--$1.3\%$ at
$\rho=0.6$ and $0.8$--$2.0\%$ at $\rho=0.4$ across the four models.
The effect becomes substantially more model-dependent at $\rho=0.2$:
perplexity increases by $11.3\%$ on Qwen3-8B, $16.5\%$ on Llama-3-8B,
$17.9\%$ on Mistral-7B, and $366.2\%$ on Qwen2.5-7B. In particular,
Qwen2.5-7B increases from $12.55$ to $58.50$ perplexity under the 2:4
constraint, whereas its unstructured FTC checkpoint at the same nominal
budget is $1.83\times$ the dense perplexity ($12.55$ vs.\ $6.85$).
Thus, the 2:4 structure is nearly cost-free at moderate compression but
can impose a substantial and highly model-dependent penalty under the
tightest budget.

\begin{table}[H]
\centering\scriptsize
\caption{Unstructured vs.\ 2:4 core at equal $\rrho$. Both columns are the
final recipe at a realised $\rrho$, differing only in the core structure.}
\label{tab:p24}
\begin{tabular}{@{}lcccc@{}}
\toprule
Model & $\rrho$ & Unstructured core & 2:4 core & $\Delta_{\rm ppl}$ \\
\midrule
Mistral-7B & 0.6 & 5.44 & 5.47 & $+0.6\%$ \\
Mistral-7B & 0.4 & 5.70 & 5.76 & $+1.1\%$ \\
Mistral-7B & 0.2 & 7.33 & 8.64 & $+17.9\%$ \\

Llama-3-8B & 0.6 & 6.64 & 6.72 & $+1.2\%$ \\
Llama-3-8B & 0.4 & 7.31 & 7.46 & $+2.0\%$ \\
Llama-3-8B & 0.2 & 10.44 & 12.16 & $+16.5\%$ \\

Qwen2.5-7B & 0.6 & 7.20 & 7.22 & $+0.3\%$ \\
Qwen2.5-7B & 0.4 & 7.84 & 7.90 & $+0.8\%$ \\
Qwen2.5-7B & 0.2 & 12.55 & 58.50 & $+366.2\%$ \\

Qwen3-8B & 0.6 & 7.36 & 7.46 & $+1.3\%$ \\
Qwen3-8B & 0.4 & 8.48 & 8.56 & $+0.9\%$ \\
Qwen3-8B & 0.2 & 10.70 & 11.91 & $+11.3\%$ \\ 
\bottomrule\end{tabular}
\end{table}

\paragraph{Calibration-source sensitivity.}
Table~\ref{tab:calib-c4-baselines}  compares WikiText-2 and C4 calibration for the data-dependent methods while keeping each method's calibration budget fixed. C4 calibration consistently improves C4 perplexity, while WikiText-2 perplexity generally increases. The effect is mild at
$\rrho=0.6$ but becomes substantially larger at $\rrho=0.2$, indicating greater sensitivity to the calibration distribution under aggressive compression. PTB shows a less uniform trend. Qwen2.5-7B is a notable
exception at $\rrho=0.2$, where the calibration source also affects numerical stability. We therefore use WikiText-2 as the calibration source for all main data-dependent comparisons and analyze calibration-set size
separately.

\begin{table}[H]
\centering\scriptsize
\caption{Data-dependent methods under WikiText-2 and C4 calibration.}
\label{tab:calib-c4-baselines}
\begin{tabular}{llll|ccc}\toprule
Model & $\rrho$ & Method & Calibration & wt2 & PTB & C4\\ \midrule
Mistral-7B & 0.6 & SVD-LLM & WikiText-2 & 5.92 & 48.17 & 10.76 \\
Mistral-7B & 0.6 & SVD-LLM & C4 & 6.59 & 46.76 & 9.74 \\
Mistral-7B & 0.6 & MPIFA & WikiText-2 & 5.63 & 41.47 & 9.78 \\
Mistral-7B & 0.6 & MPIFA & C4 & 6.01 & 41.09 & 9.17 \\
Mistral-7B & 0.6 & Basis & WikiText-2 & 5.75 & 46.79 & 10.11 \\
Mistral-7B & 0.6 & Basis & C4 & 6.21 & 46.32 & 9.38 \\
Mistral-7B & 0.6 & \ftc{} & WikiText-2 & 5.44 & 43.16 & 8.91 \\
Mistral-7B & 0.6 & \ftc{} & C4 & 5.58 & 39.23 & 8.77 \\

\midrule
Mistral-7B & 0.2 & SVD-LLM & WikiText-2 & 25.40 & 479.05 & 82.38 \\
Mistral-7B & 0.2 & SVD-LLM & C4 & 81.34 & 564.41 & 55.14 \\
Mistral-7B & 0.2 & MPIFA & WikiText-2 & 15.92 & 299.66 & 51.15 \\
Mistral-7B & 0.2 & MPIFA & C4 & 45.94 & 274.50 & 30.73 \\
Mistral-7B & 0.2 & Basis & WikiText-2 & 16.18 & 350.76 & 54.43 \\
Mistral-7B & 0.2 & Basis & C4 & 48.04 & 285.01 & 33.74 \\
Mistral-7B & 0.2 & \ftc{} & WikiText-2 & 7.33 & 114.66 & 15.91 \\
Mistral-7B & 0.2 & \ftc{} & C4 & 12.41 & 99.54 & 12.19 \\

\midrule
Llama-3-8B & 0.6 & SVD-LLM & WikiText-2 & 8.02 & 17.51 & 38.23 \\
Llama-3-8B & 0.6 & SVD-LLM & C4 & 10.48 & 17.73 & 13.50 \\
Llama-3-8B & 0.6 & MPIFA & WikiText-2 & 7.26 & 14.70 & 20.53 \\
Llama-3-8B & 0.6 & MPIFA & C4 & 8.16 & 14.38 & 11.37 \\
Llama-3-8B & 0.6 & Basis & WikiText-2 & 7.60 & 15.92 & 15.03 \\
Llama-3-8B & 0.6 & Basis & C4 & 9.25 & 15.83 & 12.49 \\
Llama-3-8B & 0.6 & \ftc{} & WikiText-2 & 6.64 & 12.44 & 11.01 \\
Llama-3-8B & 0.6 & \ftc{} & C4 & 6.93 & 12.25 & 10.31 \\

\midrule
Llama-3-8B & 0.2 & SVD-LLM & WikiText-2 & 48.09 & 264.41 & 253.66 \\
Llama-3-8B & 0.2 & SVD-LLM & C4 & 212.86 & 305.31 & 117.94 \\
Llama-3-8B & 0.2 & MPIFA & WikiText-2 & 27.02 & 128.53 & 132.05 \\
Llama-3-8B & 0.2 & MPIFA & C4 & 92.45 & 107.70 & 51.89 \\
Llama-3-8B & 0.2 & Basis & WikiText-2 & 35.60 & 217.80 & 180.43 \\
Llama-3-8B & 0.2 & Basis & C4 & 176.36 & 239.78 & 106.65 \\
Llama-3-8B & 0.2 & \ftc{} & WikiText-2 & 10.44 & 50.71 & 29.18 \\
Llama-3-8B & 0.2 & \ftc{} & C4 & 23.77 & 43.20 & 19.42 \\

\midrule
Qwen2.5-7B & 0.6 & SVD-LLM & WikiText-2 & 8.56 & 19.39 & 22.11 \\
Qwen2.5-7B & 0.6 & SVD-LLM & C4 & 12.40 & 17.56 & 15.38 \\
Qwen2.5-7B & 0.6 & MPIFA & WikiText-2 & 7.84 & 16.93 & 19.00 \\
Qwen2.5-7B & 0.6 & MPIFA & C4 & 9.63 & 15.70 & 13.70 \\
Qwen2.5-7B & 0.6 & Basis & WikiText-2 & 7.82 & 16.79 & 16.06 \\
Qwen2.5-7B & 0.6 & Basis & C4 & 8.89 & 15.77 & 13.54 \\
Qwen2.5-7B & 0.6 & \ftc{} & WikiText-2 & 7.20 & 14.26 & 12.50 \\
Qwen2.5-7B & 0.6 & \ftc{} & C4 & 7.55 & 14.00 & 12.23 \\

\midrule
Qwen2.5-7B & 0.2 & SVD-LLM & WikiText-2 & 255.23 & 3251 & 1406 \\
Qwen2.5-7B & 0.2 & SVD-LLM & C4 & 1020 & 2547 & 450.36 \\
Qwen2.5-7B & 0.2 & MPIFA & WikiText-2 & 7791 & 55759 & 26390 \\
Qwen2.5-7B & 0.2 & MPIFA & C4 & 341.05 & 688.78 & 161.15 \\
Qwen2.5-7B & 0.2 & Basis & WikiText-2 & 140.65 & 1760 & 786.30 \\
Qwen2.5-7B & 0.2 & Basis & C4 & 854.58 & 1152 & 315.74 \\
Qwen2.5-7B & 0.2 & \ftc{} & WikiText-2 & 12.55 & 89.42 & 73.37 \\
Qwen2.5-7B & 0.2 & \ftc{} & C4 & \multicolumn{3}{c}{numerically unstable$^\dagger$} \\

\midrule
Qwen3-8B & 0.6 & SVD-LLM & WikiText-2 & 9.24 & 19.69 & 17.61 \\
Qwen3-8B & 0.6 & SVD-LLM & C4 & 10.45 & 20.15 & 16.39 \\
Qwen3-8B & 0.6 & MPIFA & WikiText-2 & 8.17 & 16.89 & 15.18 \\
Qwen3-8B & 0.6 & MPIFA & C4 & 8.72 & 16.15 & 13.83 \\
Qwen3-8B & 0.6 & Basis & WikiText-2 & 9.18 & 18.26 & 16.70 \\
Qwen3-8B & 0.6 & Basis & C4 & 9.83 & 17.78 & 15.35 \\
Qwen3-8B & 0.6 & \ftc{} & WikiText-2 & 7.36 & 13.58 & 12.72 \\
Qwen3-8B & 0.6 & \ftc{} & C4 & 7.58 & 13.42 & 12.46 \\

\midrule
Qwen3-8B & 0.2 & SVD-LLM & WikiText-2 & 75.77 & 806.93 & 249.44 \\
Qwen3-8B & 0.2 & SVD-LLM & C4 & 185.24 & 497.86 & 145.08 \\
Qwen3-8B & 0.2 & MPIFA & WikiText-2 & 22.23 & 103.14 & 64.87 \\
Qwen3-8B & 0.2 & MPIFA & C4 & 52.09 & 83.33 & 43.36 \\
Qwen3-8B & 0.2 & Basis & WikiText-2 & 46.68 & 204.73 & 133.10 \\
Qwen3-8B & 0.2 & Basis & C4 & 117.24 & 171.98 & 104.44 \\
Qwen3-8B & 0.2 & \ftc{} & WikiText-2 & 10.70 & 33.02 & 23.73 \\
Qwen3-8B & 0.2 & \ftc{} & C4 & 15.72 & 26.82 & 18.87 \\
 
\bottomrule\end{tabular}
\end{table}

\subsection{Where drift accumulates along depth}
\label{app:depth}

\begin{figure}[H]
\centering
\includegraphics[width=\linewidth]{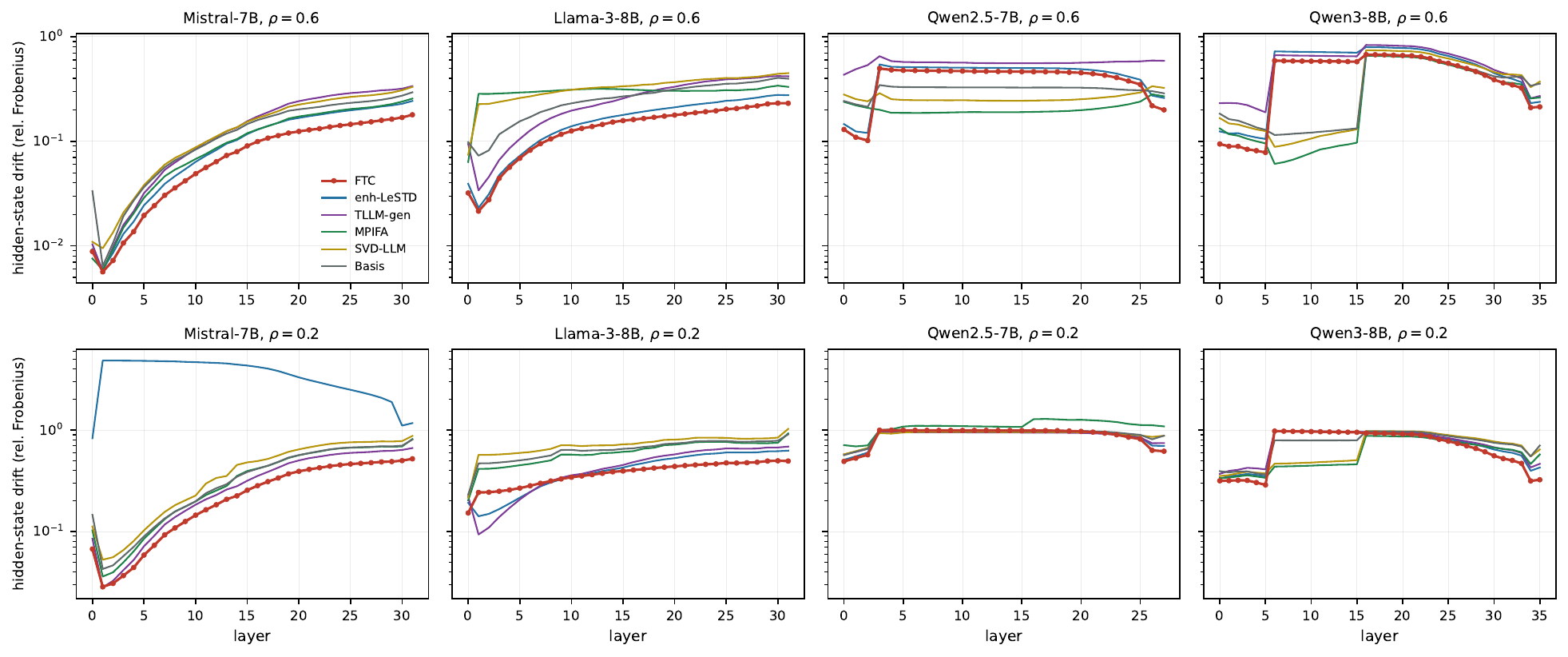}
\caption{
Layer-wise relative Frobenius drift of the residual stream between compressed
and dense models on 16 sequences disjoint from the calibration set. We compare \ftc{} with five representative methods on four GQA models at $\rrho\in\{0.6,0.2\}$.
Lower is better. \ftc{} generally maintains smaller drift across depth,
particularly under aggressive compression on Mistral-7B and Llama-3-8B.
On the Qwen models, the metric becomes less discriminative through the
middle layers at $\rrho=0.2$, with clearer separation near the output.
}
\label{fig:depth}
\end{figure}

\subsection{Robustness and Sensitivity}
\begin{wraptable}{r}{0.48\columnwidth}
\vspace{-0.6em}
\centering
\caption{Calibration-seed robustness of \ftc{}. Mean$\pm$std over three
calibration sets (seeds $\{42,1,2\}$).}
\label{tab:seed}
\vspace{-0.5em}
\scriptsize
\setlength{\tabcolsep}{2.2pt}
\renewcommand{\arraystretch}{0.88}
\begin{tabular}{@{}lccc@{}}
\toprule
Model & WT2 ($\rrho=.6$) & WT2 ($\rrho=.2$) & avg6 ($\rrho=.2$) \\
\midrule
Mistral-7B
& $5.44\pm0.00$
& $7.30\pm0.02$
& $0.511\pm0.005$ \\
Llama-3-8B
& $6.65\pm0.00$
& $10.45\pm0.03$
& $0.493\pm0.003$ \\
Qwen2.5-7B
& $7.20\pm0.01$
& $12.65\pm0.07$
& $0.405\pm0.002$ \\
Qwen3-8B
& $7.39\pm0.03$
& $10.70\pm0.02$
& $0.494\pm0.008$ \\
\bottomrule
\end{tabular}
\vspace{-0.8em}
\end{wraptable}

To assess the sensitivity of \ftc{} to calibration sampling, we repeat the
compression procedure with three independently sampled calibration sets.
Table~\ref{tab:seed} shows that perplexity is highly stable across all four
GQA models: the standard deviation is at most $0.03$ ppl in all displayed
settings except Qwen2.5-7B at the most aggressive budget, where it remains
$0.07$ ppl. Downstream avg6 is similarly stable at $\rrho=0.2$, with
standard deviations between $0.002$ and $0.008$. These results indicate limited sensitivity to calibration sampling across the tested seeds.

\section{Implementation and Optimization Details}
\label{app:implementation}

\subsection{Overall Compression Algorithm}
\label{app:overall_algorithm}

Figure~\ref{fig:workflow} provides a conceptual overview of the proposed framework, while Algorithm~\ref{alg:ftc} summarizes its complete layerwise implementation. The compression proceeds sequentially through the attention layers of the model. At each layer, activations from the
current student model  are  used to construct the activation-induced metric for joint Q/K/V compression. The resulting Q/K/V projections are inserted into the student model before the attention output projection  $W_{O,l}$ is processed,
so that its compression is based on the representations produced by the updated student attention branch. After replacing the corresponding projection weights with their compressed counterparts, the updated student model is used to generate the activations required for compressing the next layer.

After the Q/K/V projections of layer $l$ have been compressed, let $Z_l^{S}$ denote the input to the attention output projection $W_{O,l}$ produced by the updated student attention branch, and let $Z_l^{T}$ denote the corresponding input produced by the pretrained teacher model. These representations are used only for the separate compression of
$W_{O,l}$. The details of the activation-induced metric, shared Tucker representation, and budget-coupled rank--sparsity allocation are provided in Appendices~C--E, respectively, while sparse-core refinement and output-projection compression are detailed below.

\begin{algorithm}[t]
	\caption{Sequential Function-Driven Structured Compression}
	\label{alg:ftc}
	\begin{algorithmic}[1]
		\Require Pretrained teacher model $f^{T}$,
		calibration set $\mathcal D_{\mathrm{cal}}$,
		layerwise Q/K/V budgets $\{B_l\}_{l=1}^{L}$,
		and candidate rank sets $\{\mathcal R_l\}_{l=1}^{L}$.
		\Ensure Compressed student model $f^{S}$

		\State Initialize the student model as $f^{S}\leftarrow f^{T}$.

		\For{$l=1,\ldots,L$}

			\State Collect the current student inputs $X_{l}^{S}$ to attention layer $l$ using $\mathcal{D}_{\mathrm{cal}}$.

			\State Construct the activation-induced metric
			$H_{\varepsilon,l}$ from $X_l^{S}$

			\State Form the joint head-wise Q/K/V tensor
			$\mathcal{W}_{l}$ from
			$W_{Q,l}$, $W_{K,l}$, and $W_{V,l}$.

			\State Apply metric whitening and frontal-slice normalization to obtain the tensor used for structured approximation.
            
            \State Compute an HOSVD-based shared Tucker initialization for evaluating the candidate rank pairs.

            \For{each $\mathbf R_l=(R_{1,l},R_{2,l})
				\in\mathcal R_l$}
				\State Determine the admissible core cardinality $K_l(\mathbf{R}_{l})$ under budget $B_{l}$.
				\State Evaluate the corresponding projection core and allocation score $J_{l}(\mathbf{R}_{l})$.
			\EndFor

			\State Fix the Tucker factors at the selected rank $(R_{1,l}^{\star},R_{2,l}^{\star})$ and recompute the corresponding dense projection core.

			\State Gradually prune the projection core and perform support-constrained HOOI-style refinement until the target cardinality $K_{l}^\star$ is reached.

			\State Map the compressed Tucker representation back to the original weight coordinates and recover
			$\widehat{W}_{Q,l}$, $\widehat{W}_{K,l}$, and
			$\widehat{W}_{V,l}$

            \State Run the calibration sequences through the updated student attention branch to collect $Z_l^{S}$, and collect the corresponding
            teacher representation $Z_l^{T}$ from $f^{T}$.
            
            \State Construct the output-projection activation-induced metric from $Z_{l}^{S}$ and $Z_{l}^{T}$.
            
            \State Construct the calibrated target from $(Z_{l}^{S},Z_{l}^{T})$ when enabled.
            
            \State Compress $W_{O,l}$ separately at the prescribed compression ratio to obtain $\widehat{W}_{O,l}$.
            
            \State Replace $W_{O,l}$ by $\widehat{W}_{O,l}$ in $f^{S}$.
            \State Propagate the updated student to provide the representations used
            at layer $l+1$.
		\EndFor

		\State \Return $f^{S}$
	\end{algorithmic}
\end{algorithm}

\subsection{Gradual Pruning and Sparse-Core Refinement}
\label{app:core_refinement}
Once the rank--sparsity allocator selects
$(R_{1,l}^{\star},R_{2,l}^{\star},K_l^{\star})$, we further refine the corresponding Tucker representation before mapping it back to the original weight coordinates. The candidate Tucker subspaces used in the allocation stage are initialized by HOSVD, which provides a shared low-dimensional basis for evaluating different rank pairs. The one-step top-$K$ projection used by the allocator
serves only as a surrogate for estimating the sparsification cost of each candidate rank pair. The final sparse core is instead obtained progressively, allowing the surviving Tucker representation to adapt after each reduction in core cardinality.

Let $\overline{\mathcal G}_{l}^{(0)}$ denote the dense projection core
of the selected Tucker ranks in the balanced coordinates introduced in
Section~\ref{sec:budget_rank_sparsity}. We gradually reduce its number of active coefficients according to a decreasing sequence
\begin{equation*}
    K_{l}^{(0)} > K_{l}^{(1)}>
    \cdots > K_{l}^{(T)} = K_{l}^{\star}.
\end{equation*}

\paragraph{Gradual support reduction.}
The cardinality sequence is controlled by a  gradual core-pruning rate
$\alpha\in(0,1]$:
\begin{equation*}
    K_{l}^{(t+1)}  = \max\left\{
        K_{l}^{(t)} -  \left\lceil \alpha K_{l}^{(t)}\right\rceil,
        \, K_{l}^\star \right\}.
\end{equation*}
At stage $t$, we rank the currently active core coefficients by magnitude
and retain the $K_l^{(t+1)}$ largest ones. Denoting the resulting support
by $\Omega_l^{(t+1)}$, we obtain the nested sequence
\begin{equation*}
    \Omega_l^{(t+1)} \subseteq \Omega_l^{(t)},
    \qquad  |\Omega_l^{(t)}| = K_l^{(t)}.
\end{equation*}
For $0<\alpha<1$, the number of pruning stages is approximately
\begin{equation*}
    T  \approx
    \frac{ \log(K_{l}^\star/K_{l}^{(0)})}{\log(1-\alpha) }.
\end{equation*}
We use $\alpha=0.02$ in the main experiments, so each stage removes
approximately $2\%$ of the currently surviving core coefficients.
Setting $\alpha=1$ recovers one-shot magnitude pruning directly to
$K_l^\star$.

\paragraph{Latent-core sparsity.}
The pruning is applied entry-wise to the Tucker core rather than to the
reconstructed weight matrices. For the $j$-th frontal slice, the reconstructed projection can be written as
\begin{equation*}
    \overline{W}_{l,j}
    = \sum_{a=1}^{R_{1,l}^{\star}}
    \sum_{b=1}^{R_{2,l}^{\star}}
    \overline{\mathcal{G}}_{l}(a,b,j)\,u_{1,l}^{(a)}
    \left(u_{2,l}^{(b)}\right)^{\top}.
\end{equation*}
Each core entry $\overline{\mathcal{G}}_{l}(a,b,j)$ serves as the coefficient of a latent rank-one component
$u_{1,l}^{(a)}(u_{2,l}^{(b)})^{\top}$ in the reconstruction of the $j$-th frontal slice. Setting this coefficient to zero deactivates the corresponding latent component. As more core coefficients are pruned, fewer latent components remain active, and the Tucker core becomes increasingly sparse. Importantly, this sparsity acts in the latent Tucker
representation rather than directly in the original weight space. Since the Tucker factors are generally dense, sparsifying the core reduces the number of active latent components but does not necessarily introduce zeros into the reconstructed Q/K/V projection matrices.

\paragraph{Support-constrained refinement.}
After each support reduction, we refine the Tucker representation while
keeping both the selected ranks and the current support fixed. Let
$\Omega_l^{(t)}$ denote the active support at stage $t$. Factor updates are
allowed to change the shared Tucker subspaces, but coefficients outside
$\Omega_l^{(t)}$ are constrained to remain zero throughout the refinement.

For fixed factors, the dense projection core is $\overline{\mathcal{G}}_{\mathrm{proj},l}=\overline{\mathcal{W}}_{l}
    \times_1 \widetilde{U}_{1,l}^{\top}\times_2 U_{2,l}^{\top}.$
We then restrict this core to the current support
\begin{equation*}
    \overline{\mathcal G}_{l} = \mathcal{P}_{\Omega_l^{(t)}}
    \left( \overline{\mathcal{G}}_{\mathrm{proj},l} \right),
\end{equation*}
where $\mathcal{P}_{\Omega}$ keeps entries indexed by $\Omega$ and sets all
others to zero. 

We alternate the support-constrained core update with orthogonal
factor-refitting steps. Each pruning stage uses two rounds of factor
refitting, after which the core is again projected onto the same support.
Thus, refinement may change the Tucker factors and the values of the
surviving coefficients, but it does not increase the number of active core
entries.

The refined representation is then used to determine the magnitude
ordering for the next support reduction.  The resulting procedure
therefore alternates
\begin{equation*}
    \text{support reduction}
    \;\longrightarrow\;
    \text{refit}^{(1)}
    \;\longrightarrow\;
    \text{refit}^{(2)}
    \;\longrightarrow\;
    \text{support reduction},
\end{equation*}
until
\begin{equation*}
    \left\| \overline{\mathcal{G}}_{l} \right\|_{0}
    = K_{l}^\star.
\end{equation*}
This gradual prune--refit schedule allows the shared Tucker subspaces and
the surviving core coefficients to adapt as the sparsity constraint is
progressively tightened, rather than fixing the final support from the
initial dense core in a single pruning step.

\subsection{Calibration and Compression of the Output Projection}

The attention output projection $W_{O,l}$ is treated separately from the Q/K/V projections not only because it receives a different input representation, but also because it occupies a fundamentally different position in the attention computation. The Q/K/V projections enter the
coupled nonlinear map
\begin{equation*}
    \mathcal{A}(Q,K,V)
    =\operatorname{softmax}
    \left(\frac{QK^{\top}}{\sqrt{d_h}}\right)V.
\end{equation*}
In contrast, after the attention output is formed, $W_{O,l}$ is applied through a linear transformation to the resulting representation.  At the output-projection stage, a candidate weight matrix $W$ produces the student output $Z_l^{S}W,$ whereas the corresponding teacher output is $Z_l^{T}W_{O,l}.$ Therefore, for fixed $Z_l^{S}$ and $Z_l^{T}$, the quantity
\begin{equation*}
    \frac{1}{N}
    \left\| Z_l^{S}W-Z_l^{T}W_{O,l} \right\|_{F}^{2}
\end{equation*}
directly quantifies the local discrepancy between the student and teacher outputs at the output-projection stage.

\paragraph{Teacher--student calibrated target.}
Because the upstream Q/K/V compression changes the representation entering
the output projection from $Z_l^{T}$ to $Z_l^{S}$, directly compressing the
pretrained $W_{O,l}$ ignores this representation shift. When
output-projection calibration is enabled, we then construct a
regularized weight target by solving
\begin{equation}
    \min_{W}\;\frac{1}{N}
    \left\| Z_{l}^{S}W-Z_{l}^{T}W_{O,l} \right\|_{F}^{2}
    +\mu_{O}\left\|W-W_{O,l}\right\|_{F}^{2},
    \label{eq:wo_calibration}
\end{equation}
where $\mu_{O}>0$ controls the deviation of the calibrated solution from the pretrained output projection. The first term seeks a linear operator whose output on the current student representation matches the corresponding teacher branch output, while the second term regularizes this
correction toward the original $W_{O,l}$. Thus, the calibration step compensates for the upstream representation shift without allowing an unconstrained departure from the pretrained projection. Define the empirical second moment and cross second moment
\begin{equation*}
    C_{s,l}^{O}=\frac{1}{N}(Z_{l}^{S})^{\top}Z_{l}^{S},
    \qquad
    C_{st,l}^{O}=\frac{1}{N}(Z_l^{S})^{\top}Z_l^{T}.
\end{equation*}
The first-order optimality condition of
\eqref{eq:wo_calibration} gives
\begin{equation*}
    \left(C_{s,l}^{O}+\mu_{O} I\right)W
    =\left(C_{st,l}^{O}+\mu_{O} I\right)W_{O,l},
\end{equation*}
and hence the calibrated target is
\begin{equation}
    W_{O,l}^{\mathrm{cal}}
    =\left(C_{s,l}^{O}+\mu_{O} I\right)^{-1}
    \left(C_{st,l}^{O}+\mu_{O} I\right) W_{O,l}.
    \label{eq:wo_calibrated_target}
\end{equation}
In practice, \eqref{eq:wo_calibrated_target} is evaluated through a linear solve rather than by explicitly forming the matrix inverse.

To cover both calibrated and non-calibrated configurations, we write
\begin{equation*}
    W_{O,l}^{\mathrm{tgt}}
    =\begin{cases}
        W_{O,l}^{\mathrm{cal}},
        & \text{if output-projection calibration is enabled},\\[2mm]
        W_{O,l},
        & \text{otherwise}.
    \end{cases}
\end{equation*}
Thus, the calibration step determines the weight matrix to be approximated,
whereas the approximation geometry is constructed separately below.

\paragraph{Output-specific activation-induced metric.}
The compression error of an approximation $\widehat{W}_{O,l}$ is measured under the input distribution of the current student branch. Let
\begin{equation*}
    \Delta W_{O,l}
    = W_{O,l}^{\mathrm{tgt}}-\widehat{W}_{O,l}.
\end{equation*}
Then
\begin{align*}
    \frac{1}{N}
    \left\|Z_{l}^{S}\Delta W_{O,l}\right\|_{F}^{2}
    &=\operatorname{tr}
    \left(\Delta W_{O,l}^{\top}C_{s,l}^{O}\Delta W_{O,l}
    \right).
\end{align*}
This identity induces a specific metric from the current
student representations. We denote by $\overline{C}_{l}^{O}$ the empirical second-moment matrix used to construct the output-projection metric, with $\overline{C}_{l}^{O}=C_{s,l}^{O}$ when only the current student representations are used.

\paragraph{Metric whitening and low-rank compression.}
Let $H_{\varepsilon,l}^{O}=(L_{l}^{O})^{\top}L_{l}^{O}$
be a Cholesky factorization. The weighted approximation objective can then be rewritten as
\begin{align*}
    \left\| W_{O,l}^{\mathrm{tgt}}
    -\widehat{W}_{O,l}\right\|_{H_{\varepsilon,l}^{O}}^{2}
    &=\operatorname{tr}\left[\left(
    W_{O,l}^{\mathrm{tgt}}-\widehat{W}_{O,l}\right)^{\top}
    H_{\varepsilon,l}^{O}\left(
            W_{O,l}^{\mathrm{tgt}}-\widehat{W}_{O,l}
        \right)\right]\nonumber\\
    &=\left\|L_{l}^{O} \left(W_{O,l}^{\mathrm{tgt}}
            -\widehat{W}_{O,l}\right)\right\|_{F}^{2}.
\end{align*}
Accordingly, we define the whitened target
\begin{equation*}
    \widetilde{W}_{O,l}^{\mathrm{tgt}}
    =L_{l}^{O}W_{O,l}^{\mathrm{tgt}}.
\end{equation*}
We compute its singular value decomposition
\begin{equation*}
    \widetilde{W}_{O,l}^{\mathrm{tgt}}
    =U_{l}^{O}\Sigma_{l}^{O}(V_{l}^{O})^{\top},
\end{equation*}
and retain the leading $r_{O,l}$ singular components, where $r_{O,l}$ is chosen according to the prescribed output-projection compression budget.
This gives
\begin{equation*}
    \widetilde{W}_{O,l}^{(r)}
    =U_{l,r}^{O}\Sigma_{l,r}^{O}(V_{l,r}^{O})^{\top}.
\end{equation*}
The compressed output projection is finally recovered in the original
weight coordinates by solving
\begin{equation*}
    L_l^{O}\widehat W_{O,l}
    =\widetilde W_{O,l}^{(r)}.
\end{equation*}
Thus, the output projection is compressed through an independent
metric-whitened low-rank approximation whose metric is induced by its
own attention-input representations.

\subsection{Optional Pattern-Aligned Refinement for 2:4 Sparsity}

For hard structured sparsity such as $2{:}4$, the pruning result depends on the coordinate representation of the Tucker core. To improve the compatibility between the dense Tucker representation and the prescribed sparsity pattern, we optionally apply an orthogonal gauge transformation
before structured pruning.  Recall that the compressed Q/K/V tensor is represented as
\begin{equation*}
    \widehat{\mathcal{W}}_{l}
    = \mathcal{G}_{l}
    \times_1 U_{1,l}
    \times_2 U_{2,l}.
\end{equation*}
For any orthogonal matrices
$Q_{1,l}^{\top}Q_{1,l}=I,\quad Q_{2,l}^{\top}Q_{2,l}=I,$
we may equivalently write
\begin{equation*}
    \widehat{\mathcal{W}}_{l}
    = \mathcal{G}_{l,Q}
    \times_1 U_{1,l}^{Q}
    \times_2 U_{2,l}^{Q},
\end{equation*}
where
\begin{equation*}
    \mathcal{G}_{l,Q}=\mathcal{G}_{l}
    \times_1 Q_{1,l}^{\top}\times_2 Q_{2,l}^{\top},
    \qquad
    U_{1,l}^{Q} =U_{1,l}Q_{1,l},
    \qquad
    U_{2,l}^{Q}=U_{2,l}Q_{2,l}.
\end{equation*}
Thus, the rotation leaves the dense Tucker reconstruction unchanged, while redistributing the coefficients within the latent core. 

Let $\mathcal{P}_{2:4}$ denote the projection that enforces the prescribed 2:4 pattern on the core. We seek an orthogonal rotation that approximately reduces the structured projection error,
\begin{equation}
    \min_{Q_{1,l},Q_{2,l}}
    \left\|\mathcal{G}_{l,Q}- \mathcal{P}_{2:4}
        \left(\mathcal{G}_{l,Q}\right)\right\|_{F}^{2},
    \qquad
    Q_{1,l}^{\top}Q_{1,l}=Q_{2,l}^{\top}Q_{2,l}=I.
    \label{eq:pattern_alignment}
\end{equation}
Since the rotations are orthogonal,
$\|\mathcal G_{l,Q}\|_F=\|\mathcal G_l\|_F$, and
\eqref{eq:pattern_alignment} can equivalently be viewed as concentrating
more core energy on the coefficients that survive the 2:4 projection.

The resulting rotation is used only when it provides a sufficient increase
in the retained core energy. Specifically, letting
\begin{equation*}
    \eta_{2:4}(\mathcal G)
    = \frac{\left
        \|\mathcal{P}_{2:4}(\mathcal{G})\right\|_{F}^{2}
    }{\|\mathcal{G}\|_{F}^{2}},
\end{equation*}
we apply the rotated representation only if
\begin{equation}
    \eta_{2:4}(\mathcal{G}_{l,Q})
    -\eta_{2:4}(\mathcal{G}_{l})>\tau.
\end{equation}
Otherwise, the original Tucker coordinates are retained. The accepted representation is then used as the initialization for the subsequent 2:4-constrained sparse-core refinement.

This procedure does not alter the Tucker ranks or the dense represented subspace. Rather, it selects a latent coordinate system in which the prescribed 2:4 constraint removes less energy before structured refinement. Consequently, the rotation is used as a guarded pattern-alignment step rather than as an additional source of compression.

\subsection{Numerical Stability and Practical Details}
\label{app:practical_details}

Since the activation-induced statistics are estimated from finite
calibration data, the resulting Gram matrices may be singular or poorly
conditioned. We therefore apply diagonal damping to both the Q/K/V and
output-projection metrics before whitening. The damping parameter
$\varepsilon$ is used only for numerical stabilization and is distinct
from the regularization parameter $\mu_O$ used to construct the calibrated
target for $W_{O,l}$.

\paragraph{Teacher-Gram anchoring for $W_O$.}
For the output projection, we additionally anchor the student attention
statistics with the corresponding teacher statistics. Let
\begin{equation*}
    C_{s,l}^{O} = \frac{1}{N}(Z_l^{\mathrm{S}})^{\top}Z_{l}^{\mathrm{S}},
    \qquad
    C_{t,l}^{O} = \frac{1}{N}(Z_{l}^{\mathrm T})^{\top}Z_{l}^{\mathrm{T}}.
\end{equation*}
We form the anchored Gram matrix as
\begin{equation*}
    C_{\mathrm{anc},l}^{O} = (1-a)C_{s,l}^{O} + a C_{t,l}^{O},
\end{equation*}
with $a=0.5$ in the final recipe. This anchoring is used only for $W_O$;
the Q/K/V metric remains unanchored and is constructed solely from the
current student representations. Both the Q/K/V and output-projection metrics are damped before whitening using the same damping coefficient $\varepsilon$. 

\paragraph{Stable metric transformations.}
For a damped metric
\begin{equation}
    H_{\varepsilon,l}=L_l^{\top}L_l,
\end{equation}
we perform whitening and inverse whitening through linear or triangular
solves rather than explicitly forming $H_{\varepsilon,l}^{-1}$ or
$L_l^{-1}$. The same procedure is used for the output-projection metric.

\paragraph{Adaptive damping and fallback.}
The default damping is used whenever the resulting compressed
representation passes the stability check. If numerical instability is
detected, we increase the damping geometrically,
\begin{equation}
    \varepsilon_l^{(r+1)}
    =
    10\,\varepsilon_l^{(r)},
    \label{eq:damping_escalation}
\end{equation}
and recompute the compression. If the metric-based approximation remains
unstable, we use the identity metric as a final fallback.

All retries preserve the original compression target and parameter budget.
In particular, the Tucker ranks and sparse-core budget for Q/K/V, as well
as the low-rank budget for $W_{O,l}$, are kept unchanged. Therefore, the
stability safeguards improve numerical robustness without relaxing the
prescribed compression ratio or leaving unstable layers uncompressed.

\section{Additional Details for Sequential Activation-Induced Tensor Approximation}

This appendix provides additional details for the sequential
activation-induced tensor approximation introduced in
Section~\ref{sec:sequential_tensor_approximation}. We first derive the local activation-induced metric, then describe the construction of the joint head-wise Q/K/V tensor and its metric whitening, and finally clarify how the compression proceeds sequentially across Transformer layers.

\paragraph{Sequential student representations.}
The compression is performed layer by layer. At layer $l$, the calibration input $X_l^S$ is obtained from the current
student model with all preceding compressed layers already in place. Consequently, the metric at layer $l$ is constructed from the actual student representations and naturally incorporates the representation shift induced by earlier compression.

For a projection matrix $W_{t,l}$, the perturbation
$\Delta W_{t,l}=W_{t,l}-\widehat W_{t,l}$ induces the following local functional distortion on the current student activations:
\begin{equation}
    \frac{1}{N}\left\|X_{l}^{S} \Delta W_{t,l}\right\|_{F}^{2}.
    \label{eq:app_local_projection_error}
\end{equation}
This quantity measures the effect of the weight perturbation on the projection outputs over the calibration representations at layer $l$, and naturally induces a data-dependent  metric for evaluating the approximation error.

\paragraph{Activation-induced approximation metric.}
Define the empirical second-moment matrix
\begin{equation*}
    C_{s,l}
    =\frac{1}{N}(X_{l}^S)^\top X_{l}^S.
\end{equation*}
Then the local projection error in
\eqref{eq:app_local_projection_error} can be written as
\begin{align*}
    \frac{1}{N}
    \left\|X_{l}^S \Delta W_{t,l}\right\|_{F}^{2}
    &=\frac{1}{N}\operatorname{tr}
    \left(\Delta W_{t,l}^{\top}(X_{l}^S)^\top X_{l}^S
        \Delta W_{t,l}\right) \nonumber\\
    &=\operatorname{tr}
    \left(\Delta W_{t,l}^{\top}C_{s,l}\Delta W_{t,l}\right).
\end{align*}
We therefore define
\begin{equation*}
    \|\Delta W\|_{C_{s,l}}^{2} :=\operatorname{tr}
    \left(\Delta W^\top C_{s,l}\Delta W\right).
\end{equation*}
Hence, the activation-induced metric is directly obtained from the projection-output error evaluated on the current student representations.

Since the Q, K, and V projections share the same input $X_l^S$, they also share the same input-side metric $C_{s,l}$. 

\paragraph{Joint head-wise Q/K/V tensor construction.}
We reshape the Q/K/V projection matrices into their native head-wise blocks. Let
\begin{equation*}
    W_{Q,l}^{(i)}\in
    \mathbb{R}^{d_{\mathrm{model}}\times d_{\mathrm{head}}},
    \qquad i=1,\ldots,h_{q,l},
\end{equation*}
denote the query-head blocks, and similarly let
\begin{equation*}
    W_{K,l}^{(i)},\, W_{V,l}^{(i)} \in
    \mathbb{R}^{d_{\mathrm{model}}\times d_{\mathrm{head}}},
    \qquad i=1,\ldots,h_{kv,l},
\end{equation*}
denote the key and value blocks. We stack these matrices as frontal slices of a joint tensor
\begin{equation*}
    \mathcal{W}_{l}\in
    \mathbb{R}^{d_{\mathrm{model}}\times
        d_{\mathrm{head}}\times m_{l}},
    \qquad m_{l}=h_{q,l}+2h_{kv,l}.
\end{equation*}
The query slices are placed first, followed by the key and value slices.

The joint tensor retains the native head organization of the attention module. In GQA and MQA architectures, the shared key/value heads are kept in their original form rather than replicated to match the number of query heads. Accordingly, the third mode serves only to enumerate the native
Q/K/V head slices and is not compressed in the Tucker decomposition.

For a tensor
$\mathcal{A}\in\mathbb{R}^{d_{\mathrm{model}}\times d_{\mathrm{head}}\times m_{l}}$,
we extend the activation-induced metric as
\begin{equation*}
    \|\mathcal{A}\|_{C_{s,l}}^{2} :=
    \sum_{j=1}^{m_l}\operatorname{tr}
    \left( A_{j}^\top C_{s,l}A_{j}\right),
\end{equation*}
where $A_{j}$ denotes the $j$-th frontal slice. Thus, the joint tensor objective retains the same activation-induced metric for every Q/K/V slice while allowing their structural information to be modeled jointly.

\paragraph{Metric whitening.}
For numerical stability, we use the damped metric 
\begin{equation*}
    H_{\varepsilon,l} = C_{s,l} + \varepsilon_{l}\bar{t}_{l} I,
    \qquad\varepsilon_{l}>0.
\end{equation*}
Let $H_{\varepsilon,l} = L_{l}^\top L_{l}$
be its Cholesky factorization. Since the metric acts on the shared input dimension, it affects only mode-$1$ of the joint tensor. For any tensor $\mathcal{A}$ of the same size as $\mathcal{W}_{l}$,
\begin{align*}
    \|\mathcal{A}\|_{H_{\varepsilon,l}}^{2}
    &= \sum_{j=1}^{m_l}\operatorname{tr}
    \left( A_{j}^\top H_{\varepsilon,l} A_{j}\right)
    \nonumber\\ &=
    \left\| \mathcal{A}\times_1 L_{l}\right\|_{F}^{2}.
\end{align*}
Accordingly, we define the whitened Q/K/V tensor as
\begin{equation*}
    \widetilde{\mathcal{W}}_{l}
    =\mathcal{W}_{l}\times_1 L_{l}.
\end{equation*}
The activation-weighted tensor approximation can therefore be carried out
as an ordinary Frobenius-norm approximation in the whitened coordinates.

\paragraph{Sequential layer update.}
After obtaining the compressed Q/K/V projections
\begin{equation*}
    \widehat{W}_{Q,l}, \qquad \widehat{W}_{K,l},
    \qquad \widehat{W}_{V,l},
\end{equation*}
we replace their original counterparts in the student model. Subsequent representations at layer $l$ are evaluated using this updated student, so that the remaining compression operations are conditioned on the
current compressed state of the model.

After completing the compression of layer $l$, the updated student model is propagated forward to obtain the input representation $X_{l+1}^{S}$ for the next layer. The resulting sequential procedure can be summarized as
\begin{equation*}
    X_{l}^{S}\;\longrightarrow\;
    C_{s,l}\;\longrightarrow\;
    \widehat{\mathcal{W}}_{l}
    \;\longrightarrow\;\text{update student}
    \;\longrightarrow\; X_{l+1}^{S}.
\end{equation*}
Accordingly, the activation-induced metric is recomputed at each layer from the current student representations, thereby accounting for the representation changes introduced by the preceding compression steps.

\section{Proof of the Weighted Shared-Subspace Characterization}
\label{app:weighted_shared_subspace}

We provide the proof of the weighted shared mode-$1$ subspace
characterization stated in Section~\ref{sec:shared_tensor_representation}.
We follow the notation introduced there.

\begin{proposition}[Weighted shared mode-$1$ subspace]
\label{prop:weighted_shared_mode1}
Consider
\begin{equation}
    \min_{\substack{
        \widetilde{U}_{l}^{\top}\widetilde{U}_{l}=I\\
        B_{1,l},\ldots,B_{m_l,l}}}
    \sum_{j=1}^{m_l} w_{j,l}
    \left\|\widetilde{W}_{l,j}
        -\widetilde{U}_{l} B_{j,l}\right\|_{F}^{2},
    \label{eq:app_weighted_shared_mode1}
\end{equation}
where $w_{j,l}>0$ and $\widetilde{U}_{l}\in
\mathbb{R}^{d_{\mathrm{model}}\times R_{1,l}}$.
For any fixed $\widetilde{U}_{l}$, the optimal coefficient matrices are
\begin{equation*}
    B_{j,l}^{\star}
    =\widetilde{U}_{l}^{\top}\widetilde W_{l,j}.
\end{equation*}
Moreover, an optimal $\widetilde{U}_{l}$ is given by the leading
$R_{1,l}$ left singular vectors of
\begin{equation}
    \widetilde{W}_{\mathrm{cat},l}
    =\begin{bmatrix}
        \sqrt{w_{1,l}}\widetilde{W}_{l,1}
        &\cdots &\sqrt{w_{m_l,l}}\widetilde W_{l,m_l}
    \end{bmatrix}.
    \label{eq:app_weighted_concat}
\end{equation}
\end{proposition}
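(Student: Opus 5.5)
We argue in two stages: first eliminate the coefficient matrices for a fixed orthonormal factor, then optimize the factor, which becomes a trace maximization solved by an Eckart--Young/Ky Fan argument.

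\textbf{Stage 1 (fixed $\widetilde U_l$).} Since $w_{j,l}>0$ and the blocks $B_{j,l}$ are decoupled, the objective \eqref{eq:app_weighted_shared_mode1} separates into $m_l$ independent least-squares problems $\min_{B}\|\widetilde W_{l,j}-\widetilde U_l B\|_F^2$. With $\widetilde U_l^\top\widetilde U_l=I$, the normal equations $\widetilde U_l^\top\widetilde U_l B=\widetilde U_l^\top\widetilde W_{l,j}$ give the unique minimizer $B_{j,l}^\star=\widetilde U_l^\top\widetilde W_{l,j}$. Equivalently, $\widetilde U_lB_{j,l}^\star=P\widetilde W_{l,j}$ with $P=\widetilde U_l\widetilde U_l^\top$ the orthogonal projector, and Pythagoras gives the residual $\|\widetilde W_{l,j}\|_F^2-\|\widetilde U_l^\top\widetilde W_{l,j}\|_F^2$.

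\textbf{Stage 2 (reduction to a trace problem).} Substituting $B_{j,l}^\star$, the profiled objective is
\[
\sum_j w_{j,l}\|\widetilde W_{l,j}\|_F^2-\sum_j w_{j,l}\operatorname{tr}\bigl(\widetilde U_l^\top\widetilde W_{l,j}\widetilde W_{l,j}^\top\widetilde U_l\bigr).
\]
The first term is constant; note that with $w_{j,l}=s_{j,l}^{-2}$ it equals $m_l$, although the argument only needs $w_{j,l}>0$. Since $\widetilde W_{\mathrm{cat},l}\widetilde W_{\mathrm{cat},l}^\top=\sum_j w_{j,l}\widetilde W_{l,j}\widetilde W_{l,j}^\top=:M$ by block multiplication of \eqref{eq:app_weighted_concat}, minimizing the objective is equivalent to maximizing $\operatorname{tr}(\widetilde U_l^\top M\widetilde U_l)$ over orthonormal $\widetilde U_l\in\mathbb R^{d_{\mathrm{model}}\times R_{1,l}}$.

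\textbf{Stage 3 (Ky Fan step).} This is the one step needing a real argument. $M$ is symmetric positive semidefinite with eigenvectors equal to the left singular vectors of $\widetilde W_{\mathrm{cat},l}$ and eigenvalues $\sigma_i^2$. By Ky Fan's maximum principle, proved for instance via Schur-convexity, or by writing $\operatorname{tr}(\widetilde U^\top M\widetilde U)=\sum_i\sigma_i^2 c_i$ with $c_i=\|\widetilde U^\top v_i\|^2\in[0,1]$ and $\sum_i c_i=R_{1,l}$, the maximum is $\sum_{i\le R_{1,l}}\sigma_i^2$. It is attained by the leading $R_{1,l}$ left singular vectors. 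Alternatively, one can invoke Eckart--Young directly: $\widetilde U_l[B_{1,l}^\star\sqrt{w_{1,l}},\dots]=P\widetilde W_{\mathrm{cat},l}$ is a rank-$\le R_{1,l}$ approximation of $\widetilde W_{\mathrm{cat},l}$, and the weighted objective equals $\|\widetilde W_{\mathrm{cat},l}-P\widetilde W_{\mathrm{cat},l}\|_F^2$ after absorbing $\sqrt{w_{j,l}}$ into each block.

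We state optimality as ``an'' optimal $\widetilde U_l$ because the minimizer is unique only up to right multiplication by an orthogonal matrix, and not unique at all when $\sigma_{R_{1,l}}=\sigma_{R_{1,l}+1}$.
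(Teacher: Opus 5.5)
Your proposal is correct and follows essentially the same route as the paper. Both arguments eliminate the coefficients (the paper completes the square where you use the normal equations), rewrite the profiled objective as $\operatorname{tr}(\widetilde U_l^\top \widetilde W_{\mathrm{cat},l}\widetilde W_{\mathrm{cat},l}^\top \widetilde U_l)$, and take the dominant eigenspace. Your Ky Fan argument with $c_i=\|\widetilde U_l^\top v_i\|^2$ supplies a justification for that final step, which the paper only asserts, and your Eckart--Young finish is an equivalent valid alternative.
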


\begin{proof}
For fixed $\widetilde{U}_{l}$, the orthogonality condition
$\widetilde{U}_{l}^{\top}\widetilde{U}_{l}=I$ gives
\begin{align*}
    \left\|\widetilde{W}_{l,j}
        -\widetilde U_l B_{j,l}\right\|_{F}^{2}
    &=\|\widetilde{W}_{l,j}\|_{F}^{2}
    - 2\left\langle
        \widetilde{U}_{l}^{\top}\widetilde{W}_{l,j},B_{j,l}
    \right\rangle +\|B_{j,l}\|_{F}^{2}\nonumber\\
    &= \|\widetilde{W}_{l,j}\|_{F}^{2}-
    \left\|\widetilde{U}_{l}^{\top}\widetilde{W}_{l,j}\right\|_{F}^{2}
    +\left\| B_{j,l} -
        \widetilde{U}_{l}^{\top}\widetilde{W}_{l,j}\right\|_{F}^{2}.
\end{align*}
Therefore,
\begin{equation*}
    B_{j,l}^{\star} = \widetilde{U}_{l}^{\top}\widetilde{W}_{l,j}.
\end{equation*}
Substituting these minimizers into
\eqref{eq:app_weighted_shared_mode1} yields
\begin{align*}
    & \sum_{j=1}^{m_l} w_{j,l}
    \left\| \widetilde{W}_{l,j} - \widetilde{U}_l B_{j,l}^{\star}
    \right\|_{F}^{2} \nonumber\\
    &= \sum_{j=1}^{m_l} w_{j,l} \|\widetilde{W}_{l,j}\|_{F}^{2}
    -\sum_{j=1}^{m_l} w_{j,l}
    \left\| \widetilde{U}_{l}^{\top}\widetilde{W}_{l,j}
    \right\|_{F}^{2}.
\end{align*}
The first term is independent of $\widetilde{U}_{l}$. Therefore, the
problem is equivalent to
\begin{equation*}
    \max_{\widetilde{U}_{l}^{\top}\widetilde{U}_{l}=I}
    \sum_{j=1}^{m_l} w_{j,l}
    \left\| \widetilde{U}_{l}^{\top}\widetilde{W}_{l,j}\right\|_{F}^{2}.
\end{equation*}
Using the weighted concatenation in
\eqref{eq:app_weighted_concat}, we have
\begin{equation*}
    \sum_{j=1}^{m_l}  w_{j,l}
    \left\| \widetilde{U}_{l}^{\top}\widetilde{W}_{l,j} \right\|_{F}^{2}
    = \left\| \widetilde{U}_l^{\top} \widetilde{W}_{\mathrm{cat},l}
    \right\|_{F}^{2}.
\end{equation*}
Equivalently,
\begin{equation*}
    \left\|
        \widetilde{U}_l^{\top}\widetilde{W}_{\mathrm{cat},l}
    \right\|_{F}^{2}=\operatorname{tr}
    \left( \widetilde{U}_l^{\top}
        \widetilde{W}_{\mathrm{cat},l}
        \widetilde{W}_{\mathrm{cat},l}^{\top}
        \widetilde{U}_{l} \right).
\end{equation*}
Maximizing this trace over orthonormal $\widetilde{U}_l$ selects the
dominant $R_{1,l}$-dimensional eigenspace of $\widetilde{W}_{\mathrm{cat},l}
\widetilde{W}_{\mathrm{cat},l}^{\top}$, equivalently the leading left
singular subspace of $\widetilde{W}_{\mathrm{cat},l}$.
\end{proof}

This result formally justifies the weighted concatenation used to identify
the shared mode-$1$ subspace. The full Tucker representation in
Section~\ref{sec:shared_tensor_representation} extends the same sharing
principle to both compressed modes, while retaining the third mode in full.

\section{Additional Details for Budget-Coupled Rank--Sparsity Allocation}
\label{app:rank_sparsity}

This section provides the supporting derivations for
Proposition~\ref{prop:budget_allocation}. We first establish the
fixed-subspace error decomposition and the optimal top-$K_l$ core projection,
and then show how these results lead to the budget-coupled rank search.
The gradual pruning and support-constrained refinement used to obtain the
final sparse core are described separately in
Appendix~\ref{app:core_refinement}.

\subsection{Fixed-Subspace Error Decomposition}

For a candidate rank pair $\mathbf{R}_{l}=(R_{1,l},R_{2,l})$, let
$\widetilde{U}_{1,l}$ and $U_{2,l}$ be the corresponding orthonormal
factors, and recall the projection core
\begin{equation*}
    \overline{\mathcal{G}}_l^{\star}
    =\overline{\mathcal{W}}_{l}\times_1 \widetilde{U}_{1,l}^{\top}
    \times_2 U_{2,l}^{\top}.
\end{equation*}
Define the associated projection residual by
\begin{equation}
    \mathcal{E}_{l}:=\overline{\mathcal{W}}_{l}
    -\overline{\mathcal{G}}_{l}^{\star}
    \times_1 \widetilde{U}_{1,l}\times_2 U_{2,l}.
    \label{eq:app_projection_residual}
\end{equation}

\begin{proposition}[Fixed-subspace error decomposition]
\label{prop:fixed_subspace_decomposition}
For any $\mathcal{G}\in\mathbb{R}^{R_{1,l}\times R_{2,l}\times m_l}$,
\begin{align}
    &\left\|\overline{\mathcal{W}}_{l}-\mathcal{G}
        \times_1 \widetilde{U}_{1,l}\times_2 U_{2,l}
    \right\|_{F}^{2}\nonumber\\
    &\qquad =\|\mathcal{E}_{l}\|_{F}^{2}
    +\left\|\overline{\mathcal{G}}_l^{\star}-\mathcal{G}\right\|_{F}^{2}.
    \label{eq:app_fixed_subspace_identity}
\end{align}
\end{proposition}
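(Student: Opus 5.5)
The plan is to write the reconstruction error as the projection residual plus a term living entirely in the represented Tucker subspace. Then we show these two pieces are orthogonal, and that the in-subspace piece has Frobenius norm equal to the core discrepancy. Throughout, write $\mathcal T(\mathcal G):=\mathcal G\times_1\widetilde U_{1,l}\times_2 U_{2,l}$ for the linear reconstruction map. For any $\mathcal G$,
\[
\overline{\mathcal W}_l-\mathcal T(\mathcal G)=\mathcal E_l+\mathcal T\bigl(\overline{\mathcal G}_l^\star-\mathcal G\bigr),
\]
which follows directly from the definition \eqref{eq:app_projection_residual} and linearity of $\mathcal T$. Expanding the squared Frobenius norm gives three terms: $\|\mathcal E_l\|_F^2$, the cross term $2\langle \mathcal E_l,\mathcal T(\overline{\mathcal G}_l^\star-\mathcal G)\rangle$, and $\|\mathcal T(\overline{\mathcal G}_l^\star-\mathcal G)\|_F^2$.

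\textbf{Isometry.} We show $\|\mathcal T(\mathcal D)\|_F=\|\mathcal D\|_F$ for every core $\mathcal D$. Work slice-wise along the uncompressed third mode: the $j$-th frontal slice of $\mathcal T(\mathcal D)$ is $\widetilde U_{1,l}D_j U_{2,l}^\top$. Its squared norm is
\[
\operatorname{tr}\bigl(U_{2,l}D_j^\top\widetilde U_{1,l}^\top\widetilde U_{1,l}D_jU_{2,l}^\top\bigr)=\|D_j\|_F^2,
\]
using the orthonormality constraints $\widetilde U_{1,l}^\top\widetilde U_{1,l}=I$ and $U_{2,l}^\top U_{2,l}=I$ together with cyclicity of the trace. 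Summing over $j$ gives the isometry.

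\textbf{Orthogonality.} We show $\langle\mathcal E_l,\mathcal T(\mathcal D)\rangle=0$ for every $\mathcal D$. The adjoint of $\mathcal T$ is $\mathcal T^*(\mathcal A)=\mathcal A\times_1\widetilde U_{1,l}^\top\times_2U_{2,l}^\top$; slice-wise this is $\langle A_j,\widetilde U_1D_jU_2^\top\rangle=\langle\widetilde U_1^\top A_jU_2,D_j\rangle$. By definition $\overline{\mathcal G}_l^\star=\mathcal T^*(\overline{\mathcal W}_l)$, and the isometry gives $\mathcal T^*\mathcal T=\mathrm{id}$, so
\[
\mathcal T^*(\mathcal E_l)=\overline{\mathcal G}_l^\star-\mathcal T^*\mathcal T(\overline{\mathcal G}_l^\star)=0 .
\]
Hence $\langle\mathcal E_l,\mathcal T(\mathcal D)\rangle=\langle\mathcal T^*(\mathcal E_l),\mathcal D\rangle=0$. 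Taking $\mathcal D=\overline{\mathcal G}_l^\star-\mathcal G$ kills the cross term, and the isometry converts the last term into $\|\overline{\mathcal G}_l^\star-\mathcal G\|_F^2$, which yields \eqref{eq:app_fixed_subspace_identity}.

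\textbf{Where care is needed.} There is no serious obstacle. The one step needing care is the adjoint computation, i.e. verifying that the mode-$n$ products with $\widetilde U_{1,l}^\top,U_{2,l}^\top$ realize $\mathcal T^*$ and that $\mathcal T^*\mathcal T=\mathrm{id}$ needs only $\widetilde U^\top\widetilde U=I$, not $\widetilde U\widetilde U^\top=I$. Doing it slice-by-slice as above keeps this transparent. As a consequence, taking $\mathcal G=\mathcal P_{K_l}(\overline{\mathcal G}_l^\star)$ gives Eq.~\eqref{eq:rank_sparsity_decomposition}. The identity also shows that top-$K_l$ magnitude selection is optimal over $K_l$-sparse cores, because the second term separates entrywise.
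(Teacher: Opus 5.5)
Your proposal is correct and follows the same route as the paper: you split the error into $\mathcal E_l$ plus the reconstruction of $\overline{\mathcal G}_l^\star-\mathcal G$, remove the cross term by orthogonality, and use that mode products with orthonormal factors preserve the Frobenius norm. Your adjoint computation showing $\mathcal T^*(\mathcal E_l)=0$ also supplies a justification for the orthogonality step, which the paper only asserts by calling $\mathcal E_l$ an orthogonal-projection residual.
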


\begin{proof}
Using \eqref{eq:app_projection_residual}, we have
\begin{align}
    &\overline{\mathcal{W}}_{l}-\mathcal{G}
    \times_1 \widetilde{U}_{1,l}\times_2 U_{2,l}\nonumber\\
    &=\mathcal{E}_{l}+
    \left(\overline{\mathcal G}_l^{\star}-\mathcal{G}\right)
    \times_1 \widetilde{U}_{1,l}
    \times_2 U_{2,l}.
    \label{eq:app_error_split}
\end{align}
It observes that $\mathcal{E}_{l}$ is the residual of the orthogonal
projection onto the Tucker subspace generated by
$\widetilde{U}_{1,l}$ and $U_{2,l}$. Hence,
\begin{equation*}
    \left\langle \mathcal{E}_{l},\, \mathcal{A}
        \times_1 \widetilde{U}_{1,l}\times_2 U_{2,l} \right\rangle =0
\end{equation*}
for any compatible tensor $\mathcal{A}$.

Therefore, the two terms in \eqref{eq:app_error_split} are orthogonal.
Since the two factors have orthonormal columns,
\begin{equation*}
    \left\| \mathcal{A}
        \times_1 \widetilde{U}_{1,l}\times_2 U_{2,l}
    \right\|_{F}=\|\mathcal{A}\|_{F}.
\end{equation*}
It follows that
\begin{align*}
    & \left\|\overline{\mathcal{W}}_{l}-\mathcal{G}
        \times_1 \widetilde{U}_{1,l}\times_2 U_{2,l}
    \right\|_{F}^{2}\nonumber\\
    &=\|\mathcal{E}_{l}\|_{F}^{2} +\left\|
        \overline{\mathcal{G}}_l^{\star}-\mathcal{G}\right\|_{F}^{2}.
\end{align*}
The quantity $\|\mathcal{E}_{l}\|_{F}^{2}$ represents the approximation error due to the selected Tucker subspace, which we denote by
$E_{\mathrm{sub}}(\mathbf{R}_{l})$ in the following.
\end{proof}

Proposition~\ref{prop:fixed_subspace_decomposition} shows that the total
approximation error separates into the error of the selected Tucker
subspace and the error incurred when approximating the corresponding
projection core. In particular, the latter can be evaluated directly in
the latent core space.

\subsection{Optimal Cardinality Projection of the Core}

We next specialize the previous result to the entry-wise cardinality
constraint used for candidate scoring.

\begin{proposition}
For fixed orthonormal factors $\widetilde{U}_{1,l}$ and $U_{2,l}$, the solution of
\begin{equation}
    \min_{\|\mathcal{G}\|_0\leq K_{l}}
    \left\| \overline{\mathcal{W}}_{l} - \mathcal{G}
        \times_1 \widetilde{U}_{1,l} \times_2 U_{2,l}
    \right\|_{F}^{2}
    \label{eq:app_sparse_core_problem}
\end{equation}
is
\begin{equation}
    \mathcal{G}^{\star}_{K_{l}} = \mathcal{P}_{K_{l}}
    \left( \overline{\mathcal{G}}_l^{\star} \right),
    \label{eq:app_topk_solution}
\end{equation}
where $\mathcal{P}_{K_{l}}$ retains the $K_{l}$ largest-magnitude entries of
the projection core. Consequently,
\begin{equation}
    E_{\mathrm{struct}}(\mathbf{R}_{l},K_{l})
    = E_{\mathrm{sub}}(\mathbf{R}_{l})
    +  D_{\mathrm{sp}}(\mathbf{R}_l,K_{l}).
    \label{eq:app_rank_sparsity_decomposition}
\end{equation}
\end{proposition}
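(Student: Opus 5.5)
The plan is to reduce the problem entirely to the latent core space using Proposition~\ref{prop:fixed_subspace_decomposition}, and then solve the resulting cardinality-constrained problem by a standard entrywise argument. By \eqref{eq:app_fixed_subspace_identity}, for every feasible $\mathcal G$ the objective in \eqref{eq:app_sparse_core_problem} equals
\[
E_{\mathrm{sub}}(\mathbf R_l)+\bigl\|\overline{\mathcal G}_l^{\star}-\mathcal G\bigr\|_F^2 .
\]
The first term does not depend on $\mathcal G$, so \eqref{eq:app_sparse_core_problem} is equivalent to
\[
\min_{\|\mathcal G\|_0\le K_l}\bigl\|\overline{\mathcal G}_l^{\star}-\mathcal G\bigr\|_F^2 .
\]
This is a best $K_l$-term approximation of a fixed tensor in the Frobenius norm.

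For the reduced problem, I would fix a support $\Omega$ with $|\Omega|\le K_l$ and minimize over $\mathcal G$ supported on $\Omega$. The squared Frobenius norm separates over entries, so the optimum sets $\mathcal G(a,b,j)=\overline{\mathcal G}_l^{\star}(a,b,j)$ on $\Omega$ and zero elsewhere. The resulting value is
\[
\sum_{(a,b,j)\notin\Omega}\overline{\mathcal G}_l^{\star}(a,b,j)^2
=\bigl\|\overline{\mathcal G}_l^{\star}\bigr\|_F^2-\sum_{\Omega}\overline{\mathcal G}_l^{\star}(a,b,j)^2 .
\]
Minimizing over $\Omega$ therefore amounts to maximizing the retained energy. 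Enlarging $\Omega$ never hurts, so we may take $|\Omega|=\min\{K_l,N_l(\mathbf R_l)\}$. A simple exchange argument then shows the maximizing $\Omega$ consists of the $K_l$ largest-magnitude entries: swapping a retained entry for a larger omitted one cannot decrease the retained sum. This yields \eqref{eq:app_topk_solution}, namely $\mathcal G^\star_{K_l}=\mathcal P_{K_l}(\overline{\mathcal G}_l^{\star})$.

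Substituting $\mathcal G=\mathcal P_{K_l}(\overline{\mathcal G}_l^{\star})$ back into \eqref{eq:app_fixed_subspace_identity} gives the optimal structured error. It is $E_{\mathrm{sub}}(\mathbf R_l)$ plus $\|\overline{\mathcal G}_l^{\star}-\mathcal P_{K_l}(\overline{\mathcal G}_l^{\star})\|_F^2=D_{\mathrm{sp}}(\mathbf R_l,K_l)$, which is \eqref{eq:app_rank_sparsity_decomposition}, with $E_{\mathrm{struct}}$ understood as this optimal value for fixed factors.

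The only delicate point is ties among equal-magnitude entries. In that case $\mathcal P_{K_l}$ is not unique, and I would state that any tie-breaking choice is a minimizer and all of them attain the same value, so the error decomposition is unaffected. The rest is routine once Proposition~\ref{prop:fixed_subspace_decomposition} is invoked.
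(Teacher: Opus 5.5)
Your proposal is correct and follows the paper's proof: you use Proposition~\ref{prop:fixed_subspace_decomposition} to reduce the problem to the best $K_l$-term approximation of the projection core, take its $K_l$ largest-magnitude entries, and substitute back into the identity. You also supply the entrywise separation and exchange argument that the paper only asserts, and your remark on ties correctly notes that $\mathcal{P}_{K_l}$, and hence ``the'' solution, is unique only up to tie-breaking, which does not affect the decomposition.
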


\begin{proof}
By Proposition~\ref{prop:fixed_subspace_decomposition},
problem~\eqref{eq:app_sparse_core_problem} is equivalent to
\begin{equation*}
    \min_{\|\mathcal{G}\|_0\leq K_{l}}
    \left\| \overline{\mathcal{G}}_l^{\star} - \mathcal{G}
    \right\|_{F}^{2},
\end{equation*}
since the subspace error does not depend on $\mathcal{G}$. The best
$K_l$-term approximation in the Frobenius norm is obtained by retaining
the $K_l$ entries of largest magnitude. Substituting
\eqref{eq:app_topk_solution} into
\eqref{eq:app_fixed_subspace_identity} yields
\eqref{eq:app_rank_sparsity_decomposition}.
\end{proof}

Eq.~\eqref{eq:app_rank_sparsity_decomposition} is exact for the
selected Tucker subspace, with the factors and ranks fixed and
$\overline{\mathcal{G}}_{l}^{\star}$ taken as the corresponding projection
core. Subsequent refinement may further adjust the core values, support,
and Tucker factors.

\subsection{Interpretation of Latent-Core Sparsity}

The sparsity constraint in FTC is imposed on the entries of the latent
Tucker core rather than directly on the reconstructed Q/K/V weight
matrices. To make this distinction explicit, write
\begin{equation*}
    \widetilde{U}_{1,l} = \begin{bmatrix}
        \widetilde{u}_{1,l}^{(1)} & \cdots &\widetilde{u}_{1,l}^{(R_{1,l})}
    \end{bmatrix},
    \qquad
    U_{2,l}
    =\begin{bmatrix}
        u_{2,l}^{(1)} & \cdots &  u_{2,l}^{(R_{2,l})}
    \end{bmatrix}.
\end{equation*}
For the $j$-th frontal slice, the reconstructed weight matrix is
\begin{equation*}
    \widehat{\overline{W}}_{l,j} =
    \widetilde{U}_{1,l} G_{l,j}  U_{2,l}^{\top},
\end{equation*}
where $G_{l,j} = \mathcal{G}(:,:,j)$. Then expanding the matrix product gives
\begin{equation}
    \widehat{\overline{W}}_{l,j} = \sum_{r_1=1}^{R_{1,l}}
    \sum_{r_2=1}^{R_{2,l}} g_{r_1,r_2,j}\,
    \widetilde{u}_{1,l}^{(r_1)}
    \left( u_{2,l}^{(r_2)} \right)^{\top},
    \label{eq:app_rankone_expansion}
\end{equation}
where $g_{r_1,r_2,j}$ denotes the $(r_1,r_2)$-th entry of the
$j$-th core slice.

Eq.~\eqref{eq:app_rankone_expansion} provides a structural
interpretation of core sparsity. Each core coefficient controls one
rank-one interaction between a mode-$1$ basis vector
$\widetilde{u}_{1,l}^{(r_1)}$ and a mode-$2$ basis vector
$u_{2,l}^{(r_2)}$. Therefore, setting a core coefficient to zero removes
the corresponding rank-one component from the factorized representation.

Let $ \mathcal{S}_{l,j} = \left\{ (r_1,r_2):
        g_{r_1,r_2,j}\neq 0 \right\}$
denote the active support of the $j$-th core slice. Then
\begin{equation}
    \widehat{\overline{W}}_{l,j}
    = \sum_{(r_1,r_2)\in\mathcal{S}_{l,j}}
    g_{r_1,r_2,j}\, \widetilde{u}_{1,l}^{(r_1)}
    \left( u_{2,l}^{(r_2)} \right)^{\top}.
    \label{eq:app_sparse_rankone_expansion}
\end{equation}
Hence, the sparse core determines which rank-one interactions
between the shared factor directions are active for each Q/K/V projection
slice. The Tucker factors are shared across all Q/K/V slices, while each slice has its own core coefficients and may activate a different subset of the resulting rank-one components.

This view also clarifies the difference between latent-core sparsity and
entry-wise weight sparsity. Although only a subset of the rank-one components is active in \eqref{eq:app_sparse_rankone_expansion}, each rank-one component is generally dense because both factor vectors are dense. Consequently, $\widehat{\overline{W}}_{l,j}$ is typically a dense matrix even when $G_{l,j}$ is highly sparse. The sparsity therefore lies in the
low-dimensional interaction structure between the two shared factor
spaces, rather than in the entries of the reconstructed weight matrix.

Accordingly, the computational form of the compressed representation is
not a materialized dense matrix but the sparse rank-one expansion in
\eqref{eq:app_sparse_rankone_expansion}. Computational benefits from core
sparsity therefore require an implementation that exploits this sparse
latent interaction structure instead of first reconstructing the dense
Q/K/V weight matrices.

This interpretation is also consistent with the parameter accounting in
Section~\ref{sec:budget_rank_sparsity}. The shared factors contribute
$d_{\mathrm{model}}R_{1,l}+d_{\mathrm{head}}R_{2,l}$ dense parameters,
while the sparse core contributes the retained latent coefficients as well as the storage required to encode their support.
Thus, $K_l$ measures the number of active rank-one interactions across the
joint Q/K/V representation, rather than the number of nonzero entries in
the reconstructed projection matrices.

\subsection{Efficient Evaluation of Candidate Rank Pairs}

The rank--sparsity allocation requires comparing multiple feasible
rank pairs, but running a complete Tucker optimization and sparse-core
refinement for every candidate would introduce substantial overhead.
We therefore use a common HOSVD-based initialization to evaluate all
candidate configurations efficiently. The key observation is that the
candidate Tucker subspaces are nested and can be extracted from a single
pair of mode-wise singular subspaces.

Let $\overline{\mathcal{W}}_{l,(1)}$ and
$\overline{\mathcal{W}}_{l,(2)}$ denote the mode-$1$ and mode-$2$
unfoldings of the balanced whitened tensor $\overline{\mathcal{W}}_l$. We compute
\begin{equation*}
    \overline{\mathcal{W}}_{l,(1)}
    =  \widetilde{U}_{1,l}^{\mathrm{full}}
    \Sigma_{1,l} V_{1,l}^{\top},  \qquad  \overline{\mathcal{W}}_{l,(2)}
    =  U_{2,l}^{\mathrm{full}}  \Sigma_{2,l} V_{2,l}^{\top}.
\end{equation*}
For a candidate $\mathbf{R}_{l}=(R_{1,l},R_{2,l})$, the corresponding Tucker factors are obtained by truncating these common bases,
\begin{equation*}
    \widetilde{U}_{1,l}^{(\mathbf{R}_{l})}
    = \widetilde{U}_{1,l}^{\mathrm{full}}(:,1:R_{1,l}), \qquad
    U_{2,l}^{(\mathbf{R}_{l})} = U_{2,l}^{\mathrm{full}}(:,1:R_{2,l}).
\end{equation*}
Hence, the mode-wise singular subspaces are computed only once for each
layer and reused across all candidate rank pairs.

Let $R_{1,l}^{\max}$ and $R_{2,l}^{\max}$ denote the largest ranks
appearing in the candidate grid. We further form the maximal projection
core
\begin{equation*}
    \overline{\mathcal{G}}_{l}^{\max} =
    \overline{\mathcal{W}}_{l} \times_1
    \left(  \widetilde{U}_{1,l}^{\mathrm{full}}  (:,1:R_{1,l}^{\max}) \right)^{\top}
    \times_{2}
    \left( U_{2,l}^{\mathrm{full}} (:,1:R_{2,l}^{\max})\right)^{\top},
\end{equation*}
its projection core is 
\begin{equation}
    \overline{\mathcal{G}}_{l}^{\star}(\mathbf{R}_{l})
    =  \overline{\mathcal{G}}_{l}^{\max} (1:R_{1,l},\,1:R_{2,l},\,:).
    \label{eq:app_candidate_core_slice}
\end{equation}
Thus, the projection core can be reused across different candidate rank pairs.

The subspace approximation error can also be evaluated without explicitly
reconstructing the Tucker approximation. It can be written as 
\begin{equation}
    E_{\mathrm{sub}}(\mathbf{R}_{l})
    = \|\overline{\mathcal{W}}_{l}\|_{F}^{2}
    - \left\|
        \overline{\mathcal{G}}_{l}^{\star}(\mathbf{R}_{l})
    \right\|_F^2.
    \label{eq:app_subspace_energy}
\end{equation}
Therefore, candidate evaluation requires only the energy of the
corresponding projection core rather than an explicit reconstruction of
$\overline{\mathcal{W}}_{l}$.

For the same candidate, the layerwise budget determines the admissible
core cardinality $K_l(\mathbf{R}_l)$. Let 
$|g_{l,(1)}| \geq  |g_{l,(2)}| \geq  \cdots$ 
denote the magnitudes of the entries of
$\overline{\mathcal G}_{l}^{\star}(\mathbf R_l)$ 
arranged in descending order. The one-step sparsification error can then be evaluated directly from the discarded core energy:
\begin{equation}
    D_{\mathrm{sp}} \left(
        \mathbf{R}_{l}, K_l(\mathbf{R}_{l}) \right)
    =\left\|\overline{\mathcal{G}}_{l}^{\star}(\mathbf{R}_{l})
    \right\|_{F}^{2}
    - \sum_{q=1}^{K_l(\mathbf{R}_{l})}|g_{l,(q)}|^2.
    \label{eq:app_sparse_energy}
\end{equation}

Combining Eqs.~\eqref{eq:app_subspace_energy} and
\eqref{eq:app_sparse_energy}, each feasible rank pair is evaluated using
only quantities extracted from the common singular subspaces and the
corresponding projection core. The candidate score is
\begin{equation*}
    J_l(\mathbf{R}_{l})
    =  E_{\mathrm{sub}}(\mathbf{R}_{l})
    + \beta D_{\mathrm{sp}}
    \left( \mathbf{R}_{l},  K_l(\mathbf{R}_{l})\right).
\end{equation*}

In our implementation, the candidate set is a predefined
$12\times4$ grid over the mode-$1$ and mode-$2$ ranks. Rank pairs whose factor and support metadata costs already exceed the layerwise budget are discarded before scoring. The remaining candidates share the same mode-wise
decompositions, and their projection cores are obtained from the nested
subtensors in \eqref{eq:app_candidate_core_slice}. Consequently, the
allocation stage requires one pair of mode-wise decompositions and one
maximal core projection per layer, followed by inexpensive energy and
top-$K_l$ evaluations over the candidate grid.

The resulting top-$K_l$ projection is used only as a proxy for comparing
rank configurations. After selecting $\mathbf{R}_{l}^\star$, the corresponding Tucker representation is passed to the gradual sparse-core refinement procedure, where the core is progressively sparsified and refitted rather than being directly fixed to the one-step top-$K_l^\star$ projection.

\end{document}